\renewcommand{\Re}{\mathbb{R}}
\def\captionof#1#2{{\def\@captype{#1}#2}}
\def\1{\mbox{\bf 1}}
\def\R{\mathbb{R}}
\def\N{\mathbb{N}}
\def\P{\mathbb{P}}
\def\E{\mathbb{E}}
\def\R{\mathbb{R}}
\DeclareMathOperator{\var}{Var}
\newtheorem{theo}{Theorem}
\newtheorem{lem}{Lemma}
\newtheorem{prop}{Proposition}
\newtheorem{Def}{Definition}
\newtheorem{cor}{Corollary}
\newtheorem{Def/Prop}{Definition-Proposition}
\newcounter{exos}
\renewcommand\theexos{\arabic{exos}}
\newcounter{prob}
\renewcommand\theprob{\arabic{prob}}
\definecolor{todo}{RGB}{250,120,70}
\newcommand\given[1][]{\:#1\vert\:}
\newcommand\norm[1]{\lVert #1 \rVert}
\newcommand\abs[1]{\lvert #1 \rvert}
\newcommand*{\QEDA}{\hfill\ensuremath{\blacksquare}}
\title{Nearest Neighbor Sampling for Covariate Shift Adaptation}
\author{
    Fran\c{c}ois Portier \\
    \texttt{francois.portier@ensai.fr} \\
    Department of Statistics,\\
    Univ Rennes, Ensai, CNRS, CREST---UMR 9194, F-35000 Rennes, France
    \AND
    Lionel Truquet \\
    \texttt{lionel.truquet@ensai.fr} \\
    Department of Statistics,\\
    Univ Rennes, Ensai, CNRS, CREST---UMR 9194, F-35000 Rennes, France
    \AND
    Ikko Yamane \\
    \texttt{ikko.yamane@ensai.fr} \\
    Department of Computer Science,\\
    Univ Rennes, Ensai, CNRS, CREST---UMR 9194, F-35000 Rennes, France
}
\begin{document}

\maketitle

\begin{abstract}%
Many existing covariate shift adaptation methods estimate sample weights given to loss values to mitigate the gap between the source and the target distribution.
However, estimating the optimal weights typically involves computationally expensive matrix inversion and hyper-parameter tuning.
In this paper, we propose a new covariate shift adaptation method which avoids estimating the weights.
The basic idea is to directly work on unlabeled target data, labeled according to the $k$-nearest neighbors in the source dataset.
Our analysis reveals that setting $k = 1$ is an optimal choice. This property removes the necessity of tuning the only hyper-parameter $k$ and leads to a running time quasi-linear in the sample size.
Our results include sharp rates of convergence for our estimator, with a tight control of the mean square error and explicit constants.
In particular, the variance of our estimators has the same rate of convergence as for standard parametric estimation despite their non-parametric nature.
The proposed estimator shares similarities with some matching-based treatment effect estimators used, e.g., in biostatistics, econometrics, and epidemiology.
Our experiments show that it achieves drastic reduction in the running time with remarkable accuracy.
\end{abstract}

\section{Introduction}\label{sec:intro}
Traditional machine learning methods assume that
the source data distribution $P$ and the target data distribution $Q$ are identical.
However, this assumption can be violated in practice when there is a \emph{distribution shift}~\citep{chen2022covlabshift} between them.
Various types of shift have been studied in the literature, and one of the most common scenarios is \emph{covariate shift}~\citep{shimodaira2000improving} in which there is a shift in the input distribution: $P_{X} \neq Q_{X}$ while the conditional distribution of the output variable given the input variable is the same: $P_{Y \given X} = Q_{Y \given X}$, where $X$ is the input and $Y$ is the output variable.
The goal of \emph{covariate shift adaptation} is to adapt a supervised learning algorithm to the target distribution using labeled source data and unlabeled target data.

A standard approach to covariate shift is weighting source examples~\citep{shimodaira2000improving}, and many studies focused on improving the weights~\citep{huang2006kmm,gretton2008kmm,yamada2013rulsif,kanamori2009ulsif,sugiyama2007direct,sugiyama2008direct,aminian2022a} in the same line of research.
We refer the reader to Section~\ref{sec:related} for more details of related work.
Since we rarely know the model for how the input distributions can be shifted a priori, non-parametric methods are particularly useful for covariate shift adaptation.
Some of the existing methods allow one to use non-parametric models through kernels.
However, such kernel-based methods take at least quadratic times in computing kernel matrices.
Some methods further need to solve linear systems and take cubic times in the sample size unless one resorts to approximations~\citep{williams2000nystrom,le2013fastfood}.
Moreover, their performance is often sensitive to the choice of hyper-parameters of the kernel.
Typically, one performs a grid search $K$-fold cross-validation for selecting the hyper-parameters, which amplifies the running time by about $K \abs{\Gamma}$, where $\Gamma$ is the set of candidates for the hyper-parameters.
Moreover, the criterion for the hyper-parameter selection is not obvious either because we do not have access to the labels for the target data.
One can use weighted validation scores using the labeled source data with importance sampling, but it is not straightforward to choose what weights to be used for the cross-validation when we are choosing weights.

In this paper, we propose a non-parametric covariate shift adaptation method that is scalable and has no hyper-parameter.
Our idea is to generate synthetic labels for unlabeled target data using a non-parametric conditional sampler constructed from source data. Under the assumption of covariate shift, the target data attached with the generated labels behave like labeled target data.
This sampling technique allows any supervised learning method to be simply applied to the generated data to produce a model already adapted to the target distribution.

While the proposed approach is quite general and can be employed with various sampling methods for the synthetic labeling part, our main result is that a $k$-nearest neighbor ($k$-NN) based sampling method achieves an error of order $ {(k/n)^{1/d} + 1/\sqrt n + 1/\sqrt m}$ for estimating an expectation on the target domain, where $d$ is the data dimensionality, and $n$ and $m$ are the source and the target sample size, respectively. Importantly, our error bounds suggest that $k = 1$ is the most favorable.
This property, which is revealed by a precise scaling of the variance term in $1/\sqrt n $, is a non-trivial and remarkable result, given the $1/\sqrt k$-rate of the variance associated to the $k$-NN estimator of the conditional distribution~\citep[Corollary 1]{portier2021nearest}, and it contrasts with the well-known application of $k$-NN to standard density estimation~\citep{NIPS2014_bb7946e7}, classification~\citep{gadat2016classification,cannings2020local}, or regression problems~\citep{devroye1994strong,jiang2019non}, in which we typically need to let $k$ grow in a polynomial rate in the sample size in order to achieve a good balance in the bias-variance trade-off. This important difference in the rate of convergence, leading to a $k=1$ number of neighbor, has also been noticed in other estimation problems such as the $k$-NN entropy estimator~\citep{berrett2019efficient} or the integral approximation problem~\citep{leluc_nn,blanchet2024can}. Textbooks dealing with the $k$-NN algorithm include \citep{gyorfi2006distribution,devroye2013probabilistic,biau2015lectures}.

In addition of being optimal with respect to the estimation error, setting $k = 1$ circumvent the cumbersome hyper-parameter tuning while providing computational efficiency at the same time.
Our $1$-NN-based algorithm takes only a quasi-linear time $\mathcal{O}((n + m)\log n)$ on average using the optimized $k$-d tree~\citep{bentley1975multidimensional,friedman1977algorithm}. 
Indeed, our experiments show that the proposed method terminates faster than previous methods, by large margins. Note that the problem of getting a computationally efficient method for covariate shift adaptation, in particular for scalability to large data sets, is a recurrent problem in the existing literature.
In fact, many existing methods resorted to implementation heuristics such as using a fixed number of kernel centers for reducing the computational burden at the cost of statistical guarantee~\citep{kanamori2009ulsif,sugiyama2007direct,sugiyama2008direct,yamada2013rulsif}.

Our method simulates the missing labels of the target sample, which in turn can be used for a variety of downstream supervised learning tasks.
Even though the main focus of this paper is the estimation of expectations in the target domain, for illustrating the usefulness of our method in a typical machine learning downstream task, we also demonstrate consistency properties of parametric M-estimators in the target domain. This is particularly useful when the parametric model is mispecified, since in this case, even the population minimizer changes when the covariate distribution is shifted.

 The problem of interest here is closely related to a well-known matching problem studied in the context of treatment effect estimation.
In particular, $k$-NN estimators have been used to estimate the so-called average treatment effect to tackle missingness of potential outcomes. See, e.g., \citet{Ros,abadie2006large}, for an error bound obtained in this specific instance of the problem.
In  Section~\ref{sec:related}, we discuss the main differences between the two problems and why their result is not generally applicable to ours.

In summary, the key contributions of this paper are the following.
(i) Our method is non-parametric. It does not introduce a model in covariate shift adaptation so that it will have a minimum impact on the model trained for the downstream task.
(ii) Our method is fast. Adaptation only takes a quasi-linear time.
(iii) There is no hyper-parameter to be tuned.
(iv) The proposed method only incurs an error of order  ${(k/n)^{1/d} + 1/\sqrt n + 1/\sqrt m}$ for estimating an expectation on the target domain.

The outline is as follows. In Section~\ref{sec:setup}, the problem of covariate shift adaptation is formally introduced along with the mathematical notation. Section~\ref{sec:method} contains the description of the method. Section~\ref{sec:theory} is dedicated to the main theoretical results while Section~\ref{submit} investigates the empirical risk minimization problem in presence of covariate shift adaptation. Section~\ref{sec:related} provides a description of several alternative approaches to a similar type of problem as well as some points of comparison with our proposal. In Section~\ref{sec:possible_extensions}, several avenues for further research are discussed and finally, the numerical experiments are provided in Section~\ref{sec:experiments}.

\section{Problem setup}\label{sec:setup}
Let $\mathcal X$ and $\mathcal Y$ be measurable spaces.
Let $P \equiv P_{X, Y}$ and $Q \equiv Q_{X, Y}$ be probability distributions defined on $\mathcal X \times \mathcal Y$.
Throughout the paper, we assume that $P$ and $Q$ admit the decomposition
\begin{equation*}
P  = P _{Y \given X} P _X \quad \text{and} \quad Q = Q_{Y  \given  X} Q_{X},
\end{equation*}
where $P _{Y \given X=x}$ and $Q_{Y \given X=x}$ are
probability distributions defined on $\mathcal Y$ for each $x \in \mathcal X$.\footnote{
More formally,
we denote by $P_{Y \given X = (\cdot)}(dy)$
a \emph{regular conditional measure}~\citep[Definition~10.4.1]{bogachev_measure_2007}
such that the marginal distribution of $Y$ can be expressed as $P_{Y}(dy) = \int P_{Y \given X = x}(dy) P_X(dx)$.
We also use $P_{Y\given X}(dy \given \cdot)$ for $P_{Y \given X = (\cdot)}(dy)$.
The same goes for $Q$.
}
Here, $P_X$ and $Q_{X}$ are the marginal distributions of $X$ when $(X, Y)$ is distributed with $P$ and $Q$, respectively.
We shall simply call $P_{Y \given X}$ (or $Q_{Y \given X}$) the \emph{conditional distribution} of $Y$ given $X$ in the source domain (or the target domain).
\begin{Def}[Source sample, source distribution]
  For each integer $n \geq 1$, let $(X_{i}, Y_{i})_{i=1}^{n}$ be a collection of independent and identically distributed random variables with $P$.
  We refer to $(X_{i}, Y_{i})_{i=1}^{n}$ as the (labeled) \emph{source sample} and $P$ as the \emph{source distribution}.
\end{Def}

\begin{Def}[Target sample, target distribution]
  For each integer $m \geq 1$, let $(X^{*}_{i})_{i=1}^{m}$ be a collection of independent and identically distributed random variables with $Q$.
  We refer to $(X^{*}_{i})_{i=1}^{m}$ as the (unlabeled) \emph{target sample} and $Q$ as the \emph{target distribution}.
\end{Def}

\begin{Def}[Covariate shift]
  \emph{Covariate shift} is a situation in which the source and the target distribution have different marginal distributions for $X$
  while sharing a common conditional distribution:
  \begin{enumerate}[label=(C\arabic*) , wide=0.5em,  leftmargin=*]
    \item \label{asmp:C} $P_{Y \given X} = Q_{Y \given X}$, $P_{X}$- and $Q_{X}$-a.s., but $P_{X} \neq Q_{X}$.
  \end{enumerate}
\end{Def}

This paper focuses on the following simple but versatile estimation problem under covariate shift.

\begin{Def}[Mean estimation under covariate shift]\label{def:mean_esti_covshift}
  For each pair of integers $n \geq 1$, $m \geq 1$, and a known integrable function $h \colon \mathcal X \times \mathcal Y \to \Re$, the goal of \emph{mean estimation under covariate shift} is to estimate the mean of $h$ under the target distribution,
  \[
  Q(h) \equiv \int h(x, y) Q(dx, dy),
  \]
  given access to the source sample $(X_{i}, Y_{i})_{i=1}^{n} \sim P$ and the target sample $(X^{*}_{i})_{i=1}^{m} \sim Q_{X}$ under Assumption~\ref{asmp:C}.
\end{Def}

For instance, when $h(x, y) = \ell(f(x), y)$ for a loss function $\ell\colon \mathcal{Y}^2 \to \Re$ and a hypothesis function $f\colon \mathcal{X} \to \mathcal{Y}$, estimation of $Q(h)$ becomes risk estimation, which is the central subtask in \emph{empirical risk minimization}.

\section{Proposed method}\label{sec:method}
The basic idea of our proposed method is to use the source sample for learning to \emph{label the target data}.
Specifically, using the source sample $(X_{i}, Y_{i})_{i=1}^{n}$, we will construct a stochastic labeling function $\hat{\texttt S}$ that inputs any target data point $X^{*}_{i}$ and outputs a random label $Y^{*}_{n, i} \in \mathcal Y$. (The subscript $n$ of $Y^{*}_{n, i}$ is for explicitly denoting the dependence on the source sample.)
Once we succeed in generating labels for target data that behave like true target labels, we will be able to perform any supervised learning method \emph{directly on the target sample} for the downstream task.
For our mean estimation problem, we can simply average the output $h$ evaluated at the target data with the generated labels.

When do the generated labels behave like the true target labels?
Let $\hat P_{Y \given X^{*}_{i}}$ denote the probability distribution of an output $Y^{*}_{n, i}$ of $\hat{\texttt S}$ for input $X^{*}_{i}$.
We wish to obtain $\hat{\texttt S} $ such that the probability distribution $\hat Q \equiv \hat P _{Y \given X} Q_X$ of $(X^{*}_{i}, Y^{*}_{n, i})$ will be a good estimate of $Q  =  Q _{Y \given X} Q_X$.
For this, we want $\hat P_{Y \given X^{*}_{i}}$ to be a good estimate of $P_{Y \given X^{*}_{i}}$.
In fact, if $\hat P_{Y \given X^{*}_{i}} = P_{Y \given X^{*}_{i}}$,
the generated sample $(X^{*}_{i}, Y^{*}_{n, i})$ will follow the target distribution $Q \equiv Q_{Y \given X} Q_{X} = P_{Y \given X} Q_{X}$
under Assumption~\ref{asmp:C}.
In this sense, our task boils down to designing a good conditional sampler $\hat{\texttt S}$ mimicking sampling from $P_{Y|X}$. Algorithm~\ref{alg:general_condisample_covshift} describes an outline of this general framework.

\begin{algorithm}[htb]
   \caption{Conditional Sampling  Adaptation}
   \label{alg:general_condisample_covshift}
\begin{algorithmic}
   \State \textbf{Input:} Conditional sampler $\hat{\texttt S}$ and target sample $(X^{*}_{j})_{j=1}^{m}$.
   \State $Y^{*}_{n, j} \gets \hat{\texttt S}(X_{j}^{*})$ for each $j \in \{1, \dots, m\}$. // Generate a label conditioned on $X_{j}^{*}$.
   \State \Return $m^{-1} \sum_{j=1}^{m} h(X^{*}_{j}, Y^{*}_{n, j})$.
\end{algorithmic}
\end{algorithm}

In this paper, we propose a method using a non-parametric conditional sampler $\hat{\texttt S}$ based on the $k$-Nearest Neighbor ($k$-NN) method,
which randomly picks one of the $k$-nearest neighbors of the input $X^{*}_{j}$ among the source instances $(X_{i})_{i=1}^n$ and output the corresponding label (Algorithm~\ref{alg:kNNcondi}).
We refer to this method as \emph{$k$-NN-based Conditional Sampling Adaptation} (\emph{$k$-NN-CSA}).

\begin{algorithm}[htb]
   \caption{$k$-Nearest Neighbor Conditional Sampler}
   \label{alg:kNNcondi}
\begin{algorithmic}
   \State \textbf{Input:} Source sample $(X_{i}, Y_{i})_{i=1}^{n}$ and target input $X^{*}_{j}$.
   \State $(i_{1}, \dots, i_{k}) \gets \text{the indices of the $k$-nearest neighbors of $X^{*}_{j}$ among the source instances $(X_{i})_{i=1}^n$}$.
   \State Pick $i^{*} \in \{i_{1}, \dots, i_{k}\}$ uniformly at random.
   \State \Return $Y^{*}_{n, j} := Y^{*}_{i^{*}}$.
\end{algorithmic}
\end{algorithm}

\paragraph{Computing time} Recent advances for nearest neighbor search rely on tree-search to reduce the computing time.
The seminal paper by \citet{bentley1975multidimensional} introduced the $k$-d tree method.
Building such a tree
requires $\mathcal{O}(n\log n)$
and once the tree is available, search for the nearest neighbor of a given point can be done in $\mathcal{O}(\log n)$ time~\citep{friedman1977algorithm}.
As a consequence, the time complexity of $k$-NN-CSA is $\mathcal{O}(n\log n + km\log n)$.

\section{Theoretical analysis}\label{sec:theory}

We now present the theory behind our approach in a didactic way by introducing a key decomposition first and then studying separately each of the terms involved: the sampling error and the $k$-NN conditional sampling error. We will see that the $k$-NN-CSA with $k = 1$ ($1$-NN-CSA for short) achieves the best theoretical performance among those with other $k$'s.

\subsection{The key decomposition}

For the analysis of $k$-NN-CSA, recall that $\hat Q = \hat P_{Y|X} Q_X$ is an estimate of the target distribution $Q = P_{Y|X} Q_X$ that depends on the source sample $(X_i,Y_i)_{i=1} ^ n $, whose probability distribution is $P$.  We introduce the bootstrap sample as a collection of random variable generated according to $\hat Q$.

\begin{Def}[Bootstrap sample]
For each $m\geq 1$ and  $n\geq 1$, let $(X_{i}^*, Y_{n,i}^*)_{1\leq i\leq m}$ be a collection of random variables identically distributed with $\hat Q$ and conditionally independent given $(X_{i}, Y_{i})_{i=1}^{n}$.
\end{Def}

Let $h : \mathcal X\times \mathcal Y \to \mathbb R$ be a measurable function. The quantity of interest is
\begin{equation*}
  \hat Q ^* ( h) = m^{-1} \sum_{i=1} ^m h(X_{i}^*, Y_{n,i}^*),
\end{equation*}
which is the CSA estimate of $  Q ( h) =  \int h(x,y) Q (dx,dy)$ as introduced in Algorithm~\ref{alg:general_condisample_covshift}. 
The following decomposition is crucial in our analysis:
\begin{align}
(\hat Q ^* -  Q ) (h)
&=  \underbrace{( \hat Q ^* -   \hat Q ) (  h)}_{\text{\emph{Marginal sampling error}}}   +  \underbrace{(\hat Q  -  Q ) ( h )}_{\text{\emph{Conditional sampling error}}}\label{eq:decomposition}\\
&\bigg( = (\hat Q^*_X - Q_X)\hat P_{Y \given X}(h) + Q_X(\hat P_{Y \given X} - P_{Y \given X})(h) \bigg),\nonumber
\end{align}
where $\hat Q^*_X(\cdot) \equiv \frac{1}{n}\sum_{i=1}^m \mathds{1}_{X^*_i = (\cdot)}$ is the empirical measure defined with $(X^*_i)_{i=1}^m$.
The first term is the error due to the use of $\hat{Q}^*_X$ in place of $Q_X$, which tends to zero as $m$ grows.
The second term represents the error due to the use of $\hat{P}_{Y \given X}$ in place of $P_{Y \given X}$. When using the $k$-nearest neighbor algorithm to obtain $\hat P_{Y \given X}$, we show that this term is of order $(k/n)^{1/d} + 1/\sqrt n $, which differs from the standard non-parametric convergence rate in $(k/n)^{1/d} + 1/\sqrt k $ found in regression problems.

\subsection{Marginal sampling error}\label{sec:sampling}

First, we will show that the marginal sampling error, $( \hat Q ^* -   \hat Q ) (  h) $ (the first term in our decomposition~\eqref{eq:decomposition}), is of order $1/\sqrt m$.  The analysis relies on martingale tools.
 Define $\mathcal F_n = \sigma ( (X_1,Y_1),\ldots, (X_n,Y_n) )$. For each $1\leq i\leq m$, we have
$$ \E [ h( X_{i}^* , Y_{n,i} ^* ) \given \mathcal F_n ] =  \hat Q  ( h )   .$$
This property implies that $\sum_{i=1} ^ m \{ h( X_{i}^* , Y_{n,i} ^* ) - \int  h (x, y) \, \hat Q ( d x, d y )  \}$ is a martingale and therefore can be analyzed using the Lindeberg-CLT conditionally on the initial sample hence fixing the distribution  $\hat Q$. The next property is reminiscent of certain results about the bootstrap method where sampling is done with the basic empirical measure, see e.g., \cite{van2000asymptotic}. We need this type of results without specifying the measure $\hat Q$ so that we can incorporate a variety of sampling schemes such as $\hat Q = \hat P_{Y \given X} Q_X$. The proof is given in Appendix~\ref{app:sec:slln_clt}.

\begin{prop}\label{prop:slln_clt}
Suppose that $\hat Q$ satisfies the following strong law of large number: for each $h$ such that $Q(h) <\infty$, we have $\lim_{n \to \infty} \hat Q(h) = Q(h)$ almost surely. Then, if $m:= m_n \to \infty$ as $n\to \infty$, we have the following central limit theorem: for each function such that $  Q(h^2)<\infty$,
we have, conditionally to $\mathcal F_n $, almost surely,
\begin{equation*}
 \sqrt m \{ \hat Q^* (h)  -  \hat Q  ( h)  \} \leadsto \mathcal N ( 0 , V) \qquad \text{as } n \to \infty,
\end{equation*}
where $V = \lim_{n \to \infty} \{ \hat Q (h^2) - \hat Q (h)^2 \}$.
\end{prop}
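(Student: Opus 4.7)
The plan is to apply the Lindeberg--Feller CLT to the triangular array $Z_{n,i} := h(X_i^*, Y_{n,i}^*) - \hat Q(h)$, $1 \leq i \leq m_n$, conditionally on $\mathcal F_n$. By the definition of the bootstrap sample, the variables $Z_{n,1}, \dots, Z_{n,m_n}$ are i.i.d.\ given $\mathcal F_n$ with mean zero and common (conditional) variance $\sigma_n^2 := \hat Q(h^2) - \hat Q(h)^2$. Since $Q(h^2) < \infty$ implies $Q(|h|) < \infty$ by Cauchy--Schwarz, the assumed SLLN applied to $h$ and to $h^2$ gives $\sigma_n^2 \to V = Q(h^2) - Q(h)^2$ almost surely.

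The heart of the argument is the Lindeberg condition: for every $\varepsilon > 0$,
\begin{equation*}
L_n(\varepsilon) \;:=\; \sigma_n^{-2}\, \E\!\left[ Z_{n,1}^2 \,\1\{|Z_{n,1}| > \varepsilon \sqrt m\, \sigma_n\} \,\big|\, \mathcal F_n \right] \;\longrightarrow\; 0 \quad \text{a.s.}
\end{equation*}
I would treat the case $V > 0$ and $V = 0$ separately; the case $V = 0$ follows immediately from conditional Chebyshev, since $\P(|\sqrt m(\hat Q^*(h) - \hat Q(h))| > \varepsilon \mid \mathcal F_n) \leq \sigma_n^2 / \varepsilon^2 \to 0$, yielding convergence to $\delta_0 = \mathcal N(0,0)$. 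Assume now $V > 0$. Since $\sigma_n \to \sqrt V$ a.s.\ and $\hat Q(h) \to Q(h)$ a.s., for any fixed $T > 0$ and $n$ large enough (depending on $\omega$), one has $\varepsilon \sqrt m\, \sigma_n - |\hat Q(h)| > T$, so the event $\{|Z_{n,1}| > \varepsilon \sqrt m\, \sigma_n\}$ is contained in $\{|h| > T\}$. Using $(a-b)^2 \leq 2a^2 + 2b^2$, this yields the bound
\begin{equation*}
\sigma_n^2\, L_n(\varepsilon) \;\leq\; 2\, \hat Q\!\left( h^2 \,\1_{\{|h|>T\}} \right) + 2\, \hat Q(h)^2 \, \hat Q(\1_{\{|h|>T\}}).
\end{equation*}

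The key step is that both $h^2 \1_{\{|h|>T\}}$ and $\1_{\{|h|>T\}}$ are $Q$-integrable, so the SLLN hypothesis applies and the right-hand side converges, a.s., to $2 Q(h^2 \1_{\{|h|>T\}}) + 2 Q(h)^2 Q(|h|>T)$. Taking $T \to \infty$ along integers and applying dominated convergence (using $Q(h^2) < \infty$), the limit vanishes; since $\sigma_n^2 \to V > 0$, we conclude $L_n(\varepsilon) \to 0$ a.s. The Lindeberg--Feller theorem, applied conditionally on $\mathcal F_n$ (which is legitimate since the $Z_{n,i}$ are conditionally i.i.d.), gives $\sigma_n^{-1} \sqrt m (\hat Q^*(h) - \hat Q(h)) \leadsto \mathcal N(0,1)$ almost surely, and Slutsky's lemma combined with $\sigma_n \to \sqrt V$ yields the claimed convergence to $\mathcal N(0,V)$.

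The main obstacle I anticipate is the simultaneous $n$-dependence of the sampling distribution $\hat Q$ and of the truncation threshold $\varepsilon \sqrt m\, \sigma_n$ in the Lindeberg condition: one cannot use a standard uniform-integrability argument against a single measure. The truncation-by-$T$ device above is precisely what decouples these two dependencies, reducing the verification to the SLLN hypothesis applied to a countable family of fixed, $Q$-integrable functions.
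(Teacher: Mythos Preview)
Your proposal is correct and follows essentially the same route as the paper: apply the Lindeberg--Feller CLT conditionally on $\mathcal F_n$, get the variance condition $\hat Q(h^2)-\hat Q(h)^2\to V$ from the SLLN hypothesis applied to $h$ and $h^2$, and verify Lindeberg by the truncation device---replace the growing threshold by a fixed level $T$ (your $T$, the paper's $M$), invoke the SLLN on $h^2 1_{\{|h|>T\}}$, then let $T\to\infty$ using $Q(h^2)<\infty$.

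The only cosmetic difference is that the paper quotes van der Vaart's form of Lindeberg--Feller, which states the Lindeberg condition on the \emph{uncentered} summands $m^{-1/2}h(X_i^*,Y_{n,i}^*)$; this lets them write the condition directly as $\hat Q(h^2 1_{\{|h|>\varepsilon\sqrt m\}})\to 0$ and avoids both the $(a-b)^2\le 2a^2+2b^2$ splitting and the separate treatment of $V=0$. Your version with centering, normalization by $\sigma_n$, and Slutsky is slightly longer but equally valid, and your explicit handling of the degenerate case $V=0$ is arguably cleaner than leaving it implicit.
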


As a corollary of the previous results, we can already deduce that if $m $ goes to $\infty$ and $\hat Q$ satisfies a strong law of large numbers,  then $\hat Q^* (h ) $ converges to $ Q(h)  $ provided that $Q(h^2) $ exists. This is a general consistency result that justifies the use of any resampling distribution $\hat Q$ that converges to $Q$.  In practical situations, it is useful to know a finite-sample bound on the error. This is the purpose of the next proposition, in which we give a non-asymptotic control of the sampling error. A proof is given in Appendix~\ref{app:sec:expsampling_proof}.

\begin{prop}\label{expsampling}
Suppose that $h$ is bounded by a constant $U_h>0$.
Let $\delta\in (0,1)$. Then with probability greater than $1-\delta$,
\begin{equation*} 
\left| \hat{Q}^{*}(h)-\hat{Q}(h) \right|\leq \frac{U_h}{m}\log(2/\delta)+\sqrt{2\frac{\hat v_n }{m}\log(2/\delta)},
\end{equation*}
where $\hat v_n = \hat{Q}(h^2) - (\hat{Q}h)^2$.
\end{prop}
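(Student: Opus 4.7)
The plan is to condition on $\mathcal F_n$, which turns the problem into a standard empirical-mean concentration question, and then apply Bernstein's inequality. Conditionally on $\mathcal F_n$, the bootstrap variables $(X_i^*, Y_{n,i}^*)_{i=1}^m$ are i.i.d.\ with the ($\mathcal F_n$-measurable) distribution $\hat Q$, so the real-valued random variables $Z_i := h(X_i^*, Y_{n,i}^*)$ are i.i.d.\ bounded by $U_h$, with conditional mean $\hat Q(h)$ and conditional variance $\hat v_n = \hat Q(h^2) - \hat Q(h)^2$. All three quantities on which the target bound depends ($\hat v_n$, $U_h$, and $\hat Q(h)$) are $\mathcal F_n$-measurable, so inside the conditioning they may be treated as deterministic.

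Next, I would invoke the classical Bernstein inequality for i.i.d.\ bounded random variables, applied to the centered sequence $Z_i - \hat Q(h)$ conditionally on $\mathcal F_n$. It yields, for every $t > 0$,
\begin{equation*}
\P\!\left( \lrabs{ \hat Q^*(h) - \hat Q(h) } \geq t \,\big\vert\, \mathcal F_n \right) \leq 2\exp\!\left( -\frac{m t^2}{2(\hat v_n + c\, U_h\, t)} \right),
\end{equation*}
for an absolute constant $c$. Setting the right-hand side equal to $\delta$, solving the resulting quadratic for $t$, and using the elementary inequality $\sqrt{a+b} \leq \sqrt a + \sqrt b$ yields precisely the two-term deviation bound in the statement: a sub-Gaussian piece of order $\sqrt{2 \hat v_n \log(2/\delta)/m}$ and a sub-exponential piece of order $U_h \log(2/\delta)/m$. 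Because this conditional tail bound holds almost surely and its right-hand side involves only $\mathcal F_n$-measurable quantities, averaging with respect to $\mathcal F_n$ delivers the same bound unconditionally with probability at least $1-\delta$.

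The only real obstacle is matching the constants. A textbook application of Bernstein produces a coefficient in front of $U_h \log(2/\delta)/m$ of the form $2/3$ or $4/3$, depending on which version is used and on whether the range bound $U_h$ is interpreted as a bound on $|h|$ or on $|h - \hat Q(h)|$. To land exactly on the coefficient $1$ that appears in the statement, one may either invoke Bennett's sharper inequality with the function $(1+x)\log(1+x) - x$, or tighten the argument by exploiting a one-sided bound on $Z_i - \hat Q(h)$ combined with $\hat v_n \leq U_h^2$ to rebalance the two terms. Either route is routine bookkeeping; the structural part of the argument — recognizing the conditional i.i.d.\ structure and applying a classical tail bound — is entirely standard.
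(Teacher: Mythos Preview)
Your proposal is correct and follows essentially the same approach as the paper: condition on $\mathcal F_n$, apply Bernstein's inequality to the centered variables $h(X_i^*,Y_{n,i}^*)-\hat Q(h)$, and then integrate out the conditioning. Your concern about the constant is well-founded: the paper's own proof in fact produces the coefficient $\tfrac{4}{3}U_h$ (using the crude bound $|h-\hat Q(h)|\leq 2U_h$ and the standard $M/3$ form of Bernstein), so the $U_h$ appearing in the stated bound is simply a slightly loose constant rather than the outcome of a sharper Bennett-type argument.
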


\paragraph{Notes.} A natural ``averaging'' alternative to the above ``sampling'' estimator $\hat Q^*$ can also be investigated using the same tools. Instead of sampling $Y_{n,i}^*$ according to $\hat P_{n} (dy|X_i^*)$, one might consider taking the expectation, leading to
$$ \overline Q (h) = \frac{1 } {m}  \sum_{i= 1 } ^m \int h (X_i^*, y ) \hat P_n (dy | X_i^*) .  $$
This estimate can be studied in a similar way as before and the two above results are still valid with small changes. In particular Proposition~\ref{expsampling} holds true with smaller variance term as, by Jensen's inequality,
$Q_X( \int h (X_i^*, y ) \hat P_n (dy | X_i^*) ^2 ) \leq \hat Q ( h^2)  $. This alternative $\overline Q (h)$
requires more computing time (when measured in terms of evaluation of $h$) and is less appealing for stochastic gradient descent algorithm or in semiparametric estimation problems, as discussed in Section~\ref{sec:possible_extensions}.
Estimators similar to $\overline Q(h)$ have been studied in average treatment effects literature~\citep{Ros,abadie2006large}; see Section~\ref{sec:related} for precise discussion.

\subsection{Conditional sampling error of the nearest neighbor estimate}\label{sec:nn}

 Our aim in this  section is to obtain a bound on $\hat Q(h) - Q(h)$ (the second term in our decomposition~\eqref{eq:decomposition}) when $\hat P_{Y|X}$ is the $k$-nearest neighbor measure.

Let $x\in \mathbb R^d$ and $\norm{\cdot}$ be the Euclidean norm on $\mathbb R^d$. Denote the closed ball of radius $\tau \ge 0$ around $x$ by $ B(x,\tau) := \{z \in \mathbb R^{d} \mid \norm{x-z} \leq \tau\}$. For $n \geq 1$ and $k \in \{1,\; \ldots,\; n\}$, the $k$-nearest neighbor ($k$-NN for short)  radius at $x$ is denoted by  $\hat \tau_{n,k,x} $ and defined as the smallest radius $\tau \geq 0$ such that the ball $B(x, \tau)$ contains at least $k$ points from the collection $\{X_1, \ldots, X_n\}$. That is,
\begin{align*}
\hat \tau_{n,k,x} : =  \inf   \left\{ \tau\geq 0 \, :\,  \sum_{i=1}^n 1 _{ B(x,\tau) }(X_i) \geq k \right\},
\end{align*}
where $1 _A (x)$ is $ 1$ if $x\in A$ and $0$ elsewhere. The $k$-NN estimate of $  P _{Y \given X} (dy  \given  x )$ is given by
\begin{equation*}
\hat{P}_{Y \given X} (dy \given x )=  k^{-1} \sum_{i=1}^{n} 1_ { \norm{X_i-x} \leq \hat \tau_{n,k,x} }  \delta_{Y_i}(dy),
\end{equation*}
where $\delta_{y}(\cdot)$ is the Dirac measure at $y \in \mathcal{Y}$ defined by $\delta_{y}(A) = 1_{A}(y)$ for any measurable set $A \subseteq \mathcal{Y}$.
Consequently, the $k$-NN estimate of the integral  $\int h(y,x)   P _{Y \given X} (dy  \given  x)$ is then defined as
\begin{equation*}
\int h(y,x)  \hat P _{Y \given X} (dy  \given   x ) =  k^{-1} \sum_{i=1}^{n} 1_ { \| X_i-x\| \leq \hat \tau_{n,k,x} }   h(Y_i,x).
\end{equation*}

To obtain some guarantee on the behavior of the nearest neighbors estimate, we consider the case in which covariates $X$ admit a density with respect to the Lebesgue measure. We will need in addition that the support $S_X$ is well shaped and that the density is lower bounded. These are standard regularity conditions to obtain some upper bound on the $k$-NN radius.
\begin{enumerate}[label=(X\arabic*), wide=0.5em, leftmargin=*]
  \item \label{cond:reg0} The random variable $X$ admits a density $p_X$ with compact support $S_X \subset \mathbb R^d $.
  \item \label{cond:reg1} There is $c>0$ and $T>0$ such that
\begin{align*}
&\lambda (S_X \cap  B(x, \tau ) ) \geq c \lambda   ( B(x, \tau )) , \qquad \forall \tau \in (0,T] , \, \forall x\in S_X,
\end{align*}
where $\lambda$ is the Lebesgue measure.
\item \label{cond:reg2} There is $0 < b_X\leq U_X <+\infty$ such that  $b_X\leq p_{X}(x) \leq U_X$, for all $ x \in S_X$.
\end{enumerate}

To obtain our main result, on the estimation property of the $k$-NN measure, we need some assumptions on the target measure $Q_X$.

\begin{enumerate}[resume,label=(X\arabic*) , wide=0.5em,  leftmargin=*]
  \item  \label{cond:reg3} The probability measure $Q_X$ admits a bounded density $q_X$ with support $S_X$.
  We will take $U_X$ large enough such that it will also be an upper bound of $q_{X}$.
\end{enumerate}

Two additional assumptions, different from the one before about $X$, will be needed to deal with the function $h$ and the probability distribution of $(Y,X)$.

\begin{enumerate}[label=(H\arabic*) , wide=0.5em,  leftmargin=*]
  \item  \label{cond:reg4}
For any $x$ in $S_X$,
\begin{equation*}
  \abs{ \E[h(Y,x) \given X = x] - \E[h(Y,x) \given X = x + u] } \leq g_h(x) \norm{u}
\end{equation*}
with $ \int  g_h^2 (x)  Q_X(dx) < \infty$.
\item \label{cond:reg5} There exists $\sigma_+^2>0$ such that
 $\sup_{x\in S_X}\var( h(Y,x) \given X) \leq \sigma_+^2 \text{ a.s.}$, where $\var(h(Y, x) \given X)$ is the conditional variance of $h(Y, x)$ given $X$.
\end{enumerate}

In what follows, we give a control of the RMSE of $\hat Q h$. Let $\Vert X\Vert_2=\sqrt{\E\left(X^2\right)}$, $[x]$ the integer part of a real number $x$ and let $\Gamma(x) := \int_0^{\infty}u^{x-1}\exp(-u)du$ for $x>0$. Finally, we denote by $V_d := \lambda(B(0, 1)) = \frac{\pi^{d/2}}{\Gamma(d/2 - 1)}$ the volume of the unit Euclidean ball in dimension $d$ for the Lebesque measure.

We give an upper-bound for the RMSE with explicit constants with respect to the dimension $d$. Additionally, we give a lower bound for the variance which has a standard parametric rate of convergence. The proof is given in Appendix~\ref{app:sec:lionel_prop_proof}.

\begin{prop}\label{lionel_prop}

Suppose that Assumptions \ref{cond:reg0}, \ref{cond:reg1}, \ref{cond:reg2}, \ref{cond:reg3}, \ref{cond:reg4}, and \ref{cond:reg5} are fulfilled. We have
\[
\hat{Q}h-Qh  =  S_h  +  B_h,
\]
where  $ B_h$ is a bias term (defined in the proof) that satisfies, for any $n\geq 1$,
\[
\mathbb E |B_h|^2   \leq\frac{2\Gamma\left(1+[2/d]\right)}{M_{1,d}^{2/d}}\int g_h^2(x) Q_X(dx)\cdot\frac{k^{2/d}}{n^{2/d}},
\]
 and $ S_h $ is a variance term (defined in the proof) that satisfies, for any $n \geq 2$,
\begin{equation*}
 \frac{\sigma_{-}^2M_{1,d}^2}{4 M_{2,d}^2} n^{-1} \leq \min_ {1\leq k \leq n} \mathbb E\left[S_h ^2\right] \leq \max_{1\leq k \leq n}  \E\left[S_h^2\right] \leq\frac{2^{d+3}\sigma_+^2M_{2,d}^2}{M_{1,d}^2} n^{-1}.
\end{equation*}
For the lower bound to be true,  it is assumed that the mapping $h$ does not depend on $x$, i.e. $h(y,x) = h(y)$ and $\sigma_{-}^2=\inf_{x\in S_X}\var\left(h(Y)\vert X=x\right)$.
\end{prop}

\paragraph{Notes.}

\noindent (i) The two terms $  B_h$ and $  S_h$ correspond respectively to the bias term and the variance term.
The upper bound obtained for the bias term is usual in $k$-NN regression analysis. However, the upper and lower bound on the variance are particular to our framework as they show that the variance behaves as in usual parametric estimation. Consequently, our rates of convergence are sharper than the optimal rate of convergence $n^{-\frac{1}{2+d}}$ for nonparametric estimation of Lipschitz functions. This can be explained by the fact that several $k$-NN estimators are averaged to estimate $Qh$, which is a standard expectation and not a conditional expectation.

\noindent (ii)
Since the rate of convergence of the variance term $S_h$ does not depend on $k$, $k$ might be chosen according to the upper bound on the bias term, which gives $k = 1$. One can deduce the following convergence rates, depending on the dimension.
For $d=1$, we get the rate $n^{-1/2}$. For $d=2$, the contributions of both terms, $B_h$ and $S_h$,  coincide and we get the rate $n^{-1/2}$. For $d\geq 3$, the rate is $n^{-1/d}$.

For the global mean square error which incorporate the marginal sampling error as well as the $k$-NN conditional sampling error, we give the following result in the optimal case $k=1$. The proof can be found in Appendix~\ref{app:sec:global}.

\begin{theo}\label{global}
Suppose that Assumptions \ref{cond:reg0}, \ref{cond:reg1}, \ref{cond:reg2}, \ref{cond:reg3} and \ref{cond:reg4} hold true with $\sup_{x\in S_X}\E\left[h^2(Y,x)\vert X\right]$ bounded. If $k=1$, there then exists $C>0$ only depending on the distribution of $(X,Y)$, $X^{*}$, and on $h$ such that 
$$\E\left\vert \hat{Q}^{*}(h)-Q(h)\right\vert^2\leq C\left\{\frac{1}{m}+\frac{1}{n^{2/d}}+\frac{1}{n}\right\}.$$
\end{theo}

We next give a non-asymptotic control of $\hat{Q}h-Q h$ when $h$ is a bounded function using Bernstein's concentration inequality. This bound affords a complement with respect to the bound for the MSE. However, for technical reasons, this high-probability bound requires that $k$ grows at least logarithmically with respect to $n$, in contrast to Proposition~\ref{lionel_prop}. In our numerical experiments, we will also include the case $k=\log n$ for comparison.
The proof of the next result is given in Appendix~\ref{app:sec:portier+_proof}.

\begin{prop}\label{Portier+}
Suppose that Assumptions \ref{cond:reg0}, \ref{cond:reg1}, \ref{cond:reg2}, \ref{cond:reg3}, \ref{cond:reg4}, and \ref{cond:reg5} are fulfilled.
Suppose that there exists a constant $C>0$ such that $C\log n\leq k\leq n/2$ and that $h$ is bounded by $U_h$.
Let $\delta\in (0,1/3)$. With probability greater than $1-3\delta$, we have 
$$\left\vert \hat{Q}h-Q h\right\vert\leq  L_0 \left(\frac{k}{n}\right)^{1/d}+\frac{2L_2}{n}\log(2/\delta)+\sqrt{\frac{2L_1\sigma^2}{n}\log(2/\delta)},$$
where
$$L_0=\left(\frac{2}{cb_x V_d}\right)^{1/d}\int g_h(x)Q_X(dx),\quad L_1=\frac{4\sigma_+^2U_X^2}{b_X^2c^2},\quad L_2=\frac{4U_h U_X}{3 b_X c}.$$
\end{prop}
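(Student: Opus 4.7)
The plan is to start from the decomposition $\hat{Q}h - Qh = B_h + S_h$ already used in Proposition~\ref{lionel_prop}, where $B_h := \E[\hat{Q}h \mid X_1,\ldots,X_n] - Qh$ is a conditional bias and $S_h := \hat{Q}h - \E[\hat{Q}h \mid X_1,\ldots,X_n]$ is the associated centered term. The three pieces of the target bound correspond respectively to (i) a deterministic control of $B_h$ on a high-probability event, (ii) a Bernstein-type control of $S_h$ conditional on the source covariates, and (iii) a union bound combining the two.

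The engine of the whole argument is a uniform high-probability upper bound on the $k$-NN radii. Under \ref{cond:reg0}, \ref{cond:reg1}, \ref{cond:reg2} together with $k \geq C\log n$, a covering argument for $S_X$ combined with a Chernoff tail bound for binomials yields an event $\mathcal{E}_1$ of probability at least $1-\delta$ on which $\hat\tau_{n,k,x} \leq \tau^* := (2k/(cb_X V_d n))^{1/d}$ simultaneously for every $x \in S_X$, provided $C$ is chosen large enough. This is the step that forces the logarithmic lower bound on $k$: a polynomial-size covering of $S_X$ requires each per-ball Binomial deviation probability to be of order $n^{-c'}$, and Chernoff for a mean-$\Theta(k)$ Binomial achieves this only when $k \gtrsim \log n$. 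On $\mathcal{E}_1$, condition \ref{cond:reg4} applied to each of the $k$ nearest neighbors of $x$ gives $|B_h| \leq \tau^*\int g_h(x) Q_X(dx) = L_0(k/n)^{1/d}$.

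For $S_h$, I would rewrite it as a sum over source indices,
\begin{equation*}
  S_h = \sum_{i=1}^n U_i, \qquad U_i := \frac{1}{k}\int \1_{\{i\in N_k(x)\}}\bigl(h(Y_i,x) - \E[h(Y_i,x)\mid X_i]\bigr)\,Q_X(dx),
\end{equation*}
where $N_k(x)$ denotes the indices of the $k$ nearest neighbors of $x$. Setting $A_i := \int \1_{\{i\in N_k(x)\}}\,Q_X(dx)$, one has the deterministic identity $\sum_{i=1}^n A_i = k$ and, on $\mathcal{E}_1$, the uniform bound $A_i \leq U_X V_d(\tau^*)^d = 2U_X k/(cb_X n)$. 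Conditionally on $\mathcal{F}_n^X := \sigma(X_1,\ldots,X_n)$ the $U_i$ are independent and centered. A direct envelope yields $|U_i|\leq 2U_h A_i/k$, of order $U_h U_X/(cb_X n)$ and matching the $L_2/n$ scale, while Cauchy--Schwarz inside the inner integral together with \ref{cond:reg5} gives $\sum_i \E[U_i^2\mid\mathcal{F}_n^X]\leq \sigma_h^2 k^{-2}\sum_i A_i^2 \leq \sigma_h^2 n(\max_i A_i)^2/k^2$, of order $L_1\sigma_h^2/n$.

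Plugging these two estimates into Bernstein's inequality, applied conditionally on $\mathcal{F}_n^X$ on the event $\mathcal{E}_1$, delivers with conditional probability at least $1-2\delta$ the bound $|S_h|\leq \tfrac{2L_2}{n}\log(2/\delta) + \sqrt{2L_1\sigma_h^2/n\cdot\log(2/\delta)}$; combined with the bias bound on $\mathcal{E}_1$, a union bound yields the claim with probability at least $1-3\delta$. The main obstacle is the uniform radius control in step two: one must tune the constant in $\tau^*$ precisely so that the final $L_0,L_1,L_2$ come out as stated, and carefully exploit the fact that conditioning on $\mathcal{F}_n^X$ simultaneously freezes both $\hat\tau_{n,k,x}$ and $N_k(x)$, isolating the independent label randomness $(Y_i)$ on which Bernstein then acts.
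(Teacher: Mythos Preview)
Your proposal is correct and follows essentially the same approach as the paper: the same bias--variance decomposition $B_h+S_h$, the same uniform $k$-NN radius bound $\hat\tau_{n,k,x}\le (2k/(cb_XV_dn))^{1/d}$ (which the paper imports as \citet[Lemma~4]{portier2021nearest} rather than re-deriving via covering plus Chernoff), and the same conditional Bernstein argument on the centered terms. Your $U_i$ is the paper's $\hat Y_i/k$ and your bound $A_i\le U_XV_d(\tau^*)^d$ is exactly the paper's $Q_X(B(X_i,\hat\tau_k))\le M_{2,d}\hat\tau_k^d$, so the constants $L_0,L_1,L_2$ come out the same; the $\delta$-bookkeeping is also handled the same way (radius event plus Bernstein plus bias, union-bounded to $3\delta$).
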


\paragraph{Notes.} 

 \noindent (i) The proof needs a bound on $\sup_{x\in S_X}\hat{\tau}_{n,k,x}$ which is given in
\citep[Lemma 4]{portier2021nearest}. 
For this we need that $k$ grows logarithmically w.r.t. $n$ as stated in the assumptions.

\noindent (ii)  The sum of the two last terms in the upper-bound, which corresponds to the variance of our estimator, is of order $1/\sqrt{n}$ and the conditional variance of $h$ appears as a multiplicative factor. Combining Proposition~\ref{expsampling} and Proposition~\ref{Portier+}, we finally obtain that $\hat Q^* (h) - Q(h) $ is of order $1/\sqrt m + 1/ \sqrt n + (k/n ) ^{1/d}$ (up to log factors).

\noindent (iii) In Corollary~\ref{mainpoint} in Appendix~\ref{app:sec:cor_kNN_error_bound}, the upper bound given in Proposition~\ref{expsampling} is refined using Proposition~\ref{Portier+} that allows to (roughly speaking) replace the empirical variance by the true variance.

\section{Applications to empirical risk minimization}\label{submit}

In this section, we illustrate our results with some applications to empirical risk minimization.  This is of particular interest in our context as the optimal linear model for the source distribution might be different from the ideal linear model for the target. In such a case, using covariate adaptation is always better as the source minimizer will be away from the target minimizer. 

\subsection{Mathematical background}

Suppose that $\mathcal{R}^{*}_{m,n}(\theta)=\frac{1}{m}\sum_{i=1}^m m_{\theta}\left({Y}^{*}_{n,i},X_i^{*}\right)$, where
for each $\theta\in\Theta\subset \R^d$, $m_{\theta}$ is a measurable function from $\R\times \R^p$ to $\R$. 
Set 
$$\hat{\theta}^{*}\in \arg\min_{\theta\in\Theta}\mathcal{R}^{*}_{m,n}(\theta).$$
Similarly, we define
$$\theta^{*}:=\arg\min_{\theta\in\Theta}\mathcal{R}^{*}\left(\theta\right)$$
with $\mathcal{R}^{*}(\theta)=\E m_{\theta}\left(Y^{*},X^{*}\right)$ and $(Y^{*},X^{*})$ is a copy of $(Y_i^{*},X_i^{*})$. Note that the expected value is taken for the unobserved label $Y_i^{*}$ and not the generated label ${Y}_{n,i}^{*}$.
We assume here that for a reference measure $\mu$ on $\mathcal{Y}$, there exists for each $x\in\mathcal{X}$ a conditional density $p(\cdot\vert x)$ such that $(x,y)\mapsto p(y\vert x)$ is jointly measurable and for any Borel set $B$, 
$$\P\left(Y\in B\vert X=x\right)=\int_B p(y\vert x)\mu(dy).$$
One can then include the case of classification ($\mu=\delta_0+\delta_1$), counts ($\mu$ is the counting measure on the set of nonnegative integers)
or regression ($\mu$ is the Lebesgue measure on $\R$).

\subsection{Consistency of general empirical risk minimizers}

We will use the following assumptions. 

\begin{enumerate}[label=(A\arabic*) , wide=0.5em,  leftmargin=*]
  \item  \label{cond:reg_wcv1}
There exist a measurable function $h:\R\rightarrow \R_+$ and $\eta:\R_+\rightarrow \R_+$ satisfying
\begin{align*}
&\sup_{x\in\mathcal{X}}\sup_{\theta\in\Theta}\left\vert m_{\theta}(y,x)\right\vert\leq h(y),\\
&\sup_{x\in\mathcal{X}}\sup_{\vert \theta-\theta'\vert\leq \delta}\left\vert m_{\theta}(y,x)-m_{\theta'}(y,x)\right\vert\leq h(y)\eta(\delta),
\end{align*}
and such that $\E\left[h(Y)^2\vert X\right]$ is a bounded random variable and $\lim_{\delta \rightarrow 0}\eta(\delta)=0$.
\item \label{cond:reg_wcv2}
There exists a measurable function $g_h:\mathcal{X}\rightarrow \R_+$ such that $\int g_h(x) Q_X(dx)<\infty$ and 
$$\int h(y)\left\vert p(y\vert x+u)-p(y\vert x)\right\vert \mu(dy)\leq g_h(x)\vert u\vert,\quad (x,x+u)\in S_X^2.$$
\end{enumerate}

The above assumptions are satisfied, for instance, in the logistic regression framework with compact covariates. In this case, $h$ is a constant function and $\eta(\delta)=\delta$. Note also that $p(1\vert x)$ could be different form $\left(1+\exp\left(-x^T\theta\right)\right)^{-1}$ as soon as $x\mapsto p(1\vert x)\in (0,1)$ is Lipschitz on $S_X$.

In what follows, an assertion of the form $X_{m,n}=o_{\P}(1)$ as $m,n\rightarrow \infty$ means that for any $\epsilon,\zeta >0$, there exists $A>0$ such that 
$$\min(m,n)\geq A \Rightarrow \P\left(\vert X_{m,n}\vert>\epsilon\right)\leq \zeta.$$
Additionally, the assertion $X_{m,n}=O_{\P}(1)$ means that for any $\varepsilon>0$, there exist $A,M>0$ such that
$$\sup_{m,n\geq A}\P\left(\vert X_{m,n}\vert>M\right)\leq \varepsilon.$$
The proof of the following result is in Appendix~\ref{app:sec:notsomuch1_proof}. 

\begin{theo}\label{notsomuch1}
Suppose that Assumptions \ref{cond:reg0}, \ref{cond:reg1}, \ref{cond:reg2}, \ref{cond:reg3}, and \ref{cond:reg_wcv1}, \ref{cond:reg_wcv2} hold true with a compact subset $\Theta$ of $\R^d$ and the unique minimizer $\theta^{*}$ of $M$. Then $\hat{\theta}^{*}-\theta^{*}= o_{\P}(1)$ as $m,n\rightarrow \infty$.
Moreover, the excess risk satisfies $
\mathcal{R}^{*}(\hat{\theta}^{*})-\mathcal{R}^{*}(\theta^{*})=o_{\P}(1)$.
\end{theo}

\subsection{Convergence rate for linear least-squares estimators}

We now illustrate our results with an upper-bound on the excess risk for linear least-squares estimators in the misspecified case. Here, the targeted risk is given by
$$\mathcal{R}^{*}(\theta)=\E\left[\left( Y^{*}-{X^{*}}^T\theta\right)^2\right]$$
and any optimal linear rule should simply be satisfied:
$$\theta^{*}\in \arg\min_{\theta \in \R^d}\mathcal{R}^{*}(\theta).$$
 Note that $\theta^*$ is unique  the matrix $\E\left[X^{*}{X^*}^T\right]$ is of full rank.
The empirical risk is defined by
$$\mathcal{R}^{*}_m(\theta)=\frac{1}{m}\sum_{i=1}^m \left({Y}^{*}_{n,i}-{X_i^{*}}^T\theta\right)^2$$
and $\hat{\theta}^{*}$, the empirical risk minimizer, is given by 
$$\hat{\theta}^{*} \in  \arg\min_{\theta \in \R^d}\mathcal{R}^{*}_m (\theta)  .
$$
The excess risk satisfies the following upper bound whose proof is given in Appendix~\ref{app:sec:notsomuch1_proof}. 

\begin{theo}\label{notsomuch}
Suppose that Assumptions \ref{cond:reg0}, \ref{cond:reg1}, \ref{cond:reg2}, \ref{cond:reg3} hold true. Suppose that the mapping $x\mapsto \E\left[Y\vert X=x\right]$ is Lipschitz and that the conditional expectation $\E[Y^2\vert X]$ is bounded. Suppose also that $\Gamma=\E[X^{*}{X^{*}}^T]$ is positive definite. Then, we have 
$$  \mathcal{R}^{*}(\hat{\theta}^*) -  \mathcal{R}^{*}(\theta^*)=O_{\P}\left(
m^{-1} + n^{-1} + n^{-2/d}
\right).$$
\end{theo}

\paragraph{Notes.}  The assumptions do not require the linear model for the $(X_i,Y_i)$'s to be valid, i.e., one can consider cases where $ \E [Y|X] $ is not linear.
Also, when the source data follows a non-linear model of the form $Y=r\left(X\right)+\varepsilon$ where 
$\varepsilon$ and $X$ are independent, our regularity assumptions means that $r$ is Lipschitz on the compact set $S_X$.

\section{Related work}\label{sec:related}

A standard approach to covariate shift problems is to use some re-weighting in order to ``transfer'' the source distribution with density $p_X$ to the target distribution with density $q_X$. This approach relies on the following type of estimates:
\begin{equation*}
\hat Q_w(h) = n^{-1} \sum_{i=1} ^n  w(X_i) h(X_i, Y_i) ,
\end{equation*}
where ideally the function $w$ would  take the form $q_X/p_X$. Such a choice has the nice property that the expected value $\E [ w(X_i) h(X_i, Y_i)  ] $ is equal to the targeted quantity $Q(h)$. This however cannot be directly computed as $p_X$ and $q_X$ are unknown in practice. There are actually different ways to estimate $w$, and our goal here is to  distinguish between two leading approaches.

\paragraph{Plug-in approach}
The plug-in approach is when the weights are computed using two estimates $\hat p_X$ and $\hat q_X$ in place of $p_X$ and $q_X$, respectively; i.e., simply use $\hat w = \hat q_X/ \hat p_X$ instead $w$ in the above formula, see for instance \citep{shimodaira2000improving,sugiyama2007direct,sugiyama2008direct}. Note that the selection of hyper-parameters for $\hat q_X$ and $\hat p_X$ is needed and the $n$ evaluation $(\hat q_X (X_i) , \hat p_X(X_i)) $ might be heavy in terms of computing time.

For the sake of clarity, we focus on a specific instance of covariate shift problem in which the target probability density $q_X$ is known and ${\hat p}$ is the kernel density estimate (KDE), i.e., 
$ \hat p_X ^{KDE} (x) = (1/n) \sum_{i=1} ^n K_b (x-X_i)$,  where $K_b$ typically is a Gaussian density with mean $0$ and variance $b^2$ (a hyper-parameter to be tuned). Note that such a situation does not involve any changes for our sampling procedure whereas it is clearly advantageous for the weighted approach for which one unknown, $q_X$, is now given.
 In this case, the analysis of $\hat Q_w(h)$ can be carried out using the decomposition
$ \hat Q_{\hat w}(h) -Q(h) =   \hat Q_{\hat w}(h- Th) +   \hat Q_{\hat w}( Th ) -  Q(h)$,
 with $Th (x) = \E [ h(X,Y) |X = x]$. The first term above is a sum of centered random variable which (provided some conditions) satisfies the so-called  Lindeberg condition so that the central limit theorem implies that $ \sqrt n   \hat Q_{\hat w}(h- Th) $ is asymptotically Gaussian. The second term above is more complicated and the analysis can be derived using results in  \cite{delyon2016integral,clemenccon2018beating}. Those results assert (under some conditions) that $ \hat Q_{\hat w}( Th ) -  Q(h) = O_p (nb^d + b )$ (in case $p_X$ is Lipschitz ). As a consequence, we obtain, optimizing over $b$, that
$ \hat Q_{\hat w}(h) -Q(h) =   O_p(n^{-1/2} + n^{-1/(1+d)} )$. 
 This is easily compared to our bound, when $q_X$ is known, $ n^{-1/2} + n ^{-1/d}$, which is smaller than the one given before.

\paragraph{Direct weight estimation}

\citet{huang2006kmm} proposed \emph{Kernel Mean Matching (KMM)} for estimating the ratios of the probability density functions of the source and the target distribution. They used the estimated ratios for weighting the source sample.
\citet{gretton2008kmm} further studied this method theoretically and empirically.
\citet{sugiyama2007direct,sugiyama2008direct} proposed a method that estimates the ratios as a function by minimizing the Kullback-Leibler divergence between the source density function multiplied by the ratio function and the target density function.
The estimated function can predict ratios even outside of the source sample, which enables cross-validation for hyper-parameter tuning.
\citet{kanamori2009ulsif} proposed constrained and unconstrained least squares methods for estimating the ratio function
called \emph{Least-Squares Importance Fitting (LSIF)} and \emph{unconstrained LSIF (uLSIF)}.
\citet{yamada2013rulsif} developed its variant called \emph{Relative uLSIF (RuLISF)}, which replaces the denominator of the ratio with a convex mixture of the source and the target density functions to circumvent issues caused by near-zero denominators.
\citet{zhang_one-step_2021} proposed a covariate shift adaptation method that directly minimizes an upper bound of the target risk in order to avoid estimation of weights.
The method shows great empirical performance while it does not exactly minimize the target risk and hence the minimizer converges to a biased solution.

\paragraph{Connection to treatment effect estimation}\label{app:covshift_and_treatment}

\newcommand\independent{\protect\mathpalette{\protect\independenT}{\perp}}
\def\independenT#1#2{\mathrel{\rlap{$#1#2$}\mkern2mu{#1#2}}}
One of the quantities of great interest in treatment effect estimation is the average treatment effect on the treated (ATT), $\E[Y^{(1)} - Y^{(0)} \given W = 1]$,
where $W \in \{0, 1\}$ is a treatment assignment variable, $Y^{(1)}$ and $Y^{(0)}$ are potential outcomes corresponding to the treatment $1$ and $0$.\footnote{A common scenario is that we have a treated group (represented by treatment 1) and a non-treated, or controlled group (represented by treatment 0).}
Suppose that we wish to estimate the ATT using i.i.d.\@ observations of $W$ and its outcome $Y := Y^{(W)}$ together with covariates $X$, $\{(Y_i, W_i, X_i)\}_{i=1}^N$.
Under the standard assumptions (see e.g., \citet{hernan2023causal}) including the \emph{conditional exchangeability} $Y^{(w)} \independent W \given X$, the \emph{positivity} $P(\{W = w \given X\}) > 0$, and the \emph{consistency} $W = w \implies Y^{(w)} = Y^{(W)} = Y$, for each $w \in \{0, 1\}$, the ATT equals the difference between
\begin{align}
\E[Y^{(1)} \given W = 1]
&= \int y P_{Y^{(1)} \given X, W = 1}(dy) P_{X \given W = 1}(dx)\nonumber\\
&= \int y P_{Y \given X, W = 1}(dy) P_{X \given W = 1}(dx)\nonumber\\
&= \E[Y \given W = 1] \label{eq:ygivenw1}
\end{align}
and
\begin{align}
\E[Y^{(0)} \given W = 1]
&= \int y P_{Y^{(0)} \given X, W = 0}(dy) P_{X \given W = 1}(dx)\nonumber\\
&= \int y P_{Y \given X, W = 0}(dy) \, r(x) \, P_{X \given W = 0}(dx) \nonumber\\
&= \E[ r(X) Y \given W = 0],\label{eq:yrxgivenw1}
\end{align}
where $r(x)$  is the density ratio defined such that $ r(x) {dP_{X \given W = 0}}(x) = {dP_{X \given W = 1}}(x) $.
We can easily estimate the first term $\E[Y^{(1)} \given W = 1]$ (Eq.~\eqref{eq:ygivenw1}) by the conditional sample average $\frac{1}{N_1} \sum_{i=1}^{N} Y_i \cdot \mathds{1}_{W_i = 1}$, where $N_1 := \sum_{i=1}^{N} \mathds{1}_{W_i = 1}$.
Estimating the second term $\E[Y^{(0)} \given W = 1]$ (Eq.~\eqref{eq:yrxgivenw1}) is more involved. The sample average with the condition $W = 0$, $\frac{1}{N_0} \sum_{i=1}^{M} Y_i \cdot \mathds{1}_{W_i = 0} $, where $N_0 := \sum_{i=1}^{N} \mathds{1}_{W_i = 0}$, would be biased to $\E[Y \given W = 0] \neq \E[Y^{(0)} \given W = 1]$, but the bias is only due to the change in the conditional distributions of $X$ given $W = 1$ and $X$ given $W = 0$ quantified by $r(X)$, similarly to the covariate shift (see Eq.~\eqref{eq:yrxgivenw1}).
One way to correct the bias is to use an estimate $\widehat{r}$ of the ratio $r$ for the weighted average $\frac{1}{N_0} \sum_{i=1}^{N} \widehat{r}(X_i) \cdot Y_i  \cdot \mathds{1}_{W_i = 0}$, similarly to the reweighting approach to covariate shift adaptation, leading to
the following estimate:
\begin{align*}
\widehat{\operatorname{ATT}}
&= \frac{1}{N_1} \sum_{i\colon W_i = 1} Y_i - \frac{1}{N_0} \sum_{i\colon W_i = 0} \widehat{r}(X_i) \cdot Y_i.
\end{align*}

Another popular approach is the nearest neighbor matching~\citet{abadie2006large}. See also \citet{Ros} for a broad introduction to matching problems for evaluating treatment effects.
In \citet{abadie2006large}, the ATT is estimated by 
\[
\overline{\mbox{ATT}}=\frac{1}{N_1}\sum_{i\colon W_i = 1} \left[Y_i-\hat{Y}_i(0)\right],
\]
where $\hat{Y}_i(0) = \frac{1}{k}\sum_{j=1}^N Y_j (1- W_j) 1 _ {\{ \| X_i -X_j\| \leq \tau_{n,k,X_i} \}}   $ is the average of $Y_j$'s over the $k$ first NNs of $X_i$ in the untreated group.
The estimator takes the form
\[
\overline{\mbox{ATT}}=\frac{1}{N_1}\sum_{i\colon W_i = 1} Y_i  - \frac{1}{N_1}\sum_{i\colon W_i = 0} \frac{K_k(i)}{k} Y_i,
\]
where $K_k(i) = \sum_{j = 1 } ^N W_j 1_{ \| X_i - X_j\| \leq \tau_{n,k,X_j}} $ is the number of times observation $i$ is used as a match, i.e., the number of times observation $i$ is among the $k$ NNs of variables in the treated group.
Note that $\overline{\mbox{ATT}}$ coincides with $\widehat{\operatorname{ATT}}$ if $\hat{r}(X_i)$ is defined as $\frac{K_k(i)N_0}{k N_1}$. Recently, \citet{lin2023estimation} showed that the latter quantity can be indeed interpreted as an estimate of the density ratio but its consistency requires $k\rightarrow \infty$ while \citet{abadie2006large} considered a fixed value of $k$, as in our problem.
To see an analogy with our method, one can consider the case in which $h$ does not depend on $x$, i.e.\@ $h(y,x)=g(y)$ for some function $g$. Using the notation from the present paper ($W = 0$ and $W = 1$ indicate the target and the source domain, respectively, with $N_0 = n$ and $N_1 = m$) the second term of $\overline{\textrm{ATT}}$ above generalizes to the form
\begin{equation}\label{their}
\frac{1}{m}\sum_{i=1}^n\frac{K_k(i)}{k}g(Y_i)
= \frac{1}{m}\sum_{i=1}^m \int  g( y ) \hat P_ n (dy | X_i^*)
\end{equation}
with $K_k(i) = \sum_{j = 1 } ^m  1 _ {\{ \| X_i - X_j^*\| \leq \tau_{n,k,X_j^*} \}} $. The previous estimate corresponds to the one introduced in the notes following Proposition~\ref{expsampling}.
On the other hand, our estimator applied to this case is given by
\begin{equation}\label{ours}
 \frac{1}{m}\sum_{i=1}^m  g(Y_{n,i}^*)
\end{equation}
 Both estimators are different when $k>1$ but they coincide as soon as $ k = 1$. In fact, $  \hat P_n (dy | X_i^*) $ has one single atom when $k = 1$, so that sampling from it and evaluating the average are the same. Here are a few remarks.
\begin{itemize}
    \item When $k>1$, Eq.~\eqref{ours} requires fewer evaluations of $g$ than Eq.~\eqref{their}. This is relevant when evaluation of the function is time-consuming or costly such as observation from physical experiments.
    \item Our theoretical analysis is rather different from that of \citet{abadie2006large}. Since they rely on the expression in the left side of Eq.~\eqref{their}, it is unclear whether they can or not handle the case when $h$ depends on $x$ (required for prediction purpose). In contrast, our approach is based on the decomposition given in Section~\ref{sec:sampling}, with sampling error and estimation error, leveraging $\int \int  g( y ) \hat P_n (dy | x) Q(dx) $ as a centering term. Our results are more general because they include the case when $h$ depends on $x$ and also we can deal with both estimates \eqref{their} and \eqref{ours} in the meantime, as mentioned in the notes following Proposition~\ref{expsampling}). Moreover, Proposition~\ref{lionel_prop} implies a lower bound for Eq.~\eqref{their} and we believe this result to be new in treatment effect literature.     
    
     
\end{itemize}

\paragraph{Other references}
The idea of nonparametric sampling is a standard one in the field of texture synthesis. In particular, the choice of 1-NN resampling was often used as a fast method to generate new textures from a small sample. See \citet{truquet2011nonparametric} for a literature review in this context.
Our conditional sampling framework bears resemblance with traditional bootstrap sampling as there is random generation according to some estimated distribution. In contrast, the original bootstrap method is usually made up using draws from the standard empirical measure $(1/n) \sum_{i=1} ^n \delta_{X_i,Y_i}$. Here another distribution, $\hat P $, has been used to generate new samples. Moreover, our goal is totally different here. While the  bootstrap technique was initially introduced for making inference, here the goal is to estimate an unknown quantity $Q(h)$ which appears in many machine learning tasks.
\citet{kpotufe2021covshift} theoretically study covariate shift adaptation under the assumption that we have access to a labeled sample both from the source and the target distribution.
Although they consider a $k$-nearest-neighbor-based method, it is essentially different from ours since they perform the $k$-NN method on the union of the source and the target sample.
\citet{lee2013pseudo} proposed pseudo-labeling unlabeled data in the context of semi-supervised learning.
\citet{wang_pseudo-labeling_2023} proposed a hyper-parameter selection method for kernel ridge regression under covariate shift using pseudo-labeling.
The author focuses on model selection in regression problems while we study the mean estimation that can be applied to a wider range of supervised learning problems.

\section{Extensions}\label{sec:possible_extensions}
Several ways to extend our method beyond the mean estimation problem are considered in this section.

\paragraph{Heterogeneity in target distributions}\label{submit2}

The case where the target covariates distribution $Q_{X}$ changes across the data might be of interest if one wishes to aggregate several pieces of target data whose covariates distributions are not necessarily the same.
This might occur when the target data is obtained by gathering individuals from  different countries, and consequently, the distributions are not the same anymore or when the time between the measurements has caused some changes in the distribution.


While such an heterogeneity in target data might be seen as more complicated at first glance, it actually can be examined using a similar decomposition and the same tools as the one used to obtain the non-asymptotic bound in Theorem~\ref{global}. More formally, the target distribution is here  $ Q = (1/m) \sum_{i=1}^m Q_{i} $ with  $ Q_i =  P_{Y|X} Q_{X,i}$. For each $m\geq 1$ and  $n\geq 1$, 
let $(X_{i}^*, Y_{n,i}^*)_{1\leq i\leq m}$ be a collection of 
random variables conditionally independent given $(X_{i}, Y_{i})_{i=1}^{n}$ and such that for each $i = 1,\ldots, m $, 
$ (X_{i}^*, Y_{n,i}) \sim \hat Q_{i,n} $ with $\hat Q_{i,n} = \hat P_{Y|X} Q_{X,i} $.  The quantity of interest and the proposed estimator are therefore slightly different from before, given by, respectively,
$$Q (h) = m^{-1} \sum_{i = 1} ^ m Q_{i} ( h), \qquad \hat Q^* (h) = m^{-1} \sum_{i = 1} ^ m h (X_{i}^*, Y_{n,i}^*) . $$
The decomposition is 
$$\hat Q^* (h) -  Q (h) = m^{-1} \sum_{i = 1} ^ m \left\{h (X_{i}^*, Y_{n,i}^*) -  \hat Q_{i,n}(h) \right\} +m^{-1} \sum_{i = 1} ^ m \left\{ \hat Q_{i,n}(h) -  Q _i (h) \right\}.   $$
The non-asymptotic analysis of the sampling error is similar to before as the Bernstein inequality is tailored to non-identically distributed variables. We obtain that the rate $ O ( m^{-1/2} )$
as before by simply requiring a bound on the variance of each random variables. The other term concerning the conditional distribution can be analyzed by writing
\[
\left\lVert m^{-1} \sum_{i = 1} ^ m \left\{ \hat Q_{i,n}(h) -  Q_i (h) \right\} \right\rVert_{L_2} \leq m^{-1} \sum_{i = 1} ^ m \lVert  \hat Q_{i,n}(h) -  Q_i (h) \rVert_{L_2}
\]
and therefore we can directly apply Proposition  3 (given the assumptions are satisfied for each $i$ uniformly). We finally obtain the rate
$ m^{-1/2}+ n^{ - 1/d}  + {n^{ - 1/2}} $, similar to the one obtained before.

\paragraph{Stochastic gradient descent}

Our sampling approach can be easily combined with the well-known stochastic gradient descent algorithm (and more generally with stochastic approximation) where only a small part of the data is used at each step to update the estimator. This particular property allows to require a small number of operations at each iteration (in contrast with gradient based optimization).
%

To illustrate this idea, consider the empirical risk minimization problem described in Section~\ref{submit} where one is interested in solving $\min _{\theta }\{  R^* (\theta): = \mathbb{E} [m_\theta(Y^*, X^*)]\}$ where $\theta \mapsto m_\theta(y,x)$ is differentiable. Suppose that $n $ source samples have been obtained making the conditional distribution $\hat P_{Y|X} $ available for sampling new points. Then the algorithm at step $i\geq 1$, might proceed by first generating $X^*_{i} $ and then $Y_{n,i}^* \sim \hat P_{Y|X_i^*} $. This means finding the nearest neighbor to $X_i^*$ among the source data and represents only $\log(n)$  operations using the $kd$-tree. Having this been done, the update is simply
$$ \theta_ {i} = \theta_{i-1} - \gamma_{i} \nabla_\theta m_\theta(Y^*_{n,i}, X^*_{i}).$$
It results that each iteration in the above is similar to standard stochastic gradient descent, the only difference being the additional $1$-nearest neighbor search. We stress that this is contrasting with the re-weighting approach for which a new sample, say $X_{i}$, would require evaluating $ \hat p_X(X_{i})$ and therefore would need to compute all $n$ distances between $X_j$, $j=1,\ldots, n$ and  the new $X_{i}$.

\paragraph{Semiparametric estimation}
Simulating the labels to obtain a new sample is also convenient  in semiparametric problems where quantities of interest often involve additional estimated parameters. Typical semiparametric problems involve expectations of functions that are indexed by an unknown parameter, $\E [ h_\theta (X,Y) ]$, and $\theta$ is estimated from the data using some transformation $\hat \theta $ of the sample. In such a situation, while estimating $\theta$ using reweighting is unclear without more information on $\hat \theta$, one can directly use our sampling approach by introducing
$m^{-1}\sum_{j=1}^m h_{\hat{\theta}^{*}}\left(Y_j^{*},X_j^{*}\right)$ where 
$\hat{\theta}^{*}=\hat{\theta}\left(X_1^{*},Y_1^{*},\ldots,X_m^{*},Y_m^{*}\right)$. This allows to obtain a semiparametric estimate with covariate shift adaptation. See \citet{van2000asymptotic}, Chapters $19.4$ and $25.8$ for more details and examples in parametric or semiparametric estimation.

\section{Experiments}\label{sec:experiments}
The main purpose of the experiments is to compare our $k$-NN-CSA approach with several state-of-the-art competitors when facing multiple situations from mean estimation to empirical risk minimization with synthetic and real-world data.

We consider the following instances of our proposed method.
\begin{description}
\item[$1$-NN-CSA:] the \emph{Conditional Sampling Covariate-shift Adaptation (CSA)} (Algorithm~\ref{alg:general_condisample_covshift}) with $k$-Nearest Neighbor ($k$-NN) conditional sampler (Algorithm~\ref{alg:kNNcondi}) with $k = 1$.\item[$\log n$-NN-CSA:] the same as above but with $k = \log n$.
\end{description}
We use the Python module \texttt{cKDTree}~\citep{ckdtree} from SciPy~\citep{2020SciPy-NMeth} for nearest neighbor search in our methods.
We compare them with the following existing covariate-shift adaptation methods.
\begin{description}
\item[KDE-R-W (KDE-Ratio-Weighting):] the weighting method using the ratio of the Kernel Density Estimates (KDEs) of $p_X$ and $q_X$ (see Section~\ref{sec:related}).
\item[KMM-W (KMM-Weighting):] the weighting method estimating $q_X/p_X$ using the \emph{Kernel Mean Matching (KMM)}~\citep{huang2006kmm,gretton2008kmm}. We use the Gaussian kernel.
\item[KLIEP-W (KLIEP-Weighting):] the weighting method estimating $q_X/p_X$ by the \emph{Kullback-Leibler Importance Estimation Procedure (KLIEP)}~\citep{sugiyama2007direct,sugiyama2008direct}.
The linear combination of the Gaussian basis functions centered at the sample points are used for modeling the weight function.
\item[KLIEP100-W]: the same as KLIEP-W but with 100 randomly subsampled basis functions~\citep{sugiyama2007direct,sugiyama2008direct} for reducing the time- and space-complexities.
\item[RuLSIF-W (RuLSIF-Weighting):] the weighting method using $q_X/(\alpha p_X + (1-\alpha)q_X)$ estimated by \emph{Relative unconstrained Least-Squares Importance Fitting (RuLSIF)}~\citep{yamada2013rulsif}, where $\alpha \in [0, 1]$ is a hyper-parameter.
We use the default value $\alpha = 0.1$.
As a model of the weight function, the Gaussian basis functions centered at the sample points are used.
\item[RuLSIF100-W:] the same as RuLSIF-W but with 100 randomly subsampled basis functions~\citep{yamada2013rulsif} for reducing the time- and space-complexities.
\end{description}
See Section~\ref{sec:related} for more explanations of those methods.
For KMM-W, KLIEP-W, and RuLSIF-W, we used the implementations from \emph{Awesome Domain Adaptation Python Toolbox (ADAPT)}~\citep{demathelin2021adapt}. All the computations were performed on the cluster, \emph{Grid5000}~\citep{grid5000}.
For the methods using Gaussian basis functions (KLIEP-W, KLIEP100-W, RuLSIF-W, RuLSIF100-W), we use 5-fold cross-validation for choosing the Gaussian bandwidth from $\{0.001, 0.01, 0.1, 1, 10\}$.
KMM-W does not offer a way to do cross-validation, and we fixed to $1$.
More details are in the supplementary material.

Furthermore, we also report the results for the following baseline method and ideal method.
\begin{description}
    \item[NoCorrection:] the method that takes the average $\frac{1}{n} \sum_{i=1}^{n} h(X_{i}, Y_{i})$ only using the source sample $(X_{i}, Y_{i})_{i=1}^{n} \sim P^{m}$, ignoring the target sample.
    \item[OracleY:] the result for taking the average $\frac{1}{m} \sum_{i=1}^{m} h(X^{*}_{i}, Y^{\circ}_{i})$ using a sample $(X^{*}_{i}, Y^{\circ}_{i})_{i=1}^{m} \sim Q^{m}$. Note that $Y^{\circ}_{i}$ are not available in practical scenarios of our interest and made invisible to other methods.
\end{description}
We conduct experiments in three setups, detailed below, with different sample sizes $n$ ($= m$) and data dimensionalities $d$: $(n, d) \in \{50, 100, 500, 1000, 5000, 10000\} \times \{5, 10\}$. Each experiment is repeated 50 times with different random seeds.

\paragraph{Setup of Experiment E1 (mean estimation with synthetic data):} 
The task here is to estimate $Q(h) = \iint y\, P_{Y|X}(dy \given x) Q_X (dx)$ under the following setup.
We define $h$ by $h(x, y) = y$, $P_X$ as the uniform distribution over $[-1, 1]^d$, $Q_X$ as that over $[0, 1] \times [-1, 1]^{d-1}$, and $P_{Y|X=x}$ as the normal distribution with mean $x$ and variance $0.1$.
Figure~\ref{app:fig:data0} in Appendix~\ref{app:moreplots} shows an illustration of the setup.
In this setup, we have
$P(h) = \iint y\, P_{Y|X}(dy \given x) P_X (dx) = 0$ while $Q(h) = \iint y\, P_{Y|X}(dy \given x) Q_X (dx) = 0.5$.
Because of this difference, covariate shift adaptation is essential for correctly estimating $Q(h)$.

\paragraph{Comparison of estimation errors for Experiment E1:}
The results are presented in Figure~\ref{fig:toy2_linear_errors}.
First, the errors for NoCorrection are not decreasing as the sample sizes increase, ending up with large errors in all cases, because of the bias due to the covariate shift.
Other methods with covariate-shift adaptation had always smaller errors than that of this baseline.
Excluding OracleY, an ideal method unavailable in practice, KLIEP100-W, KMM-W, 1-NN-CSA, and $\log n$-NN-CSA were among the best for smaller dimensionalities $d \in \{1, 2\}$ (Figures~\ref{subfig:toy2_linear_dim1_errors} and \ref{subfig:toy2_linear_dim2_errors}).
For the larger dimensionalites $d \in \{5, 10\}$, KMM-W and 1-NN-CSA outperformed other methods. In particular, 1-NN-CSA gave outstanding performances in many cases except $d = 5$ and $n \in \{100, 500, 1000\}$, for which KMM-W was even better.
The errors of most methods roughly follow power laws, where the slope of a line corresponds to the power of the convergence rate (steeper is better).
1-NN-CSA and $\log n$-NN-CSA seem to have the steepest slopes for $d = 10$, although comparison is difficult for the lower dimensionalities.

\paragraph{Comparison of running times in Experiment E1:}
Figure~\ref{fig:toy2_linear_times} shows the comparison in running times.
1- and $\log n$-NN-CSA were much faster than other methods in all cases except for $(d, n) = (10, 50)$.
Their advantage is most pronounced for larger sample sizes. For instance, 1- and $\log n$-NN-CSA were at least $100$ times faster than other methods for $(n, d) = (10000, 1)$ (Figure~\ref{subfig:toy2_linear_dim1_times}).

\begin{figure*}[tbp]
  \centering
  \begin{subfigure}[b]{0.24\textwidth}
    \centering
    \includegraphics[width=\textwidth]{./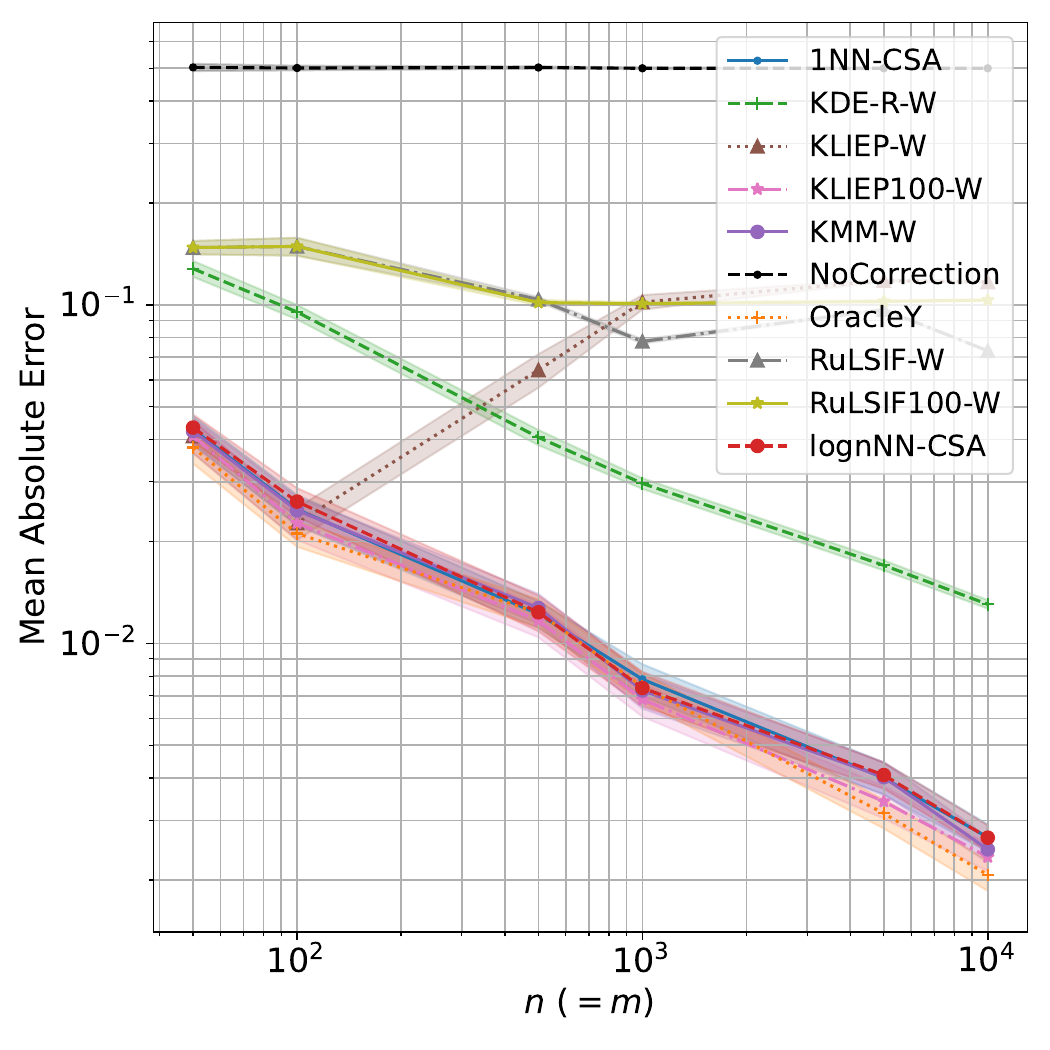}
    \caption{$d = 1$}
    \label{subfig:toy2_linear_dim1_errors}
  \end{subfigure}
  \begin{subfigure}[b]{0.24\textwidth}
    \centering
    \includegraphics[width=\textwidth]{./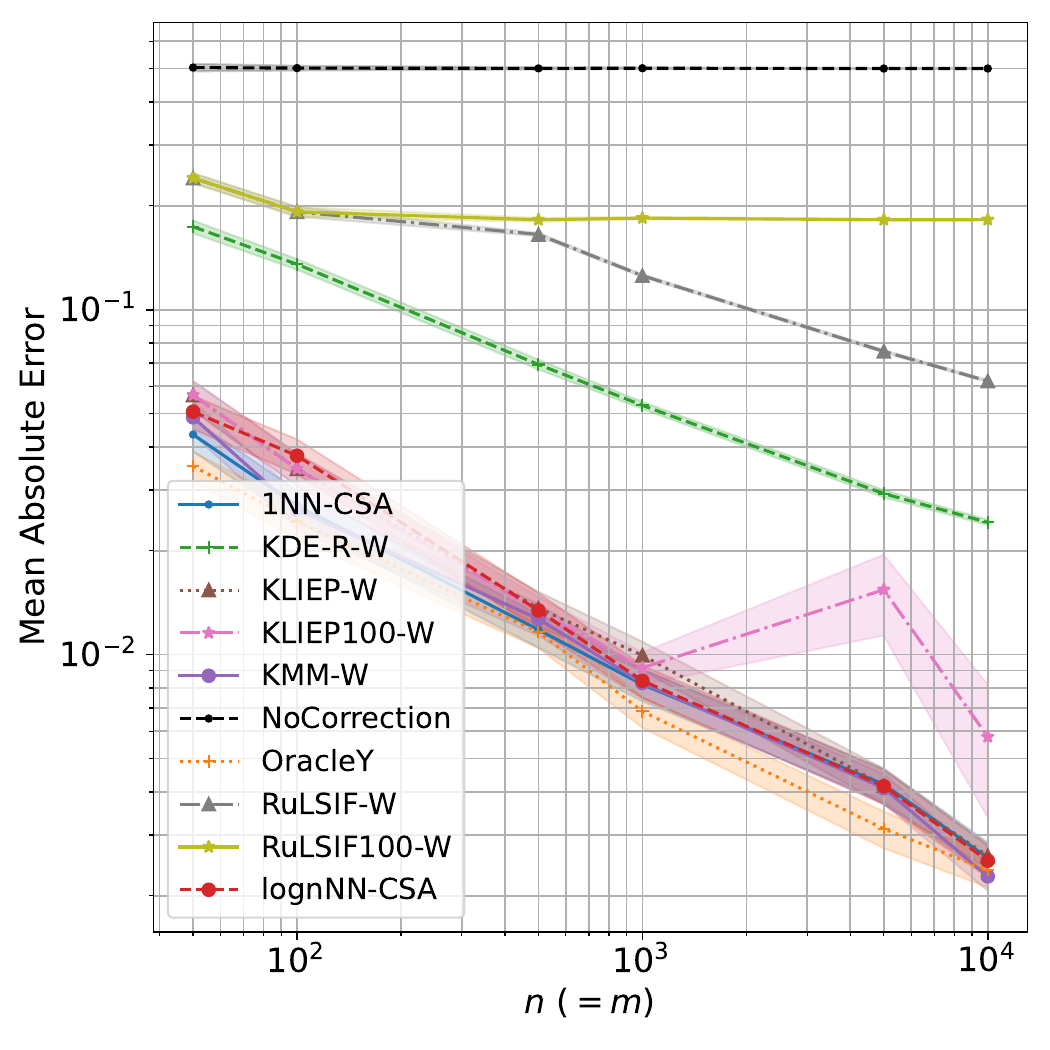}
    \caption{$d = 2$}
    \label{subfig:toy2_linear_dim2_errors}
  \end{subfigure}
  \hfill
  \begin{subfigure}[b]{0.24\textwidth}
    \centering
    \includegraphics[width=\textwidth]{./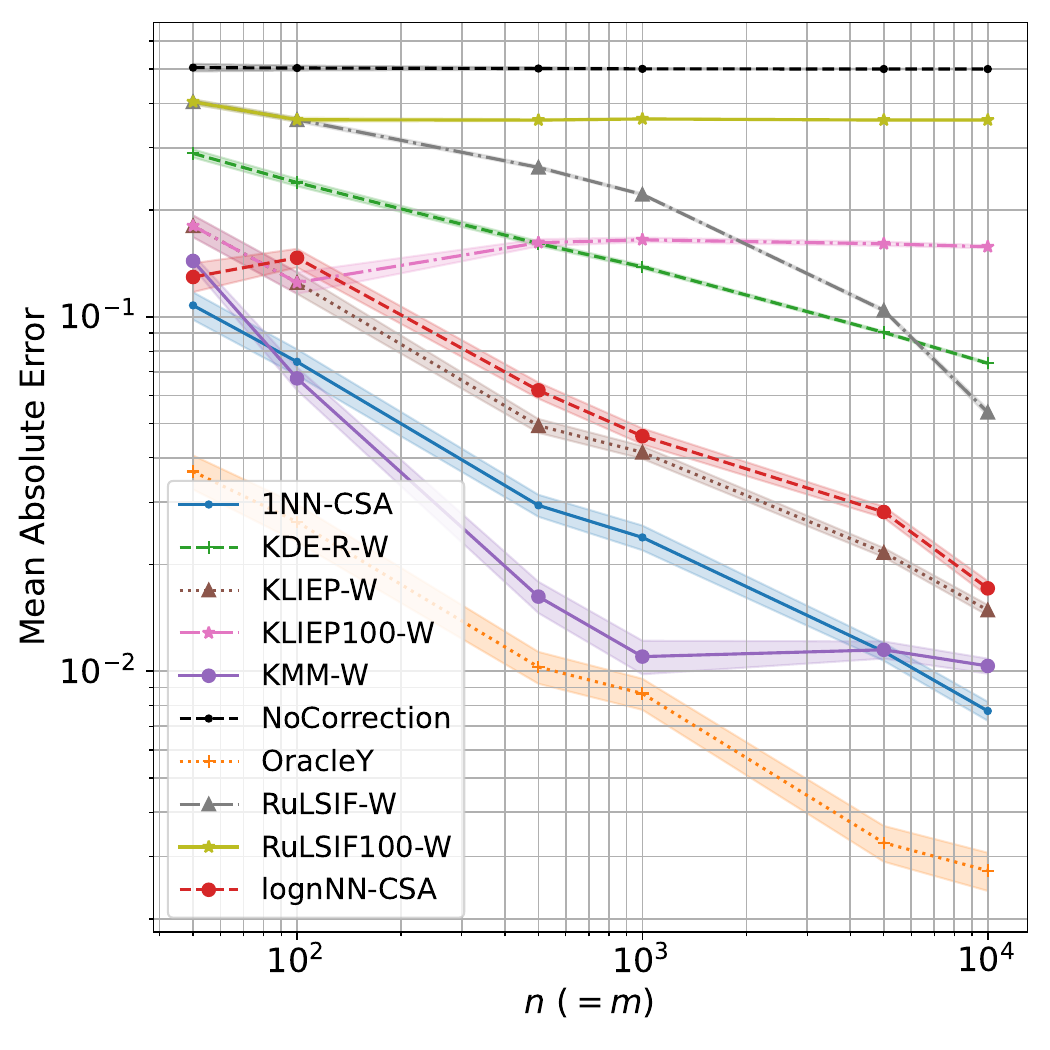}
    \caption{$d = 5$}
    \label{subfig:toy2_linear_dim5_errors}
  \end{subfigure}
  \begin{subfigure}[b]{0.24\textwidth}
    \centering
    \includegraphics[width=\textwidth]{./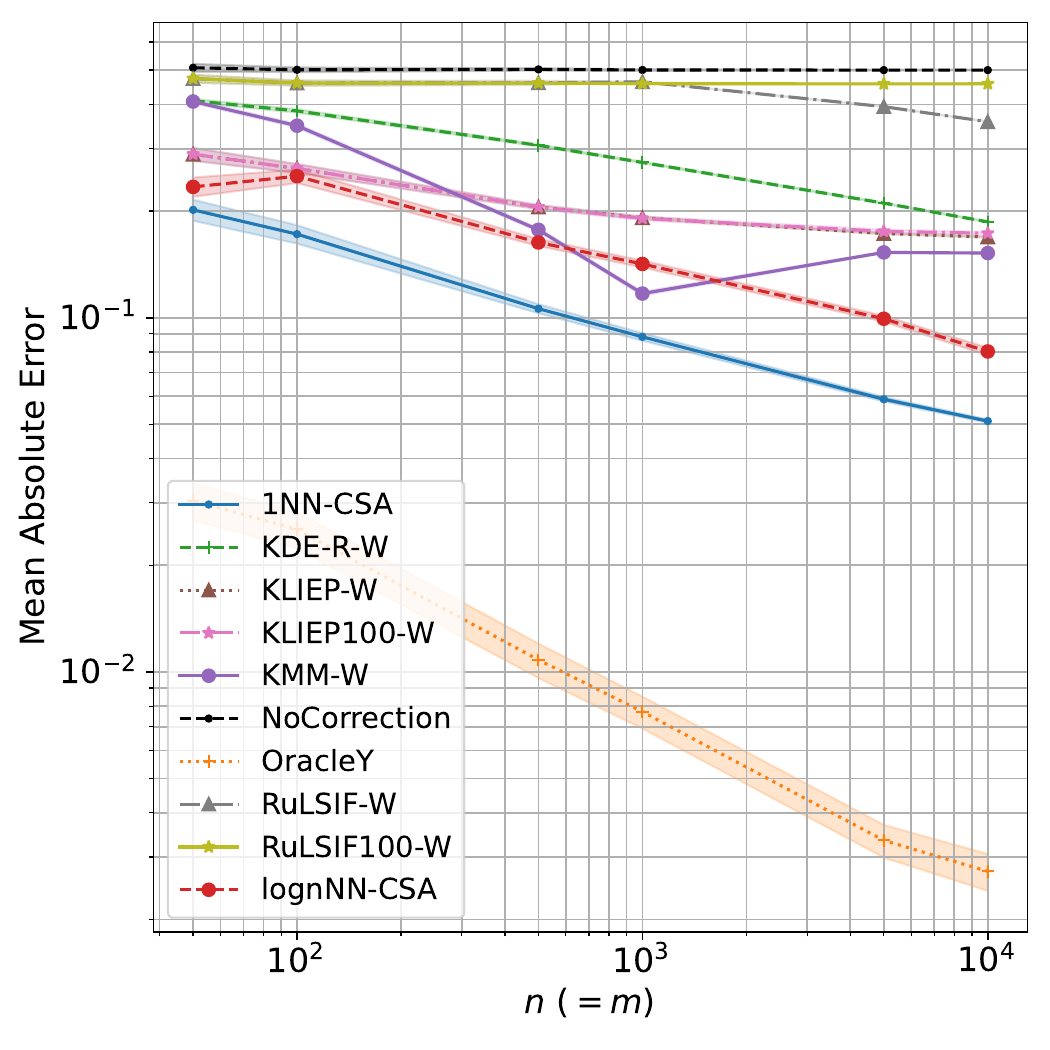}
    \caption{$d = 10$}
    \label{subfig:toy2_linear_dim10_errors}
  \end{subfigure}
  \caption{Mean Squared Errors (MSE) for Experiment E1 (estimation of $\int y\, Q(dy)$). The horizontal axis is for the sample sizes $n$ ($= m$), and the vertical axis is for the mean absolute error of each estimate.  The four figures are for different data dimensionalities.}
  \label{fig:toy2_linear_errors}
\end{figure*}

\begin{figure*}[tbp]
  \centering
  \begin{subfigure}[b]{0.24\textwidth}
    \centering
    \includegraphics[width=\textwidth]{./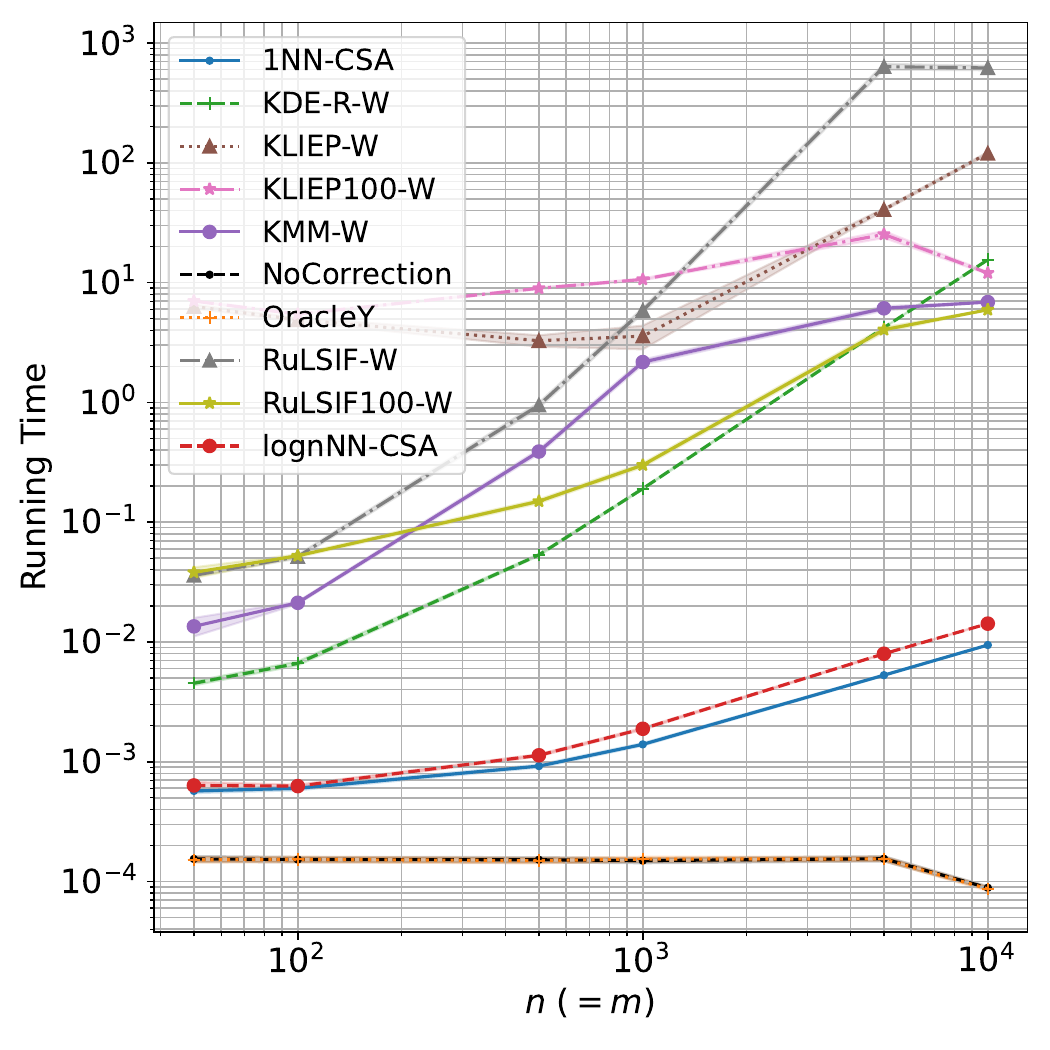}
    \caption{$d = 1$}
    \label{subfig:toy2_linear_dim1_times}
  \end{subfigure}
  \begin{subfigure}[b]{0.24\textwidth}
    \centering
    \includegraphics[width=\textwidth]{./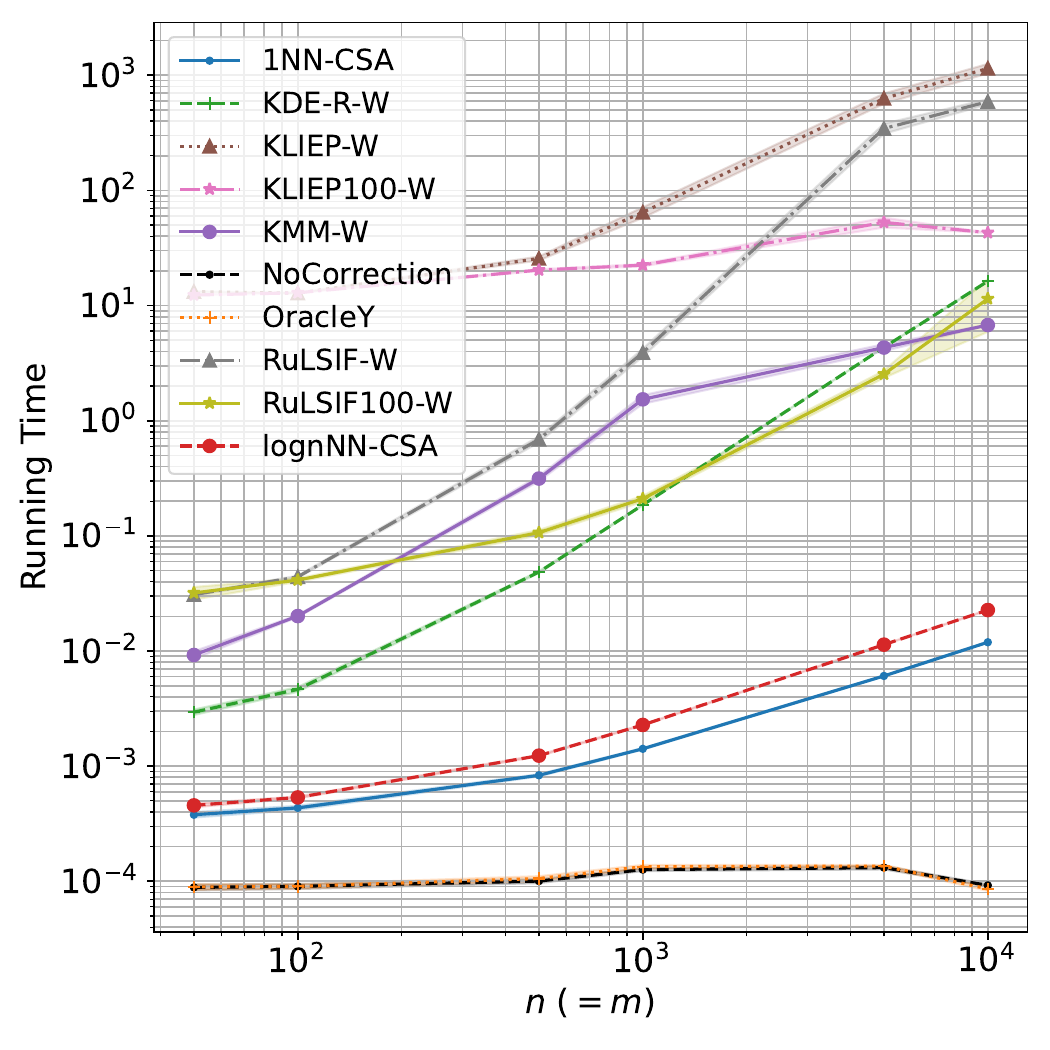}
    \caption{$d = 2$}
    \label{subfig:toy2_linear_dim2_times}
  \end{subfigure}
  \hfill
  \begin{subfigure}[b]{0.24\textwidth}
    \centering
    \includegraphics[width=\textwidth]{./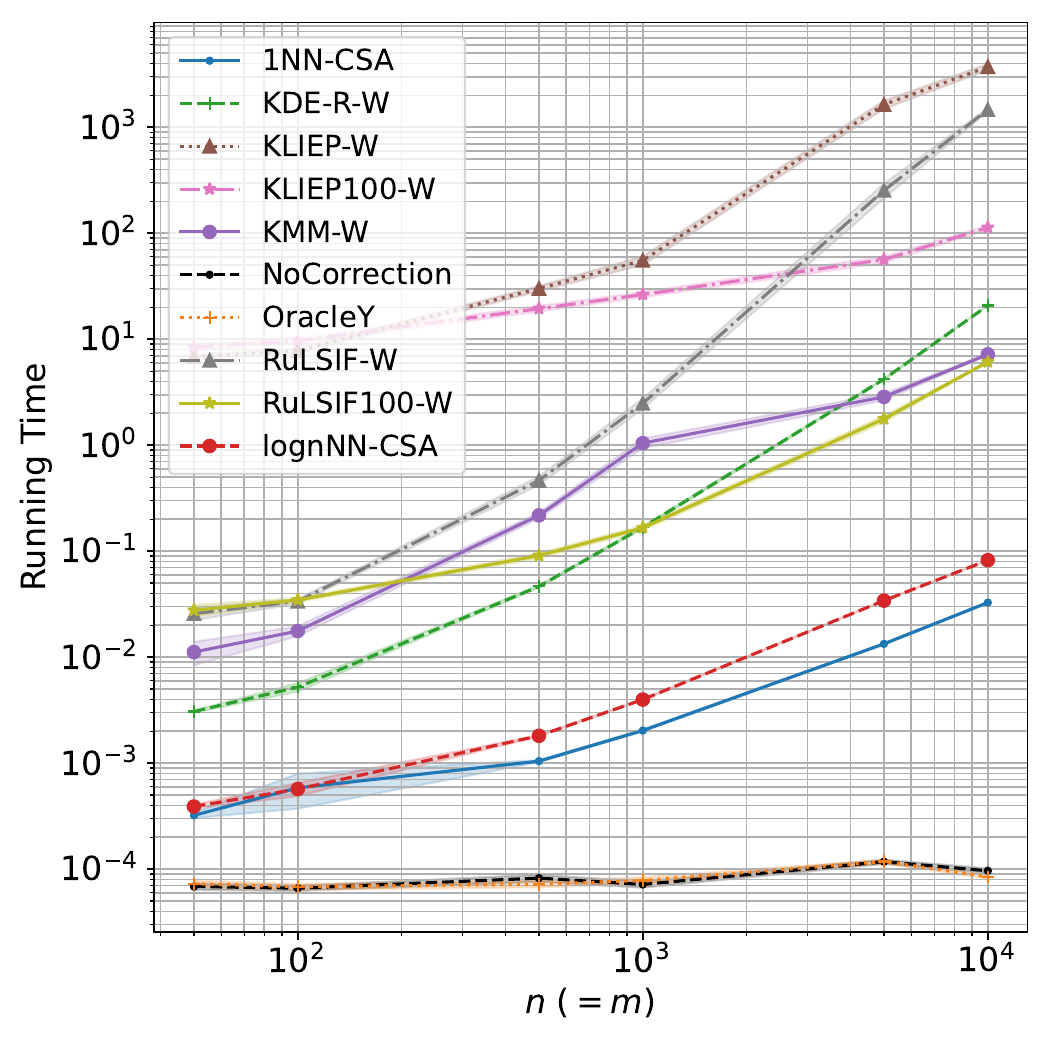}
    \caption{$d = 5$}
    \label{subfig:toy2_linear_dim5_times}
  \end{subfigure}
  \begin{subfigure}[b]{0.24\textwidth}
    \centering
    \includegraphics[width=\textwidth]{./figures/time_toy2_linear_dim10.pdf}
    \caption{$d = 10$}
    \label{subfig:toy2_linear_dim10_times}
  \end{subfigure}
  \caption{Running times for Experiment E1.  The horizontal axis is for the sample sizes $n$ ($= m$), and the vertical axis is for the mean running time of each method. The four figures are for different data dimensionalities.}
  \label{fig:toy2_linear_times}
\end{figure*}

\paragraph{Setup of Experiment E2 (risk estimation with synthetic data):} 
In this experiment, we compare the methods in the context of risk estimation of a fixed function $f_{0}$.
Setting $f_{0}\colon x \mapsto - x_1$, where $x_1$ is the first coordinate of $x$, we estimate the expected loss (i.e., risk) of $f_{0}(X)$ with the square loss in predicting the response $Y$ when $(X, Y)$ follows $Q$.
In other words, we set $h$ as $h(x, y) := (y - f_{0}(x))^2$, and the goal is to estimate the risk $Q(h) = \int (y - f_{0}(x))^2 Q(dx, dy)$.
We use the uniform distribution over $[-1, 1]^d$ for $P_{X}$ and that over $[0, 1] \times [0, 1]^{d-1}$ for $Q_{X}$.
The conditional distribution $P_{Y|X=x}$ is the normal distribution with mean $\abs{x_1}$ and variance $0.1$ for any $x \in \mathcal X := [-1, 1]^d$.
Under this setup, the function $f_{0}$ performs poorly on the support of $Q_{X}$ and should incur a large risk.
See Figure~\ref{app:fig:data1} for an illustration of the setup.
In this setup, the risks under $P_{X}$ and $Q_{X}$ largely differ  because $f_{0}$ fits $(X, Y)$ well in a half of the support of $P_{X}$ but not in that of $Q_{X}$.

\paragraph{Comparison of estimation errors for Experiment E2:}
We present the estimation errors for Experiment E2 in Figure~\ref{fig:toy2_abs_errors}.
KMM-W, 1-NN-CSA, $\log n$-NN-CSA gave similar results, almost matching those of OracleY, while KMM-W and 1-NN-CSA were advantageous for $d = 5$, and 1-NN-CSA outperformed other methods for $d = 10$.
We can notice that KDE-W, KMM-W, RuLSIF-W, and RuLSIF100-W did not always improve errors over NoCorrection (Figures~\ref{subfig:toy2_linear_dim5_errors} and \ref{subfig:toy2_linear_dim10_errors}).
 Some methods such as KMM-W and KLIEP-W showed great performance in some cases while giving poor results in other cases. In contrast, 1-NN-CSA showed stable and often best performances in these experiments.

\begin{figure*}[tbp]
  \centering
  \begin{subfigure}[b]{0.24\textwidth}
    \centering
    \includegraphics[width=\textwidth]{./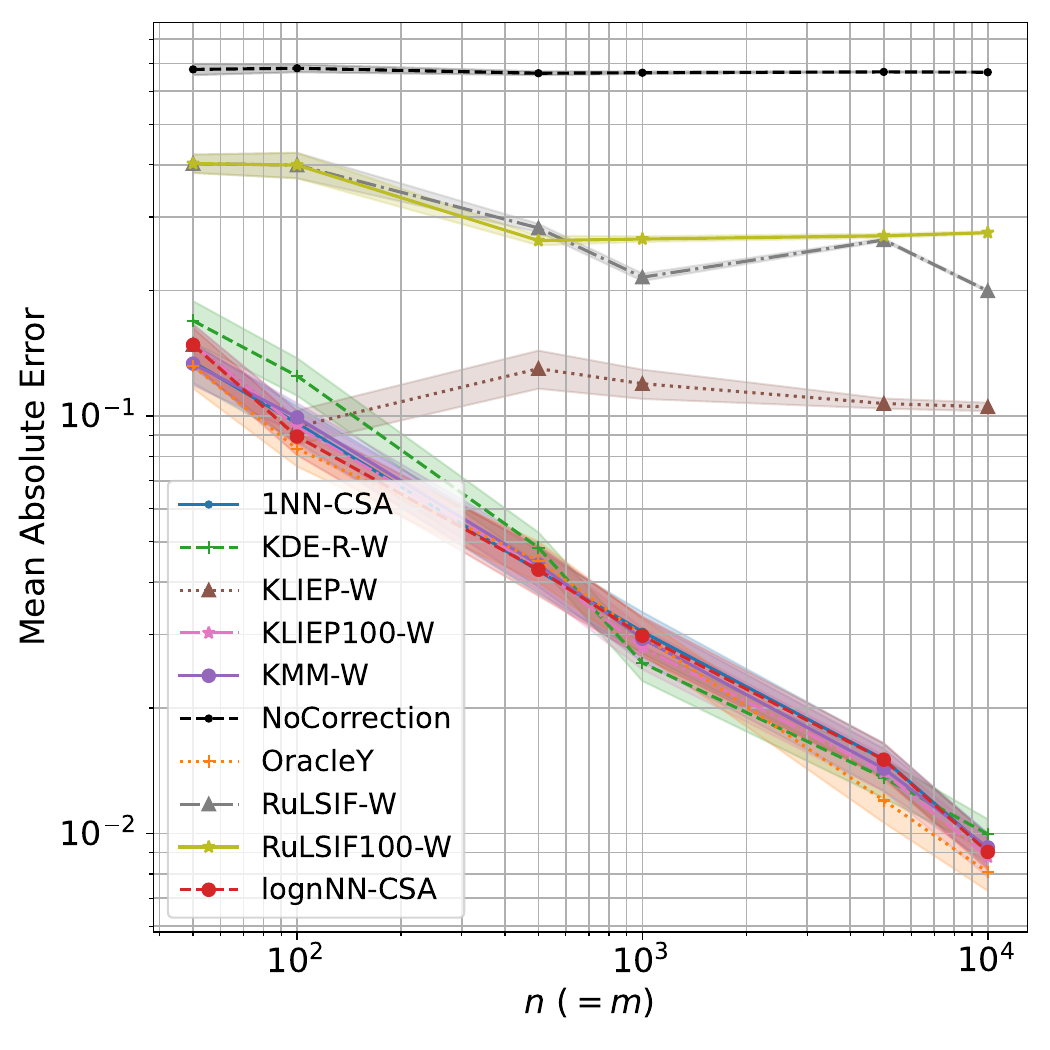}
    \caption{$d = 1$}
    \label{subfig:toy2_dim1_errors}
  \end{subfigure}
  \begin{subfigure}[b]{0.24\textwidth}
    \centering
    \includegraphics[width=\textwidth]{./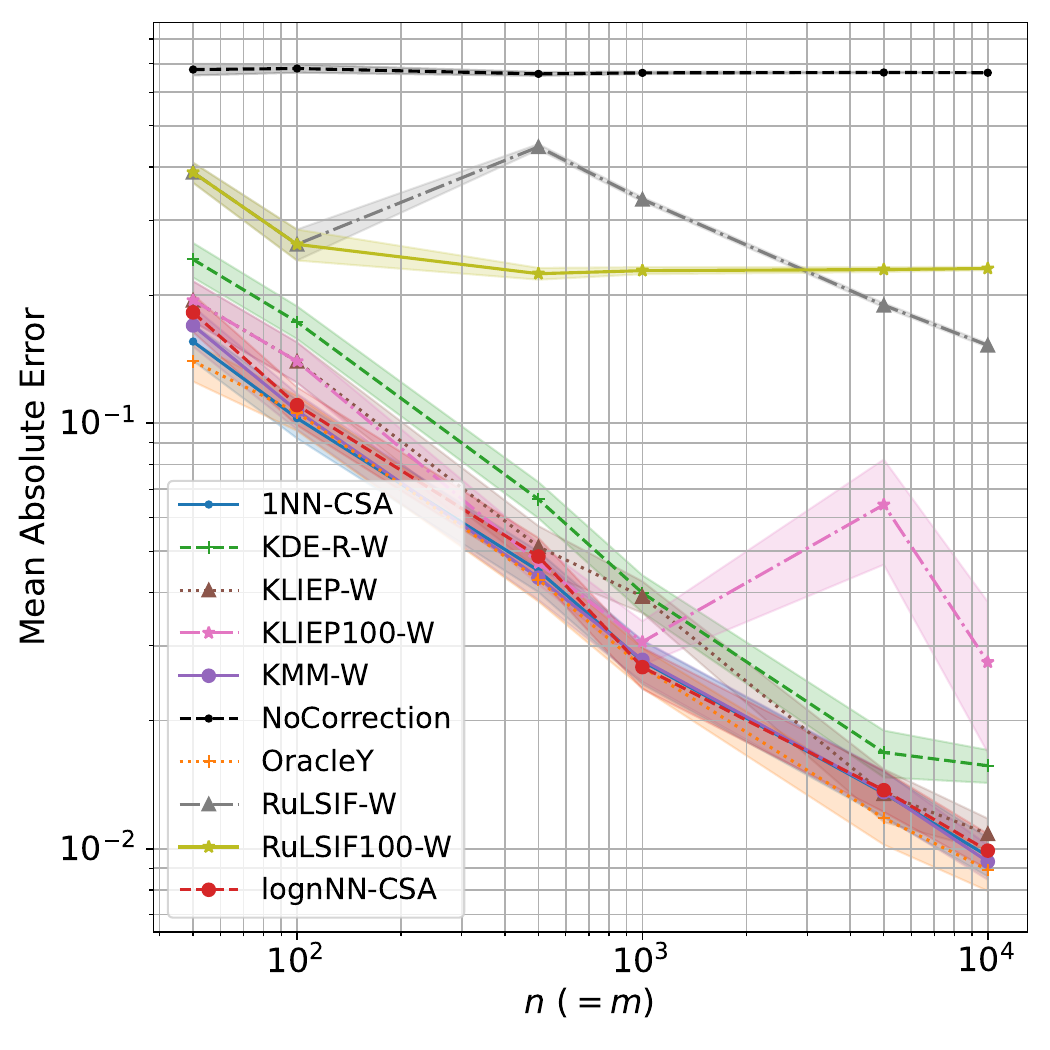}
    \caption{$d = 2$}
    \label{subfig:toy2_dim2_errors}
  \end{subfigure}
  \hfill
  \begin{subfigure}[b]{0.24\textwidth}
    \centering
    \includegraphics[width=\textwidth]{./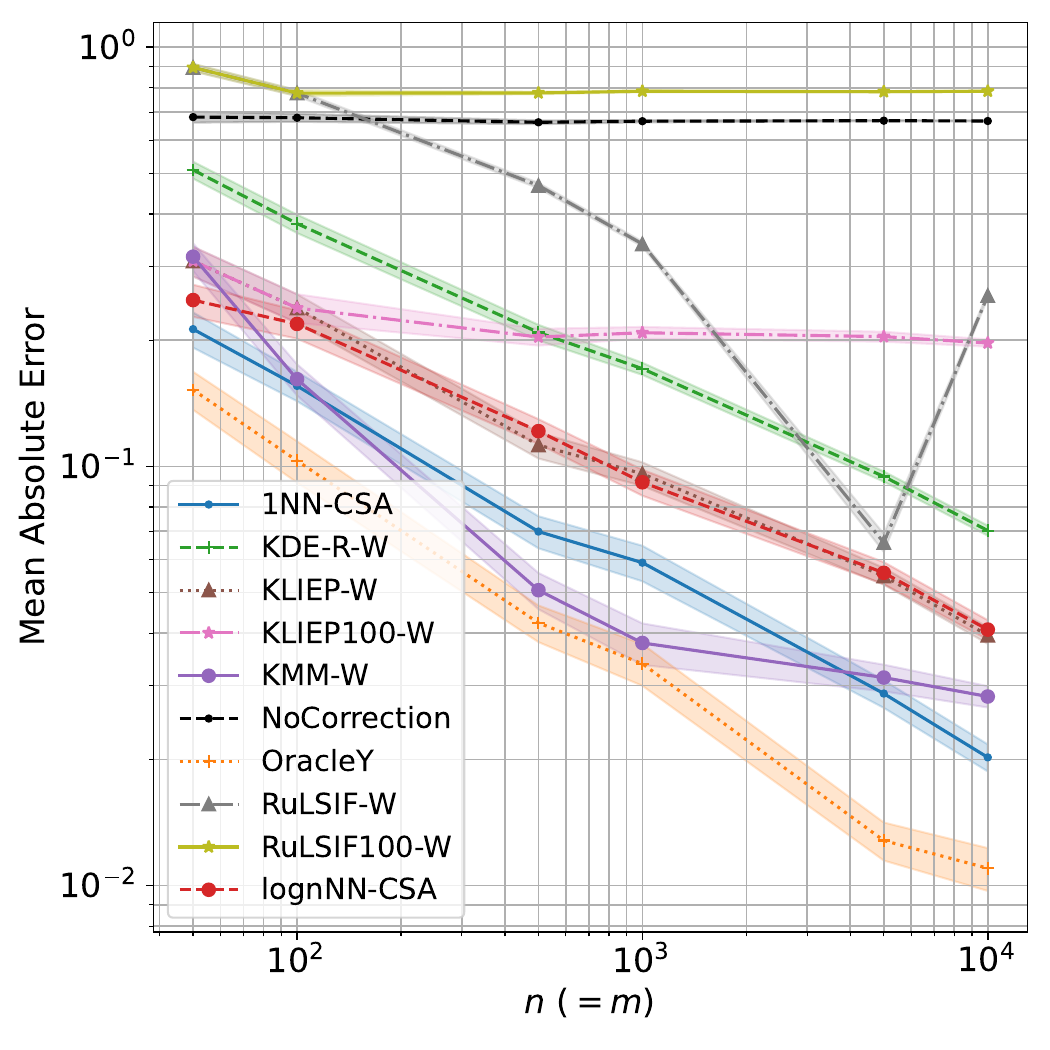}
    \caption{$d = 5$}
    \label{subfig:toy2_dim5_errors}
  \end{subfigure}
  \begin{subfigure}[b]{0.24\textwidth}
    \centering
    \includegraphics[width=\textwidth]{./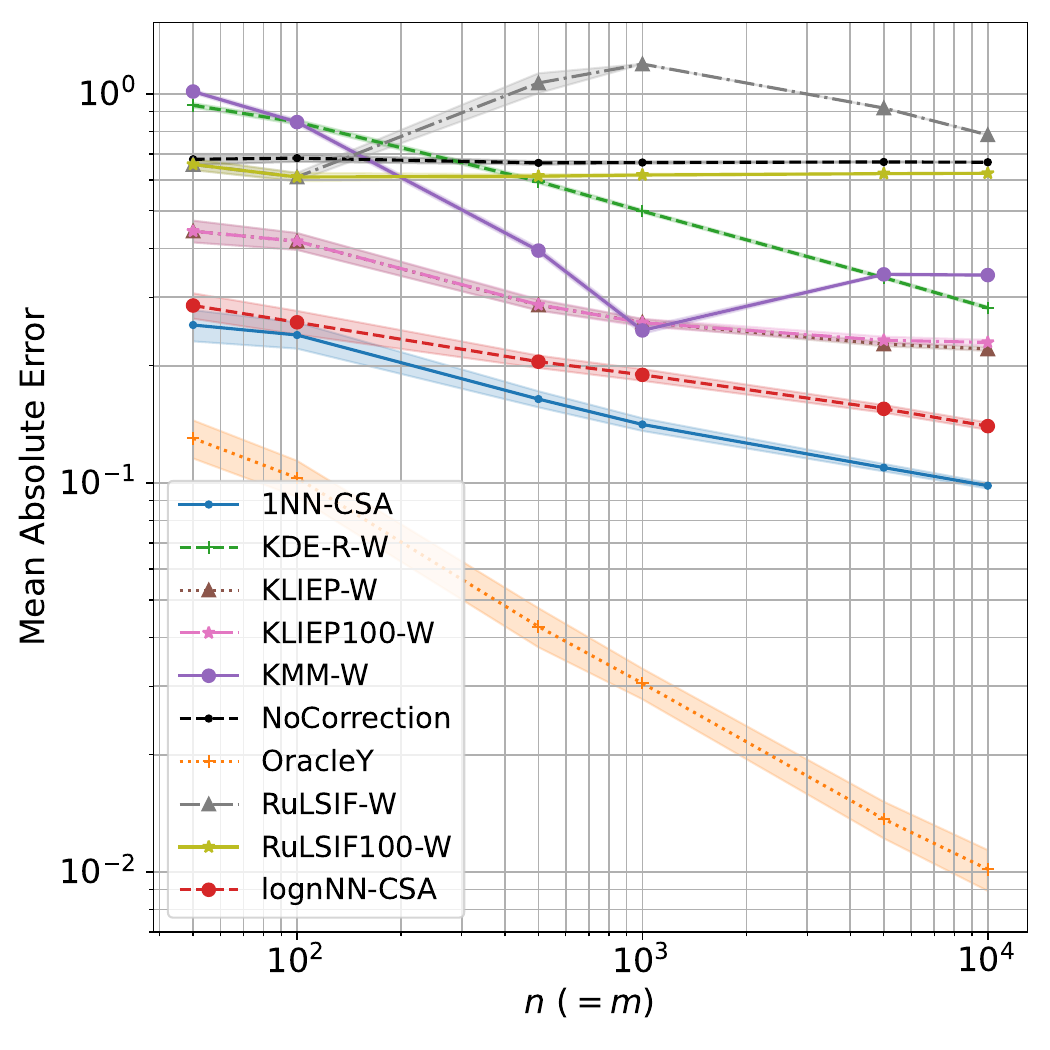}
    \caption{$d = 10$}
    \label{subfig:toy2_dim10_errors}
  \end{subfigure}
  \caption{Estimation errors for Experiment E2 (estimation of $\int (y - f_0(x) )^2\, Q(dx, dy)$)}
  \label{fig:toy2_abs_errors}
\end{figure*}

\begin{figure*}[tbp]
  \centering
  \begin{subfigure}[b]{0.24\textwidth}
    \centering
    \includegraphics[width=\textwidth]{./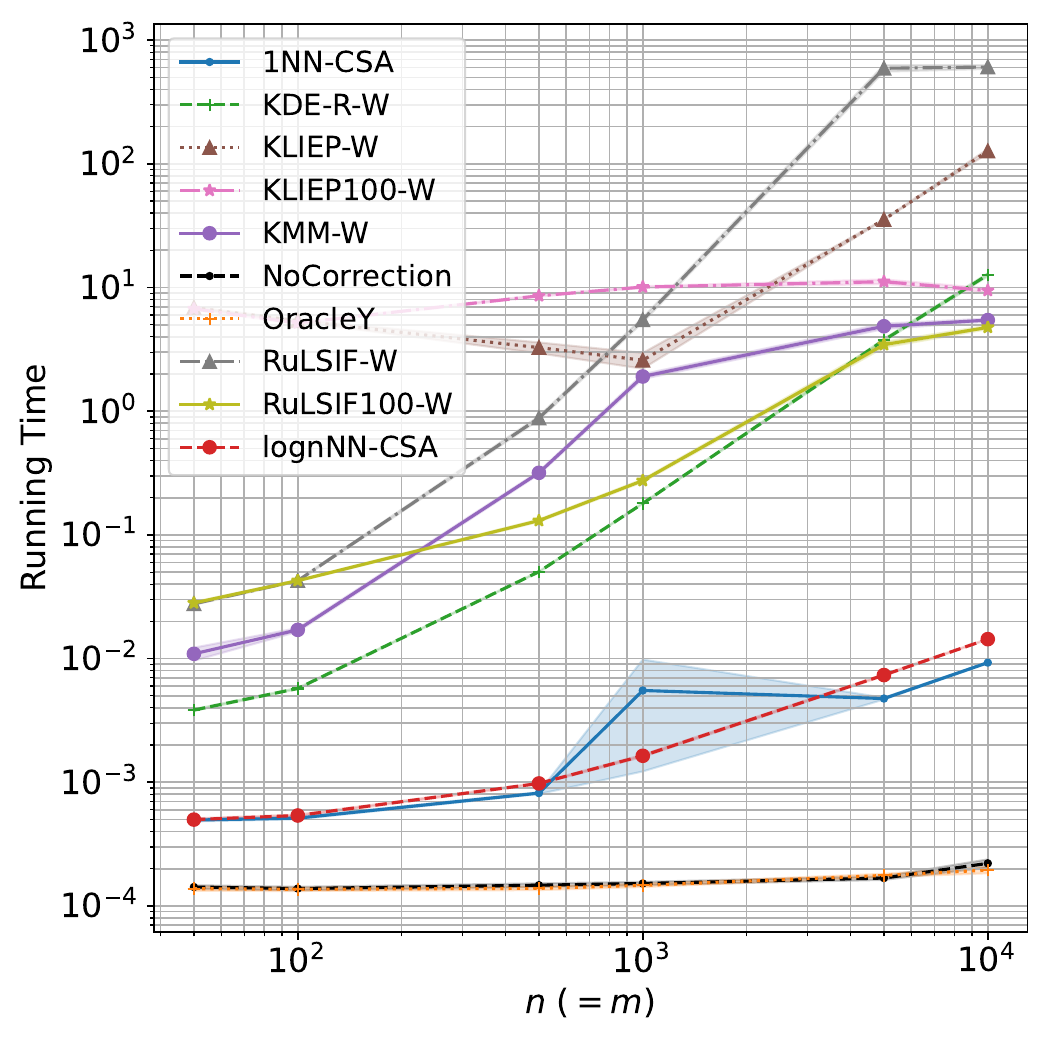}
    \caption{$d = 1$}
    \label{subfig:toy2_dim1_times}
  \end{subfigure}
  \begin{subfigure}[b]{0.24\textwidth}
    \centering
    \includegraphics[width=\textwidth]{./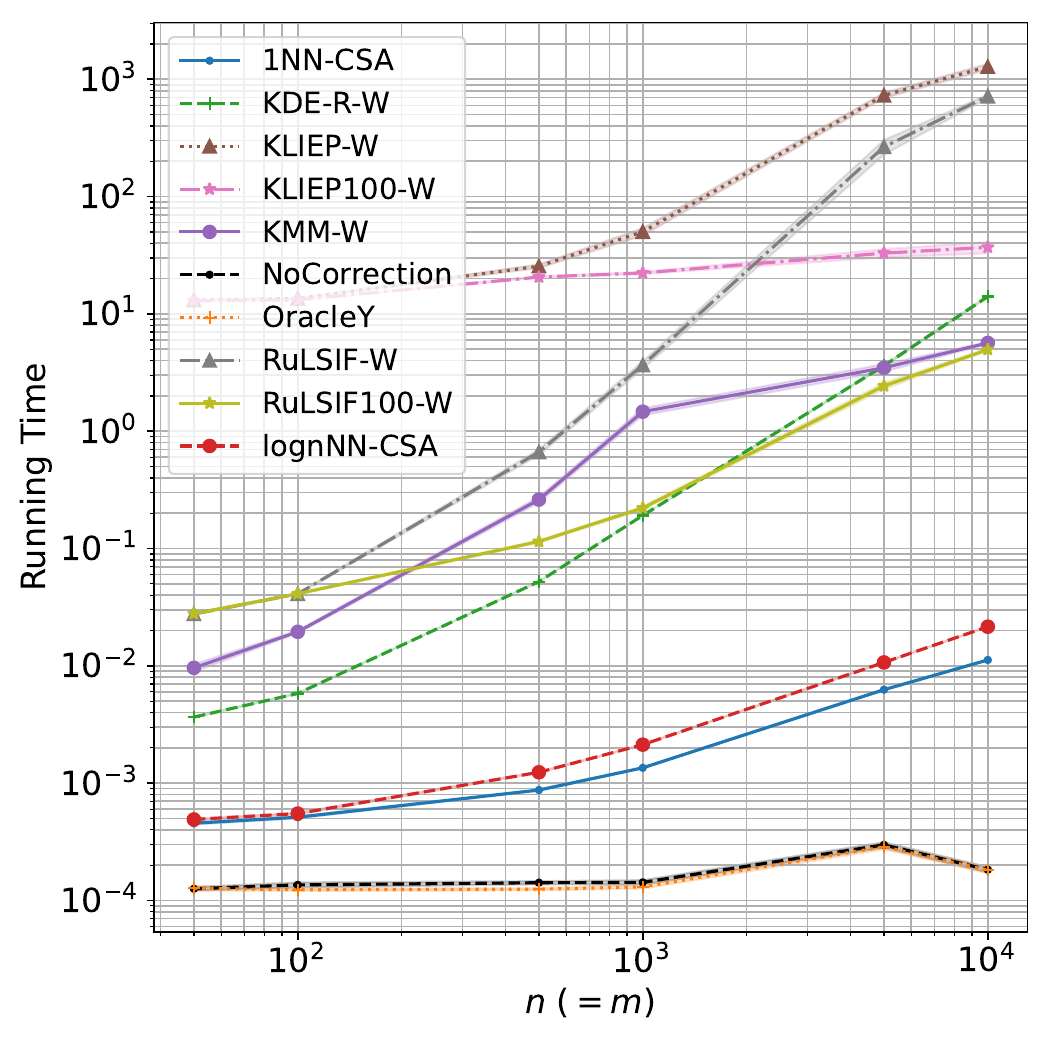}
    \caption{$d = 2$}
    \label{subfig:toy2_dim2_times}
  \end{subfigure}
  \hfill
  \begin{subfigure}[b]{0.24\textwidth}
    \centering
    \includegraphics[width=\textwidth]{./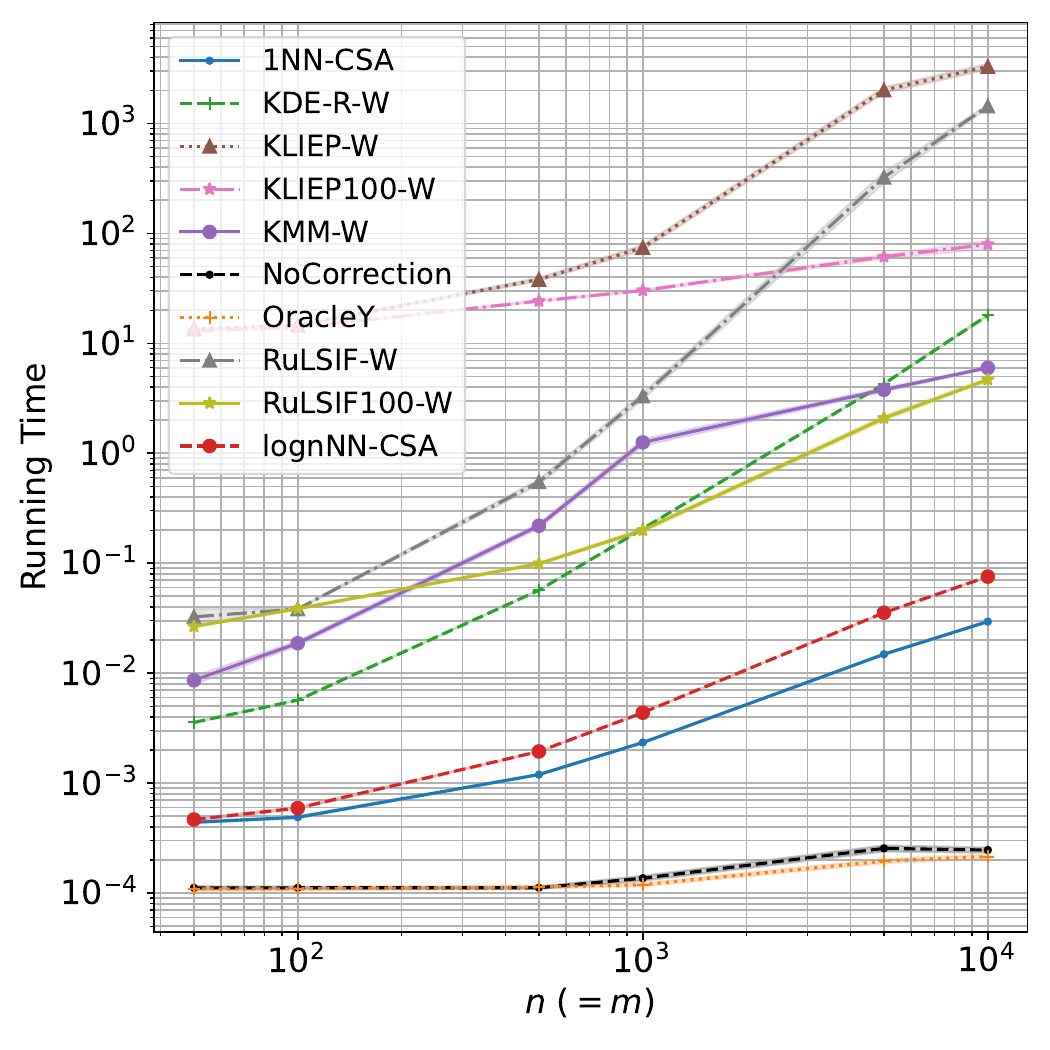}
    \caption{$d = 5$}
    \label{subfig:toy2_dim5_times}
  \end{subfigure}
  \begin{subfigure}[b]{0.24\textwidth}
    \centering
    \includegraphics[width=\textwidth]{./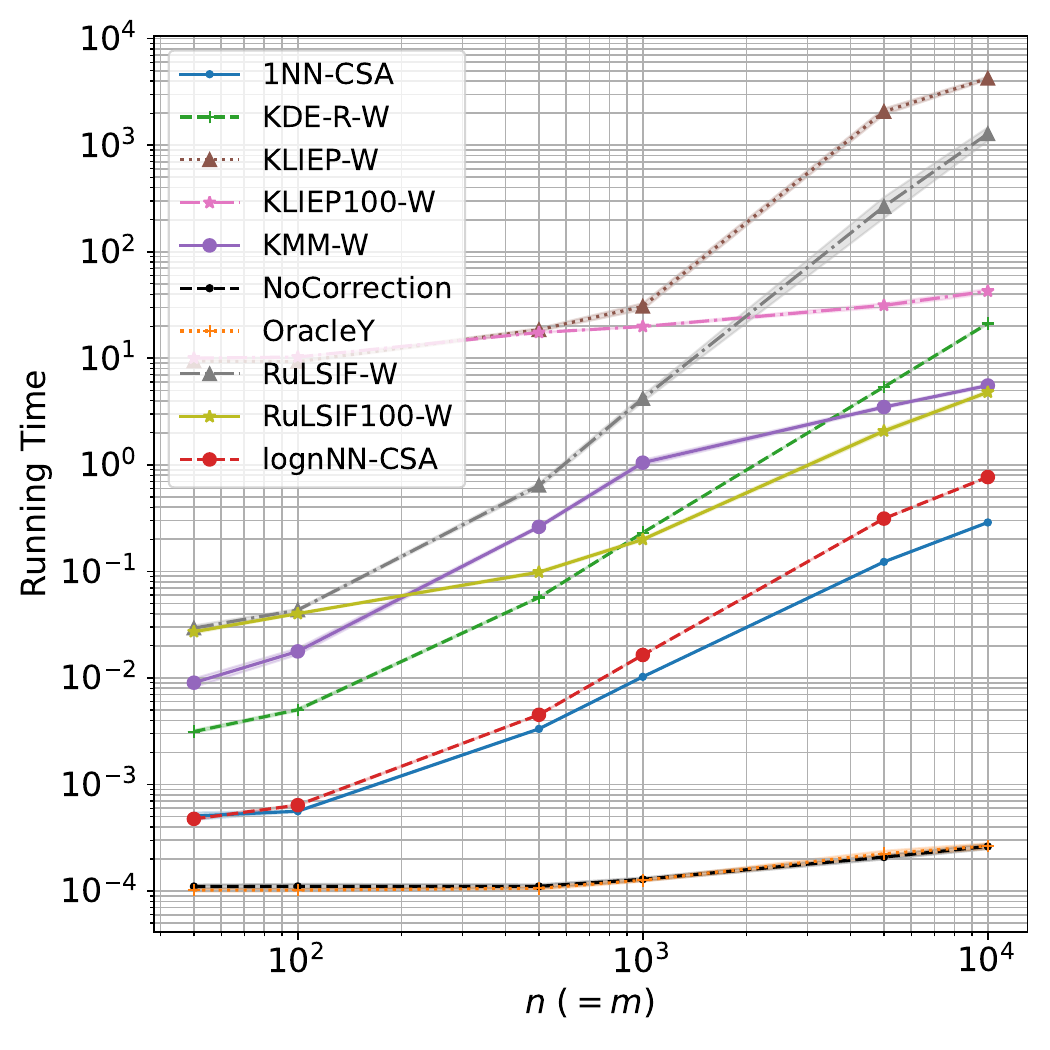}
    \caption{$d = 10$}
    \label{subfig:toy2_dim10_times}
  \end{subfigure}
  \caption{Running times in Experiment E2}
  \label{fig:toy2_abs_times}
\end{figure*}

\paragraph{Comparison of running times in Experiment E2:}
The running times in Experiments E2 (Figure~\ref{fig:toy2_abs_times}) were very similar to those in Experiment E1 (Figure~\ref{fig:toy2_abs_times}), but we can see more clearly that 1- and $\log n$-NN-CSA outperform other methods even for the smallest sample size.

\paragraph{Setup of Experiment E3 (linear regression with synthetic data):} 
Next, we present experiments of linear regression. Using samples from the same source and test distributions as in Experiment E2, we perform the ordinary least squares after covariate adaptation.
More precisely, we aim to optimize the parameters $\theta \in \Re^d$ of the model $f_{\theta}\colon \Re^d \to \Re, x \mapsto \theta^\top x$ so that the Mean Squared Error (MSE) $\E[(Y^* - f_{\theta}(X^*))^2]$ in the target domain will be minimized. To do so, we minimize the MSE estimated by each covariate shift adaptation method.

\paragraph{Comparison of estimation errors for Experiment E3:} 
The results are summarized in Figures~\ref{fig:toy3_errors}.\footnote{We plot the MSEs subtracted by $0.0095$ to better present the curves in the region close to the minimum population MSE $0.01$ while keeping values positive.}
KMM-W performed better than any other methods for the higher dimensions $d \in \{5, 10\}$ and the small-to-moderate sample sizes $50 \le n \le 500$, 1-NN-CSA being the second best.
For $n = 10000$, 1-NN-CSA showed performance better than or comparable to KMM-W.

\begin{figure*}[tbp]
  \centering
  \begin{subfigure}[b]{0.24\textwidth}
    \centering
    \includegraphics[width=\textwidth]{./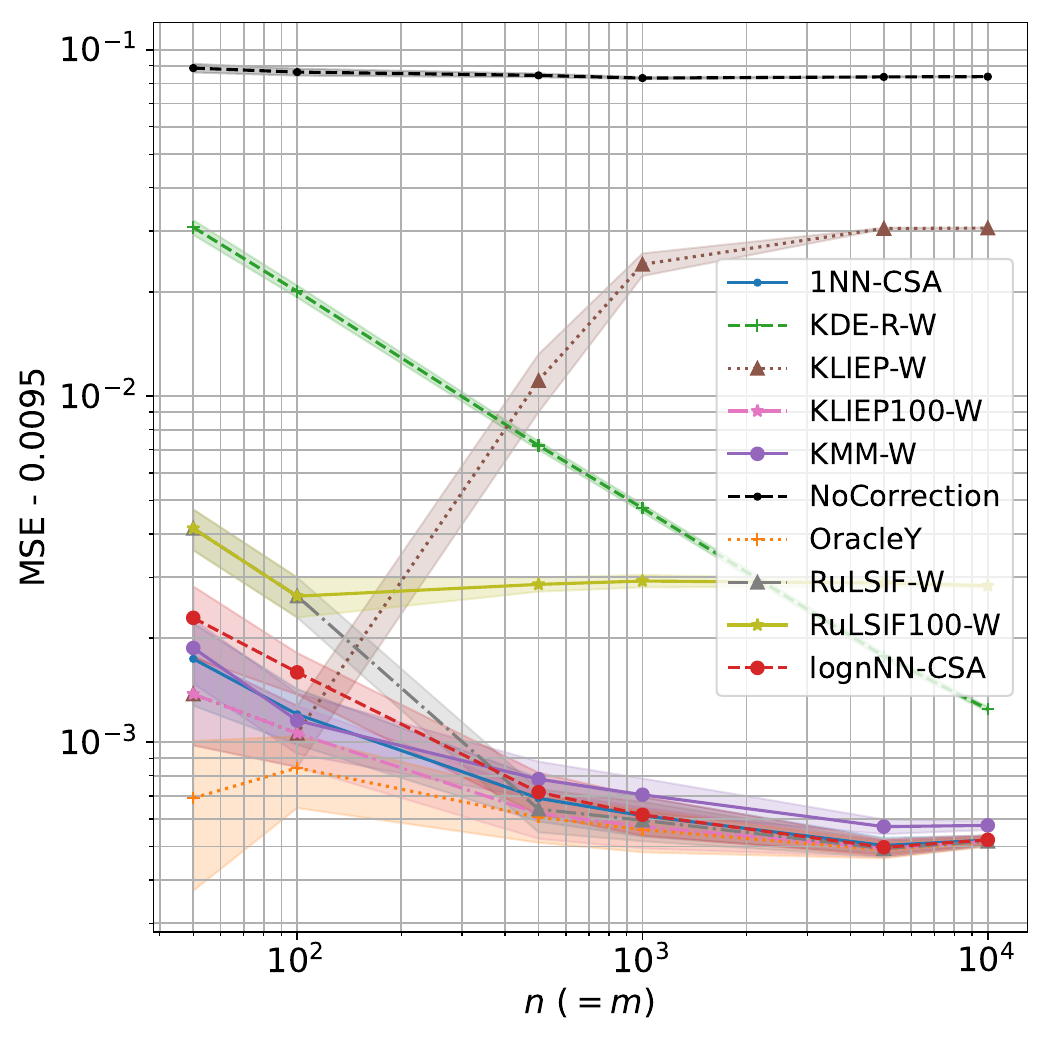}
    \caption{$d = 1$}
    \label{subfig:toy3_dim1_errors}
  \end{subfigure}
  \begin{subfigure}[b]{0.24\textwidth}
    \centering
    \includegraphics[width=\textwidth]{./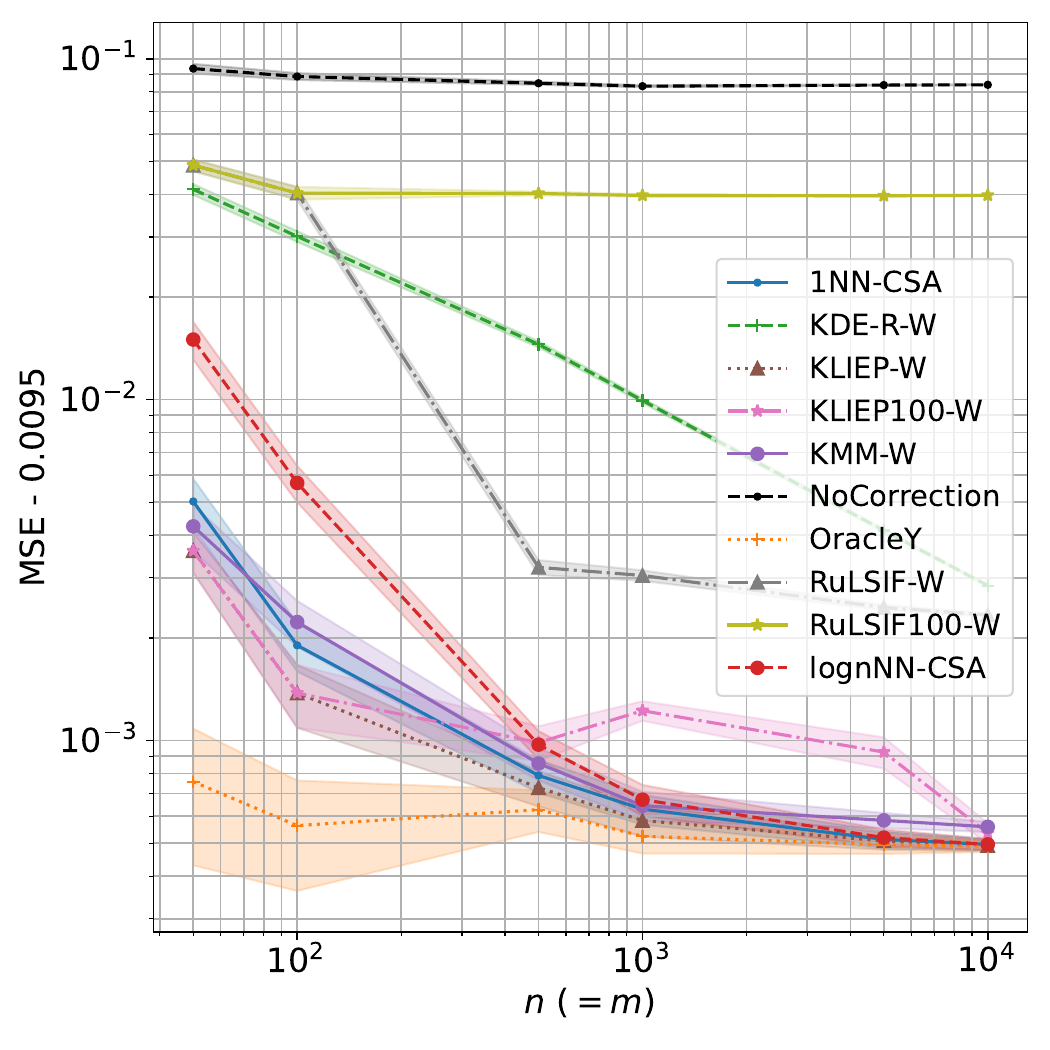}
    \caption{$d = 2$}
    \label{subfig:toy3_dim2_errors}
  \end{subfigure}
  \hfill
  \begin{subfigure}[b]{0.24\textwidth}
    \centering
    \includegraphics[width=\textwidth]{./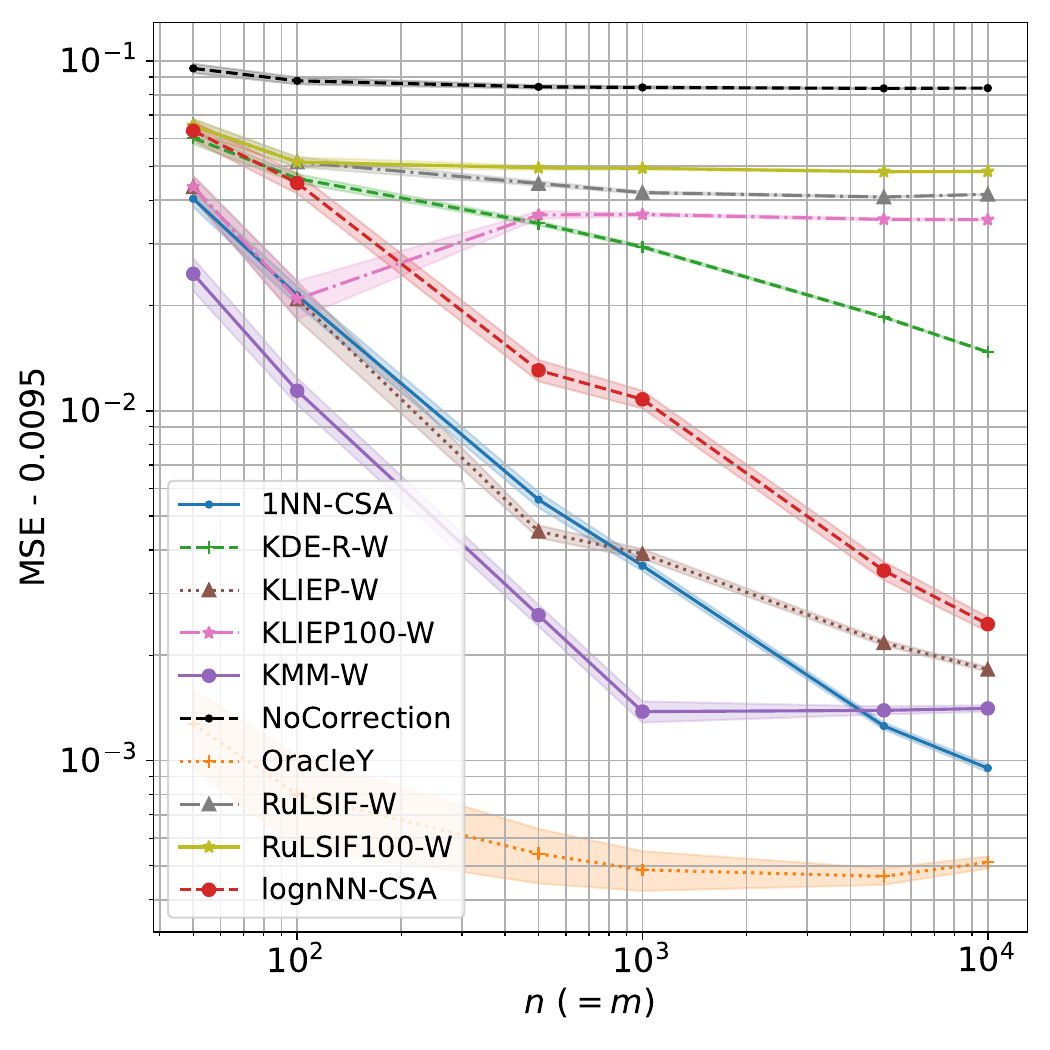}
    \caption{$d = 5$}
    \label{subfig:toy3_dim5_errors}
  \end{subfigure}
  \begin{subfigure}[b]{0.24\textwidth}
    \centering
    \includegraphics[width=\textwidth]{./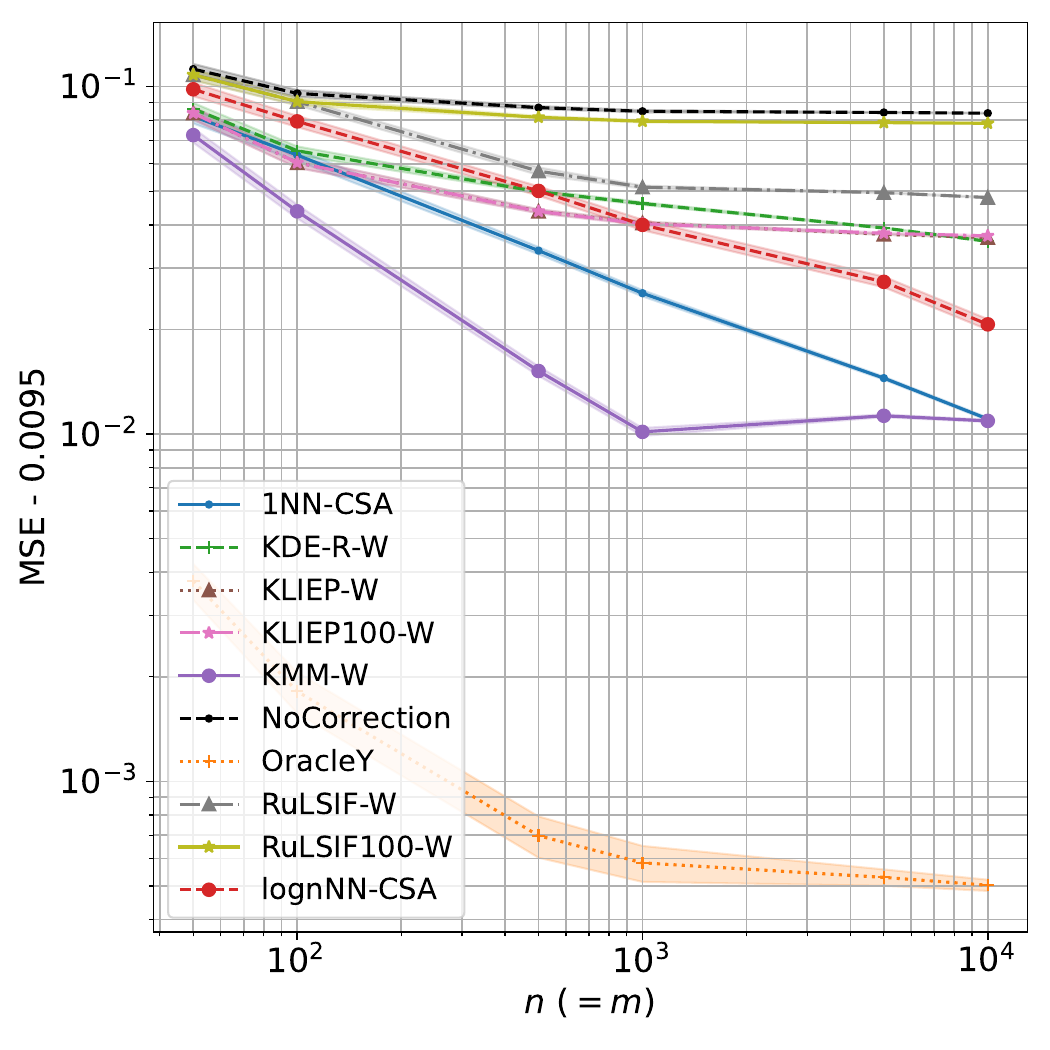}
    \caption{$d = 10$}
    \label{subfig:toy3_dim10_errors}
  \end{subfigure}
  \caption{Mean Squared Errors (MSE) (subtracted by $0.0095$) for Experiment E3 (linear regression)}
  \label{fig:toy3_errors}
\end{figure*}

\paragraph{Comparison of running times in Experiment E3:} 
As in Experiments E1--E2, 1-NN-CSA and $\log n$-NN-CSA finished their computations faster than the other adaptation methods by large margins (Figure~\ref{fig:toy3_times}).

\begin{figure*}[tbp]
  \centering
  \begin{subfigure}[b]{0.24\textwidth}
    \centering
    \includegraphics[width=\textwidth]{./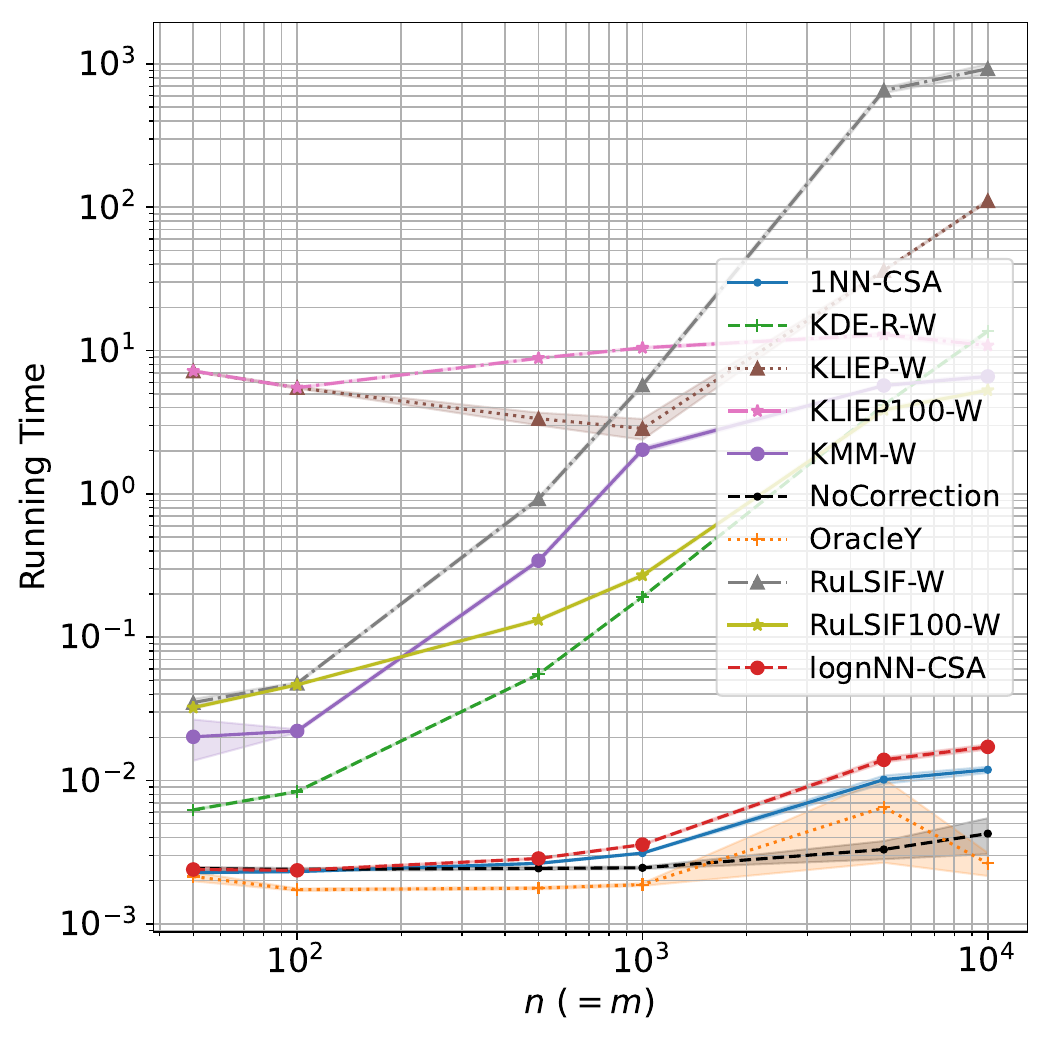}
    \caption{$d = 1$}
    \label{subfig:toy3_dim1_times}
  \end{subfigure}
  \begin{subfigure}[b]{0.24\textwidth}
    \centering
    \includegraphics[width=\textwidth]{./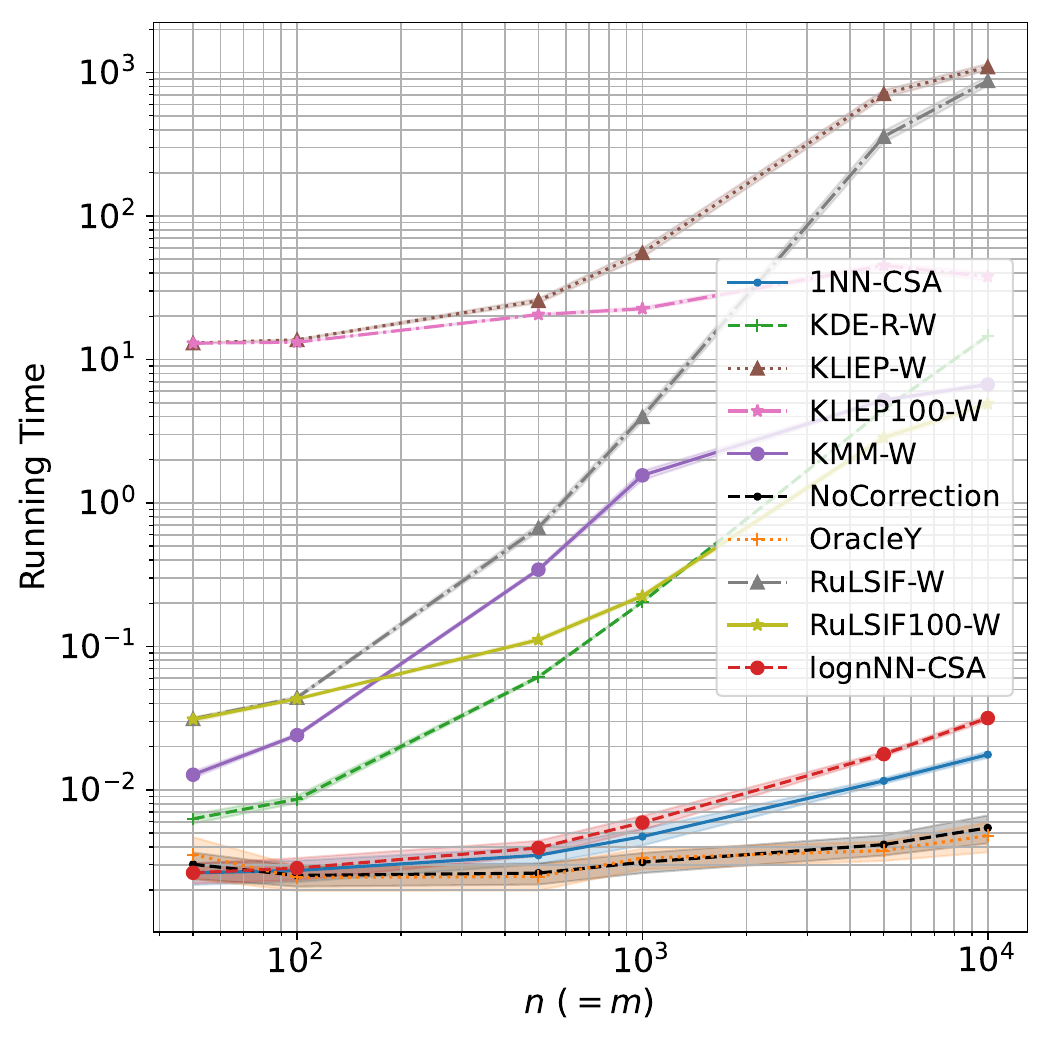}
    \caption{$d = 2$}
    \label{subfig:toy3_dim2_times}
  \end{subfigure}
  \hfill
  \begin{subfigure}[b]{0.24\textwidth}
    \centering
    \includegraphics[width=\textwidth]{./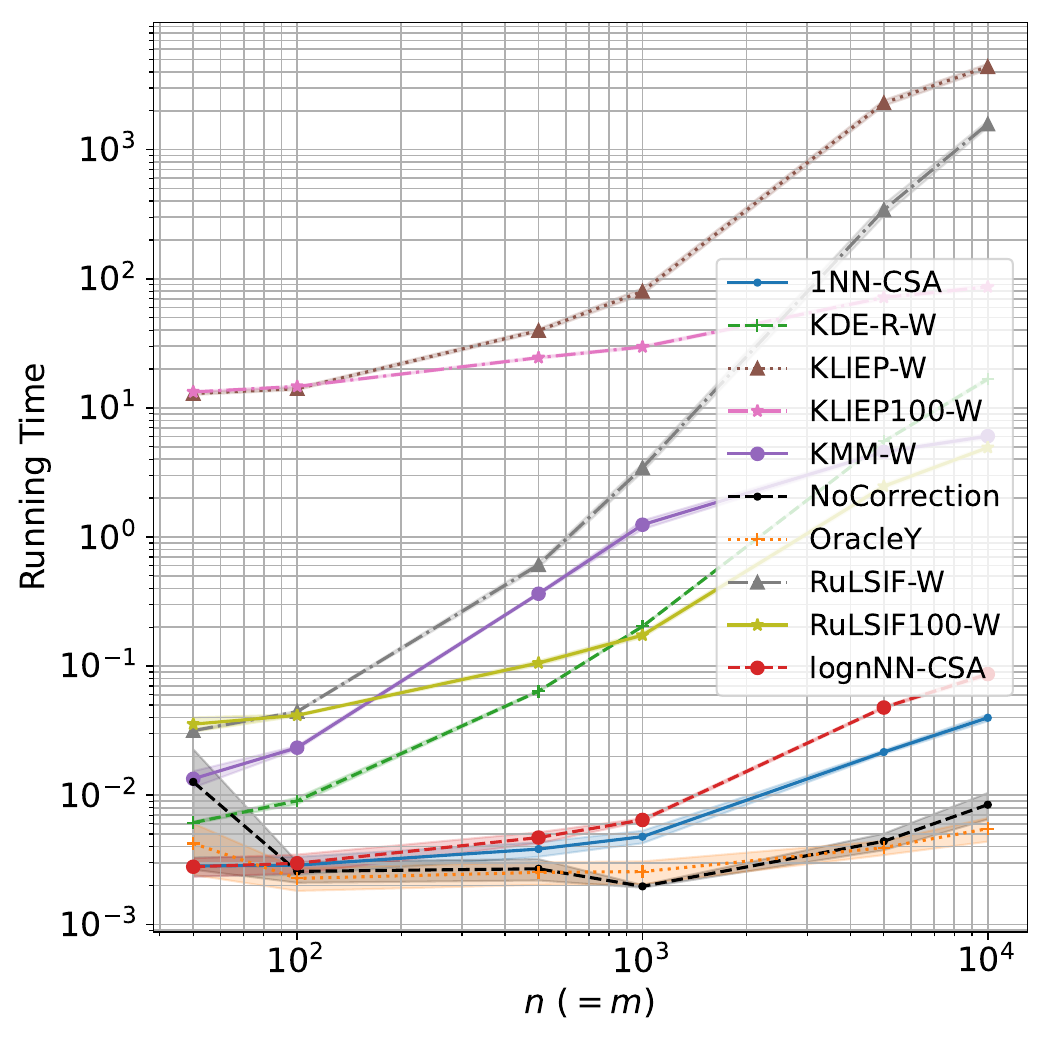}
    \caption{$d = 5$}
    \label{subfig:toy3_dim5_times}
  \end{subfigure}
  \begin{subfigure}[b]{0.24\textwidth}
    \centering
    \includegraphics[width=\textwidth]{./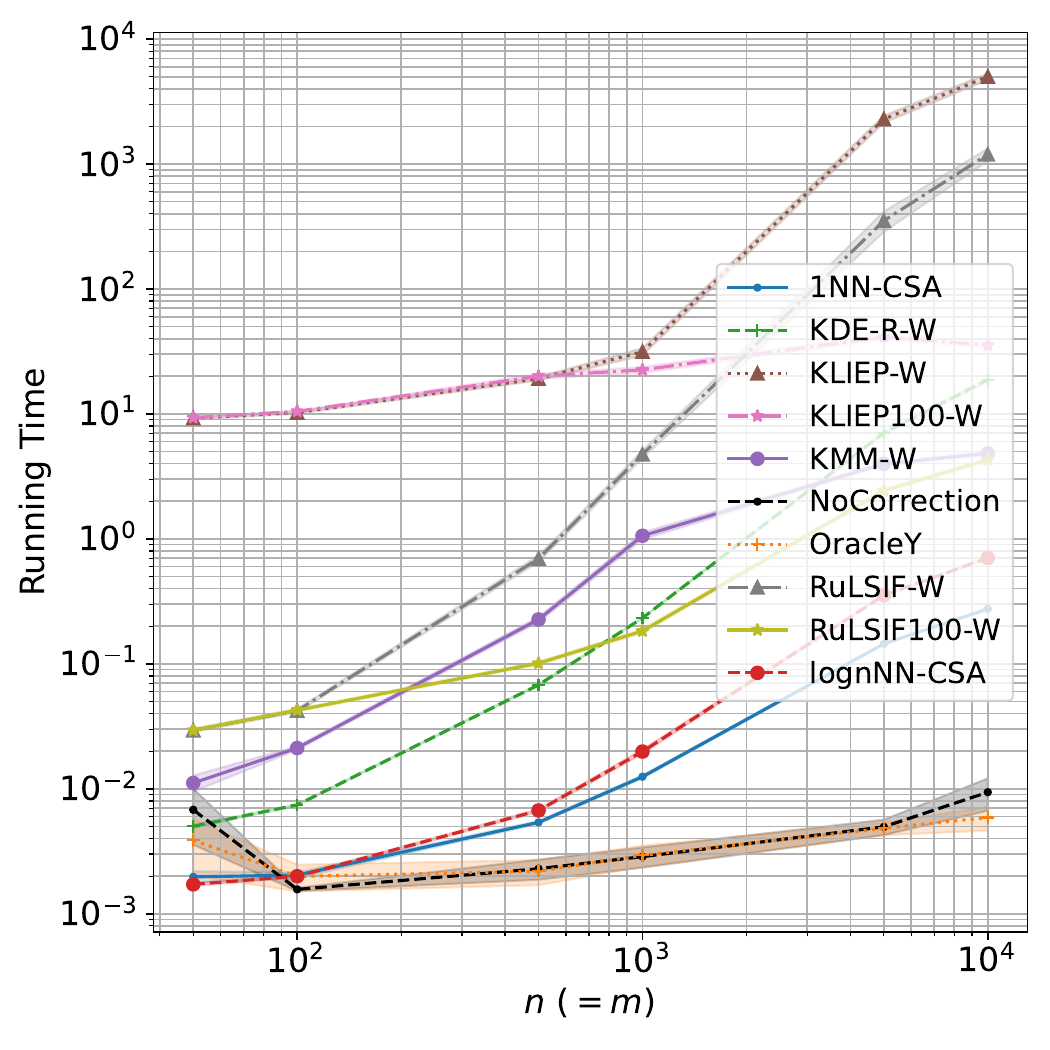}
    \caption{$d = 10$}
    \label{subfig:toy3_dim10_times}
  \end{subfigure}
  \caption{Computation times (seconds) in Experiment E3}
  \label{fig:toy3_times}
\end{figure*}

In Experiments E1--E3, the proposed methods, 1- and $\log n$-NN-CSA were able to finish computation much faster than other adaptation methods without compromising on the statistical performance.
$\log n$-NN-CSA did not show advantages in accuracy, with increased computation costs.
We can conclude that 1-NN-CSA is preferred over $\log n$-NN-CSA.
A reason that we were not able to conduct experiments with larger sample sizes than $10000$ is that the existing adaptation methods have too demanding computational requirements. For instance, the running times of RuLSIF-W in Figure~\ref{subfig:toy3_dim5_times} grows about 100 times as the sample size increases by 10 times, taking more than $10^3$ seconds for $n = 10000$. For $n = 10^5$, we would need at least $10^3 \times 100$ seconds, that is $27$ hours of compute for a single run.
In contrast, the time complexity of 1-NN-CSA being $\mathcal{O}(n \log n)$ and its running time less than one second for $n=10^4$, we can estimate its running time for $n = 10^5$ as $1 \times (10^5 \log 10^5)/(10^4 \log 10^4) = 12.5$ seconds.
1-NN-CSA would stay feasible in applications of even larger scales.

The previous methods construct the distance matrix between pairs of data points, which takes running time and memory space quadratic in the sample size.
Additionally, RuLSIF-W  computes the inverse of the distance matrix, taking cubic running time. KMM-W and KLIEP-W solve convex optimization problems with iterative procedures, for which the implementations from \citet{demathelin2021adapt} use stopping criteria based on objective function values. This resulted in good accuracy and milder growth in running time in our experiments. However, tuning the solvers can be involved in practice.
In contrast, $k$-NN-CSA does not have such subtle issues around optimization solvers: we only have to perform nearest neighbor search.

In all cases, we can observe that 1-NN-CSA showed clear power-law, with nearly straight lines in the logarithmic scales.
This is a significant advantage in predicting returns when one invests on increasing the sample size.

\paragraph{Experiment E4 (linear regression and logistic regression with benchmark datasets):}
We use regression benchmark datasets, \texttt{diabetes}\footnote{Available at \url{https://archive.ics.uci.edu/ml/index.php}.\label{footnote:uci}}, \texttt{california}~\citep{california_data}\footnote{Available at \url{https://www.dcc.fc.up.pt/~ltorgo/Regression/cal_housing.html}.} and
classification datasets, \texttt{twonorm}~\citep{delve_datasets}\footnote{Available at \url{https://www.cs.utoronto.ca/~delve/data/datasets.html}.\label{footnote:delve}} and \texttt{breast\_cancer}$^{\ref{footnote:uci}}$.
We apply the ridge regression and the logistic regression, respectively. The evaluation metric is the mean squared error for the regression tasks and the classification accuracy for classification tasks.
We synthetically introduce covariate shift by subsampling test data.
See Appendix~\ref{app:sec:experiment_details} for more details.

\paragraph{Remark:}
For fair comparison, the benchmark experiments presented in this paper follow the standard protocol used in the literature as similarly done in previous research~\citep{gretton2008kmm,kanamori2009ulsif,yamada2013rulsif,sugiyama2007direct,sugiyama2008direct}: we apply biased resampling to synthetically simulate a target dataset under covariate shift.
It is thus important to note that they are not completely real-world data.
Nevertheless, this ensures that the methods are tested in isolation from other types of distribution shifts while using real data for the source covariate distribution as well as the conditional distributions.

\begin{table}[btp]
  \caption{MSE/accuracy for regression/classification benchmark datasets. We repeat the experiment using 50 different random subsamples and calculate the average scores (and standard errors). The results comparable to the best in terms of Wilcoxon's signed rank test with significance level $1\%$ are shown in bold fonts.}
  \label{tab:benchC0}
  \centering
   \begin{tabular}{ccccc}
    \toprule
    & \multicolumn{2}{c}{Regression (MSE)} & \multicolumn{2}{c}{Classification (accuracy)} \\
    \cmidrule(r){2-3} \cmidrule(r){4-5}
            & \texttt{diabetes}    & \texttt{california}    & \texttt{breast\_cancer}    & \texttt{twonorm}                  \\
    \midrule
1NN-CSA     & 3470 (35)          & \textbf{0.146} (0.001) & \textbf{0.9633} (0.002) & 0.9327 (0.002)           \\
lognNN-CSA  & 3605 (40)          & 0.150 (0.001)          & 0.9595 (0.002)          & 0.9293 (0.002)           \\
KDE-R-W     & 3673 (52)          & 3.864 (1.067)          & 0.9596 (0.002)          & 0.5260 (0.009)           \\
KMM-W       & 3831 (60)          & 3.702 (1.160)          & 0.9594 (0.002)          & \textbf{0.9583} (0.001)  \\
KLIEP-W     & \textbf{3221} (31) & 2.896 (0.798)          & \textbf{0.9648} (0.002) & 0.9482 (0.001)           \\
KLIEP100-W  & \textbf{3223} (31) & 3.034 (0.843)          & \textbf{0.9648} (0.002) & 0.9480 (0.001)           \\
RuLSIF-W    & 3235 (31)          & 3.039 (0.843)          & 0.7794 (0.015)          & 0.9512 (0.001)           \\
RuLSIF100-W & 3238 (31)          & 3.045 (0.844)          & 0.7794 (0.015)          & 0.9539 (0.001)           \\
    \bottomrule
  \end{tabular}
\end{table}

\begin{table}[bt]
  \caption{Total running times in seconds spent for the training including the hyper-parameter tuning (if any) for benchmark datasets. We repeat the experiment using 50 different random subsamples dataset and calculate the average running times (and standard errors). The results comparable to the best in terms of Wilcoxon's signed rank test with significance level $1\%$ are shown in bold.}
  \label{tab:times_total_benchmark}
  \centering
  \begin{tabular}{cccccc}
    \toprule
            & \texttt{diabetes}        & \texttt{california}      & \texttt{breast\_cancer}  & \texttt{twonorm}     \\
    \midrule
1NN-CSA     & \textbf{0.0015} (0.0000) & \textbf{0.0084} (0.0001) & \textbf{0.0036} (0.0000) & \textbf{0.0051} (0.0000) \\
lognNN-CSA  & 0.0016 (0.0000)          & 0.0128 (0.0001)          & 0.0037 (0.0000)          & 0.0052 (0.0000)          \\
KDE-R-W     & 0.0078 (0.0000)          & 0.2121 (0.0008)          & 0.0117 (0.0000)          & 0.0124 (0.0000)          \\
KMM-W       & 0.0373 (0.0015)          & 0.4067 (0.0038)          & 0.0542 (0.0014)          & 0.0220 (0.0006)          \\
KLIEP-W     & 7.602 (0.051)            & 29.98 (0.34)             & 8.67 (0.07)              & 8.86 (0.16)              \\
KLIEP100-W  & 7.501 (0.045)            & 16.91 (0.07)             & 8.68 (0.07)              & 8.26 (0.10)              \\
RuLSIF-W    & 0.0575 (0.0014)          & 1.686 (0.011)            & 0.0529 (0.0020)          & 0.2014 (0.0016)          \\
RuLSIF100-W & 0.0401 (0.0007)          & 0.1237 (0.0004)          & 0.0454 (0.0014)          & 0.0391 (0.0002)          \\
    \bottomrule
  \end{tabular}
\end{table}

\paragraph{Results for Experiment E4:}Table~\ref{tab:benchC0} shows the obtained MSEs and classification accuracies.
$1$-NN-CSA and $\log n$-NN-CSA gave the best performance for \texttt{california} and performances comparable to the best for \texttt{breast\_cancer}.
For the other datasets, different methods performed the best depending on the dataset.
On the other hand, in terms of running time, 1NN-CSA was consistently faster than the previous methods (Table~\ref{tab:times_total_benchmark}).

Our experiments show that the proposed method is almost always faster than the previous methods and gives great accuracy in many cases, even though it is not always the best.
1-NN-CSA is highly recommended as an off-the-shelf method applicable even in larger scales, although the previous methods such as KMM-W, KLIEP-W, and RuLSIF-W should not be neglected, as far as the computational budget allows.
The times spent for adaptation are summarized in Table~\ref{tab:times_total_benchmark},
showing that the proposed methods $1$-NN-CSA and $\log n$-NN-CSA are much faster than other methods.

\section{Conclusion}
We proposed a $k$-NN-based covariate shift adaptation method.
We provided error bounds, which suggest setting $k = 1$ is among the best choices.
This resulted in a scalable non-parametric method with no hyper-parameter.
For future research directions, one could complete our results for the parametric inference on the target domain, in particular for finding the asymptotic distribution of $M$-estimators. For the average treatment effect, \citet{abadie2006large} derived 
asymptotic normality of their estimator and it could be interesting to get a similar result in our context. Investigating non-parametric estimation on the target domain could be also an interesting direction. However, non-parametric estimators computed with the source sample can be already optimal when the ratio of densities is bounded. See for instance \citet{ma2023optimally} in the reproducing kernel Hilbert space framework.
It could be then interesting to extend our result to cases with unbounded density ratios.
Finally, it may be interesting to extend our approach with approximate nearest neighbor methods for further scalability.

\section*{Acknowledgement}
Experiments presented in this paper were carried out using the Grid'5000 testbed, supported by a scientific interest group hosted by Inria and including CNRS, RENATER and several Universities as well as other organizations (see \url{https://www.grid5000.fr}).
IY was supported by the \emph{Allocation d'Installation Scientifique (AIS) 2023} from Rennes M\'etropole.

\newpage

\bibliographystyle{chicago}
\bibliography{main.bib}

\appendix
\section{Preliminary results}

 The first preliminary result is concerned about the order of magnitude of $P_X(B (x, \tau ))$ for which we obtain a lower bound and an upper bound.
 
\begin{lem}\label{lem:aux1}
Under \ref{cond:reg0}, \ref{cond:reg1}, and \ref{cond:reg2}, it holds, for every $ x\in S_X$ and $\tau \in [0, T]$,
\begin{equation*}
  M_{1,d}  \tau ^{d} \leq P_X(B (x, \tau )) \leq   M_{2,d}  \tau ^{d} ,
\end{equation*}
with $M_{1,d}  = cb_X V_d $ and $M_{2,d} = U_XV_d  $.
\end{lem}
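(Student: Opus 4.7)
The plan is to write $P_X(B(x,\tau))$ as a Lebesgue integral of the density $p_X$, restrict to the support $S_X$, and then sandwich the density between its uniform bounds from \ref{cond:reg2}. Concretely, since $p_X$ vanishes off $S_X$ by \ref{cond:reg0}, we have
\begin{equation*}
P_X(B(x,\tau)) \;=\; \int_{B(x,\tau) \cap S_X} p_X(z)\, \lambda(dz).
\end{equation*}

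For the upper bound I would simply use $p_X(z) \leq U_X$ on $S_X$ from \ref{cond:reg2} and then enlarge the domain of integration from $B(x,\tau) \cap S_X$ to $B(x,\tau)$, obtaining $P_X(B(x,\tau)) \leq U_X \lambda(B(x,\tau)) = U_X V_d \tau^d$. This gives the constant $M_{2,d} = U_X V_d$. The assumption $\tau \in [0,T]$ is actually not used here; the upper bound holds for all $\tau \geq 0$.

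For the lower bound, the ordering of the two minorations matters: first apply $p_X(z) \geq b_X$ on $S_X$ from \ref{cond:reg2} to get $P_X(B(x,\tau)) \geq b_X \lambda(B(x,\tau) \cap S_X)$, and then invoke the shape regularity \ref{cond:reg1}, which precisely guarantees $\lambda(B(x,\tau) \cap S_X) \geq c\, \lambda(B(x,\tau)) = c V_d \tau^d$ for $\tau \in [0,T]$ and $x \in S_X$. Combining yields $P_X(B(x,\tau)) \geq c b_X V_d \tau^d$, which is the stated $M_{1,d} = c b_X V_d$.

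There is no real obstacle: the whole content of the lemma is the interplay between the density bounds \ref{cond:reg2} and the interior-cone-like condition \ref{cond:reg1}, which was crafted exactly to ensure that a nontrivial fraction of every small ball centered in $S_X$ remains inside $S_X$. The only point that might warrant a sentence of comment is why the restriction $\tau \leq T$ cannot be dropped in the lower bound: for large $\tau$ the ball may overshoot $S_X$ and the ratio $\lambda(B(x,\tau)\cap S_X)/\lambda(B(x,\tau))$ can become arbitrarily small (e.g.\ once $\tau$ exceeds the diameter of $S_X$), which is exactly why \ref{cond:reg1} is stated only locally.
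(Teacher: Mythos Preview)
Your proposal is correct and mirrors the paper's own proof essentially step for step: write $P_X(B(x,\tau))$ as $\int_{B(x,\tau)\cap S_X} p_X$, apply the density bounds from \ref{cond:reg2}, and for the lower bound invoke \ref{cond:reg1} to replace $\lambda(B(x,\tau)\cap S_X)$ by $c\,\lambda(B(x,\tau))$. Your additional remarks on why $\tau\le T$ is needed only for the lower bound are accurate and go slightly beyond what the paper spells out.
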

\begin{proof}
The proof of the lower bound follows from
\begin{equation*}
  P_X(B (x, \tau ))  = \int _{B (x, \tau ) \cap S_X } p_X(y) dy \geq b_X \int _{B (x, \tau ) \cap S_X }  dy \geq b_X c \int _{B (x, \tau ) }  dy,
\end{equation*}
where we have used \ref{cond:reg2} to get the first inequality and then \ref{cond:reg1} to obtain the second one. We conclude by change of variable $y = x +\tau u$. The proof of the upper bound is similar:
\begin{equation*}
  P_X(B (x, \tau ))  = \int _{B (x, \tau ) \cap S_X } p_X(y) dy \leq U_X \int _{B (x, \tau ) \cap S_X }  dy \leq U_X \int _{B (x, \tau ) } dy.
\end{equation*}
\end{proof}

The same type of result can be obtained for $ P_X(B (x_1, \tau_1 )\cup B (x_2, \tau_2 ) ) $ as follows.

\begin{lem}\label{lem:aux2}
Under \ref{cond:reg0}, \ref{cond:reg1}, and \ref{cond:reg2}, it holds, for every $(x_1,x_2) \in S_X\times S_X $ and $(\tau_1, \tau_2) \in [0, T]^{2}$,
\begin{equation*}
  \frac 1 2 M_{1,d}  ( \tau_1 ^{d} + \tau _2 ^d ) \leq  P_X(B (x_1, \tau_1 )\cup B (x_2, \tau_2 ) )  \leq   M_{2,d} ( \tau_1 ^{d}+ \tau_2 ^{d}) ,
\end{equation*}
with $M_{1,d}  = cb_X V_d $ and $M_{2,d} = U_XV_d  $.
\end{lem}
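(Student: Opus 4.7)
The plan is to reduce both inequalities to Lemma~\ref{lem:aux1} by standard measure-theoretic manipulations, so no new geometric input will be needed.

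For the upper bound, I would apply the union bound $P_X(B(x_1,\tau_1) \cup B(x_2,\tau_2)) \leq P_X(B(x_1,\tau_1)) + P_X(B(x_2,\tau_2))$, then control each term by the upper bound in Lemma~\ref{lem:aux1}, which is permissible because $(x_1,\tau_1),(x_2,\tau_2)\in S_X\times[0,T]$. This immediately yields $P_X(B(x_1,\tau_1)\cup B(x_2,\tau_2))\leq M_{2,d}(\tau_1^d+\tau_2^d)$.

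For the lower bound, I would use the trivial inequality $P_X(A\cup B)\geq \max(P_X(A),P_X(B))\geq \tfrac12(P_X(A)+P_X(B))$, and again apply the lower bound of Lemma~\ref{lem:aux1} to each of the two ball probabilities, obtaining $P_X(B(x_1,\tau_1)\cup B(x_2,\tau_2))\geq \tfrac12 M_{1,d}(\tau_1^d+\tau_2^d)$.

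Since both steps are one-line consequences of Lemma~\ref{lem:aux1} plus elementary inequalities for measures, there is no real obstacle here; the only thing to keep track of is that both pairs $(x_i,\tau_i)$ lie in the region where Lemma~\ref{lem:aux1} applies, which is given by hypothesis.
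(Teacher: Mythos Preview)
Your proposal is correct and matches the paper's proof essentially line for line: the paper also uses the union bound plus Lemma~\ref{lem:aux1} for the upper estimate, and for the lower estimate it invokes the equivalent elementary fact $1_{A\cup B}\ge\tfrac12(1_A+1_B)$ (which, after integrating, is exactly your inequality $P_X(A\cup B)\ge\tfrac12(P_X(A)+P_X(B))$) before applying Lemma~\ref{lem:aux1}.
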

\begin{proof}
The proof of the upper bound follows from the union bound and Lemma \ref{lem:aux1}. For the lower bound, start noting that  for any events $A$ and $B$,
 $ 1_{A\cup B} \ge (1_{A} + 1_{B})/ 2$. Then the conclusion follows from Lemma \ref{lem:aux1}.

\end{proof}

Based on the previous results, an upper and a lower bound are obtained on the moments of the nearest neighbor radius $\hat{\tau}_{n,k,x}$. A similar upper bound is stated as Lemma~3 in \cite{leluc_nn}).

\begin{lem}\label{lem:aux3}
Let $q$ be a positive real number.
Under \ref{cond:reg0}, \ref{cond:reg1}, and \ref{cond:reg2}, there exist two positive real numbers $c_{q,d}$ and $C_{q,d}$, depending
on $q$, $d$ and on the distribution of $X$ such that
\begin{equation}\label{lub}
c_{q,d}\frac{k^{q/d}}{(n+1)^{q/d}}\leq \E \hat{\tau}_{n,k,x}^q\leq C_{q,d}\frac{k^{q/d}}{(n+1)^{q/d}}.
\end{equation}
A more precise expression of the two constants are
$$c_{q,d}=\frac{M_{2,d}^{-q/d}}{2 \Gamma\left(1+[q/d]\right)},\quad C_{q,d}=2\Gamma\left(1+[q/d]\right)M_{1,d}^{-q/d},$$
where $[x]$ denotes the integer part of the real number $x$.
\end{lem}

\begin{proof}
We have $\hat{\tau}_{n,k,x}=Z_{(k)}(x)$ the kth-order statistics of $Z_i(x)=\vert x-X_i\vert$. Moreover, for any measurable and non-negative function $f$,
$$\E f\left(Z_{(k)}(x)\right)=\E f\circ F_x^{-1}\left(U_{(k)}\right),$$
where $U_{(k)}$ is the kth-order statistics of a $n$ sample of uniform random variables and since $F_x(z)=\P\left(X_1\in B(x,z)\right)\in \left[M_{1,d}z^d,M_{2,d}z^d\right]$,
$$F_x^{-1}(u)=\inf\left\{z\in \R: F_x(z)\geq u\right\}\in \left[\frac{u^{1/d}}{M_{2,d}^{1/d}},\frac{u^{1/d}}{M_{1,d}^{1/d}}\right].$$
Note that the range of $Z_i(x)$ is $[0,diam(S_X)]$ and we use a constant $c$ in the definition of $M_{1,d}=cb_X V_d$ such that
$$\inf_{x\in S_X}\lambda_d\left(B(x,z)\right)\geq c V_d z^d\mbox{ for } 0\leq z\leq diam(S_X).$$
If  $z=u^{1/d}/M_{1,d}^{1/d}\geq diam(S_X)$, we have $F_x(z)=1\geq u$ and we still have $F_x^{-1}(u)\leq z$.
 
Moreover, if $g$ is measurable and nonnegative,
$$\E g\left(U_{(k)}\right)=n! \int g(u_k)\mathds{1}_{0\leq u_1\leq\cdots \leq u_n\leq 1}du_1\cdots du_n=\frac{\Gamma(n+1)}{\Gamma(k)\Gamma(n-k+1)}\int_0^1 g(u)u^{k-1}(1-u)^{n-k}du.$$
When $f(z)=z^q$ for some $q>0$, we get
$$\E\left[Z_{(k)}^q\right]\leq M_{1,d}^{-q/d}\E\left[U_{(k)}^{q/d}\right]=M_{1,d}^{-q/d}\frac{\Gamma(n+1)\Gamma(k+q/d)}{\Gamma(k)\Gamma(n+q/d+1)},$$
$$\E\left[Z_{(k)}^q\right]\geq M_{2,d}^{-q/d}\E\left[U_{(k)}^{q/d}\right]=M_{2,d}^{-q/d}\frac{\Gamma(n+1)\Gamma(k+q/d)}{\Gamma(k)\Gamma(n+q/d+1)}.$$
For $x\geq 1$ and $s>0$, let $N_{1,s}=\inf_{x\geq 1}\frac{\Gamma(x+s)}{x^s\Gamma(x)}$ and $N_{2,s}=\sup_{x\geq 1}\frac{\Gamma(x+s)}{x^s\Gamma(x)}$. 
We then get 
$$M_{2,d}^{-q/d}\frac{N_{1,q/d}}{N_{2,q/d}}\frac{k^{q/d}}{(n+1)^{q/d}}\leq \E\left[Z_{(k)}(x)^q\right]\leq M_{1,d}^{-q/d} \frac{N_{2,q/d}}{N_{1,q/d}}\frac{k^{q/d}}{(n+1)^{q/d}}.$$
By Wendel's inequality~\citep{wendel}, for $s\in (0,1)$, we have $N_{1,s}\geq \inf_{x\geq 1}\left(\frac{x}{x+s}\right)^{1-s}\geq 1/2$ and $N_{2,s}\leq 1$. 
For $s\geq 1$, using the equality $\Gamma(z+1)=z\Gamma(z)$,  one can deduce that 
$$N_{1,s}\geq 1/2,\quad N_{2,s}\leq \Gamma(2+[s]).$$
Indeed if $s=s'+\ell$ with $\ell\in \N$ and $0\leq s'<1$, 
$$\frac{\Gamma(x+s)}{x^s \Gamma(x)}=\prod_{j=1}^{\ell}\left(1+\frac{j+s'-1}{x}\right) \frac{\Gamma(x+s')}{x^{s'}\Gamma(x)}$$
and 
$$1\leq \prod_{j=1}^{\ell}\left(1+\frac{j+s'-1}{x}\right)\leq \prod_{j=1}^{\ell}\left(1+j\right)=\Gamma(\ell+2).$$
This completes the proof of Lemma \ref{lem:aux3}.
\end{proof}

\section{Proofs of the results on the marginal sampling error (Section \ref{sec:sampling})}\label{app:sec:sampling_proof}

\subsection{Proof of Proposition \ref{prop:slln_clt}}\label{app:sec:slln_clt}

The proof relies on  the Lindeberg central limit theorem as given in Proposition 2.27 in \citet{van2000asymptotic} conditionally to $\mathcal F_n$.
We need to show the two properties:
\begin{align*}
& m^{-1} \sum_{i=1} ^ m    \mathbb E [ ( h( X_{i}^*, Y_{n,i} ^* )   - \mathbb E  [  h( X_{i}^*, Y_{n,i} ^* )  \given \mathcal F_n ] ) ^2 \given \mathcal F_n ]  \to V,   \\
& m^{-1} \sum_{i=1} ^ m    \mathbb E [  h( X_{i}^*, Y_{n,i} ^* )    ^2  1_ { \{ \lvert   h( X_{i}^*, Y_{n,i} ^* )  \rvert > \epsilon \sqrt n \}} \given \mathcal F_n ]  \to 0,
\end{align*}
where each convergence needs to happen with probability $1$. Equivalently, using that $( X_{i}^*, Y_{n,i} ^* )  _{i=1,\ldots, m}$ is identically distributed according to $\hat Q$, we need to show that
\begin{align*}
& \hat Q(h^2) - \hat Q(h)^2 \to V,\\
& \hat Q(h^2 1_ { \{ \lvert h \rvert >\epsilon \sqrt n \}}) \to 0 \text{ for each $\epsilon > 0$}.
\end{align*}
The first result is a direct consequence of the assumption. Fix $M>0$. For all $n$ sufficiently large, we have $M \le \epsilon \sqrt n$, implying that
\begin{equation*}
\hat Q  (  h ^2  1_ { \{ \lvert h \rvert >\epsilon \sqrt n \}} )  \leq
\hat Q  (  h ^2  1_ { \{ \lvert h \rvert > M\}} ),
\end{equation*}
which converges to $  Q (h^2 1_ { \{ \lvert h \rvert > M \}}  )  $ by assumption. Since $Q(h^2)$ is finite, one can choose $M$ large enough to make $  Q (h^2 1_ { \{ \lvert h \rvert > M \}}  )  $ arbitrarily small.

\subsection{Proof of Proposition \ref{expsampling}}\label{app:sec:expsampling_proof}

Set $Z_{n,i}^{*}=h\left(Y_{n,i}^{*},X_i^{*}\right)-\hat{Q}(h)$. We have $\left\vert Z_{n,i}^{*}\right\vert\leq 2 U_h$ and
$\mbox{Var }\left(Z_{n,i}^* \,\middle\vert\, \mathcal{F}_n\right)=\hat v_n$.
Note that $\hat{Q}^{*}(h) -\hat{Q}(h)=\frac{1}{m}\sum_{i=1}^m Z_{n,i}^{*}$.
Bernstein's concentration inequality leads to
$$\P\left(\left\vert \hat{Q}^{*}h -\hat{Q} h\right\vert>u \,\middle\vert\, \mathcal{F}_n\right)\leq \exp\left(-\frac{1/2 u^2m}{\hat{Q}(h^2)-(\hat{Q}(h))^2+2/3 U_h u}\right).$$
Then setting
$$\hat{u}_h(\delta)=\frac{4/3 U_h}{m}\log(2/\delta)+\sqrt{2\frac{\hat{Q}h^2-(\hat{Q}h)^2}{m}\log(2/\delta)},$$
we get
$$\P\left(\left\vert \hat{Q}^{*}h -\hat{Q} h\right\vert>\hat{u}_h(\delta) \,\middle\vert\, \mathcal{F}_n\right)\leq \delta$$
and then integrate both sides to obtain
$$\P\left(\left\vert \hat{Q}^{*}h -\hat{Q} h\right\vert>\hat{u}_h(\delta)\right)\leq \delta,$$
which leads to the stated bound.

\section{Proofs of the results on  the $k$-NN conditional sampling error (Section \ref{sec:nn})}\label{app:sec:nn_proof}
Here, we give proofs of the results on the $k$-NN conditional sampling error appearing in Section \ref{sec:nn}.

\subsection{Proof of Proposition \ref{lionel_prop}}\label{app:sec:lionel_prop_proof}

We start with a useful bias-variance decomposition. Introduce
\begin{align*}
&\epsilon_i(x) =  h(Y_i, x)  -  \int h(y,x)   P _{Y \given X} (dy \given   X_i ) , \\
&\Delta(x,X_i)  = \int h(y,x)    (P _{Y\given X} (dy \given   X_i )  -   P _{Y\given X} (dy \given  x ))  .
\end{align*}
 We have
\begin{align*}
& \int h(y,x)  ( \hat P _{Y\given X} (dy \given x )  -  P _{Y\given X} (dy \given x ) ) \\
&=  k^{-1} \sum_{i=1} ^ n  \epsilon_i(x)    1_{\{ B (x, \hat \tau_{n,k,x} ) \}} (X_i)  + k^{-1} \sum_{i=1} ^ n\Delta(x,X_i)   1_{\{ B (x, \hat \tau_{n,k,x} ) \}} (X_i) .
\end{align*}
Integrating with respect to $Q_X (dx)$, we obtain
\begin{align}
 \label{bias_var_decomp}  (\hat Q - Q ) (h)  & = B_h + S_h
 \end{align}
with 
\begin{align*}
& B_h =    k^{-1} \sum_{i=1} ^ n \int \Delta(x,X_i)   1_{\{ B (x, \hat \tau_{n,k,x} ) \}} (X_i)  Q_X(dx),\\
& S_h =  k^{-1} \sum_{i=1} ^ n  \int \epsilon_i(x)    1_{\{ B (x, \hat \tau_{n,k,x}) \}} (X_i) Q_X(dx).
\end{align*}
The term $B_h$ is a bias term and the term $S_h$ (which has mean $0$) is a variance term.

The proof is divided into $3$ steps. The first step takes care of bounding the bias term. The second step deals with the variance upper-bound. The third step is concerned with the variance lower bound.

\paragraph{The bias.}
   First \ref{cond:reg4} gives that for any $X_i \in S_X \cap B (x, \hat \tau_{n,k,x} ) $ and $x\in S_X$,
\begin{equation*}
  \left\vert \Delta(x,X_i)\right\vert\leq g_h(x) \hat \tau_{n,k,x}.
\end{equation*}
Consequently, using \ref{cond:reg3} and the fact that $   \sum_{i=1}^{n} 1_ { \| X_i - x \| \leq \hat \tau_{n,k,x}  }  = k$, we have
\begin{align*}
\left| k^{-1} \sum_{i=1} ^ n \int \Delta(x,X_i)   1_{\{ B (x, \hat \tau_{n,k,x} ) \}} (X_i)  Q_X(dx) \right|
\leq     \int \hat \tau_{n,k,x} g_h(x) Q_X(dx)
\end{align*}
and from Jensen inequality 
\begin{align*}
\left| k^{-1} \sum_{i=1} ^ n \int \Delta(x,X_i)   1_{\{ B (x, \hat \tau_{n,k,x} ) \}} (X_i)  Q_X(dx) \right|^2
\leq     \int \hat \tau_{n,k,x}^2 g_h^2(x) Q_X(dx)
\end{align*}
From Lemma \ref{lem:aux3}, we have $\sup_{x\in S_X}\E[\hat{\tau}_{n,k,x}^2] \leq C_{2,d} k^{2/d}(n+1)^{-2/d}$ 
and the control of the bias is given by
$$\E\left[ \left\vert B_h^2\right\vert \right] \leq C_{2,d} \int g_h^2(x)Q_X(dx)\cdot\frac{k^{2/d}}{(n+1) ^{2/d}}.$$

\paragraph{The variance upper-bound.} For the proof, we assume that $1 \leq k < n /2$.  We have for each $(x,x') \in S_X^{2}$ and $(i, j) \in \{1, \dots, n\}^{2}$,
\begin{align*}
  \mathbb E [ \epsilon_i(x) \epsilon_j(x')  \given  X_1, \ldots, X_n]
  &= \begin{cases}
0 & \text{ if $i \neq j$},\\
\E\left[\epsilon_i(x)\epsilon_i(x') \given X_i \right]
  \le \sqrt{\E\left[\epsilon_i(x)^2 \given X_i \right] \E\left[\epsilon_i(x')^2 \given X_i \right]} \leq \sigma_+^2 & \text{ if $i \neq j$}.
     \end{cases}
\end{align*}
For the second case, we used \ref{cond:reg5} and the Cauchy-Schwarz inequality.
As a consequence, the variance is given by
\begin{align*}
\mathbb E \left[ S_h^2 \right] &=\mathbb E \left[\left(  k^{-1}  \sum_{i=1}^{n}  \int 1_ { \| X_i - x \| \leq \hat \tau_{n,k,x}  } \epsilon_i(x) Q_X(dx) \right)^2  \right]\\
& =  k^{-2}  \sum_{i,j}^{n} \mathbb E  \left[   \left(   \int 1_ { \| X_i - x \| \leq \hat \tau_{n,k,x}  } \epsilon_i(x) Q_X(dx) \right)\left(   \int 1_ { \| X_j - x' \| \leq \hat \tau_{n,k,x'}  } \epsilon_j(x') Q_X(dx') \right) \right]\\
&=  k^{-2}  \sum_{i,j}^{n} \mathbb E  \left[   \left(   \int\int  1_ { \| X_i - x \| \leq \hat \tau_{n,k,x}  }   1_ { \| X_j - x' \| \leq \hat \tau_{n,k,x'} } \E\left[ \epsilon_i(x)   \epsilon_j(x') \given X_{1}, \dots, X_{n} \right] Q_X(dx)Q_X(dx')   \right) \right]\\
&\le k^{-2} \sigma_+^{2} \sum_{i=1}^{n} \E\left[ \int \int 1_{ \| X_i - x \| \leq \hat \tau_{n,k,x} } 1_{ \| X_i - x' \| \leq \hat \tau_{n,k,x'} } Q_X(dx) Q_X(dx') \right]\\
&= k^{-2} \sigma_+^{2} \sum_{i=1}^{n} \E\left[ \hat{Y}_{i}^2 \right]
\end{align*}
with  $\hat{Y}_i=\int \mathds{1}_{\vert X_i-x\vert\leq \hat{\tau}_{n,k,x}}Q_X(dx)$.
Let $Z_i(x)=\Vert x-X_i\Vert$ and $Z_{(k)}(x)$, the $k-$th order statistics of the sample $\left(Z_i(x)\right)_{1\leq i\leq n}$.
One can observe that 
$$Z_i(x)\leq Z_{(k)}(x)\Longleftrightarrow Z_i(x)<Z_{(k)}^{-i}(x),$$
where $Z_{(k)}^{-i}(x)$ is the $k-$th order statistics of the sample $\left(Z_j(x)\right)_{1\leq j\neq i\leq n}$. Note that the two sigma fields generated respectively by $\left\{Z_i(x):x\in S_X\right\}$ and $\left\{Z_{(k)}^{-i}(x):x\in S_X\right\}$ are independent. 
For one mapping $\rho:S_X\rightarrow diam(S_X)$, we first first bound
\begin{eqnarray*}
I&=&\E\left\vert \int \mathds{1}_{Z_1(x)\leq \rho(x)}Q(dx)\right\vert^2\\
&=& \int\int \P\left(Z_1(x)\leq \rho(x),Z_1(y)\leq \rho(y)\right)Q(dx)Q(dy)\\
&=& 2\int\int \mathds{1}{\{\rho(x)\leq \rho(y)\}}\P\left(Z_1(x)\leq \rho(x),Z_1(y)\leq \rho(y)\right)Q(dx)Q(dy)\\
&=& 2\int\int \mathds{1}{\{\rho(x)\leq \rho(y),\Vert x-y\Vert\leq 2\rho(y)\}}\P\left(Z_1(x)\leq \rho(x),Z_1(y)\leq \rho(y)\right)Q(dx)Q(dy)\\
&\leq& 2M_{2,d}\int\int \mathds{1}{\{\rho(x)\leq \rho(y),\Vert x-y\Vert\leq 2\rho(y)\}}\rho(y)^d Q(dx)Q(dy)\\
&\leq& 2M_{2,d}\int Q\left(B(y,2\rho(y))\right)\rho(y)^d Q(dy)\\
&\leq& 2^{d+1}M_{2,d}^2\int \rho(y)^{2d}Q(dy).
\end{eqnarray*}
The fourth inequality is due to the fact that when $\Vert x-y\Vert>2\rho(y)$, the two balls $B(x,\rho(x))$ and $B(y,\rho(y))$ do not intersect.
We then get using Lemma $3$,
$$E\hat{Y}_i^2\leq 2^{d+1}M_{2,d}^2\int \E \left\{Z_{(k)}^{-i}(y)\right\}^{2d} Q(dy)\leq \frac{2^{d+3}M_{2,d}^2}{M_{1,d}^2} \frac{k^2}{n^2}.$$ 
This leads to the variance upper-bound.

\paragraph{The variance lower-bound.}

Here we assume the $\sigma^2_{-}:=\inf_{x\in S_X}\var\left(h(Y)\vert X=x\right)>0$. 
From previous computations, we have 
$$\var\left(\hat{Q} h\right)\geq \frac{\sigma^2_{-}}{k^2}\sum_{i=1}^n \E\left[\hat{Y}_i^2\right].$$
We have $\E \hat{Y}_i^2\geq \E^2 \hat{Y}_i$ and we have to find a lower bound for $\E\hat{Y}_i$. 
But from the arguments used in the proof for the upper-bound, we have
$$\E\hat{Y}_i=\int_{S_X}\P\left(Z_i(x)\leq Z_{(k)}^{-i}(x)\right)Q(dx)\geq M_{1,d}\int_{S_X}\E\left\vert Z_{(k)}^{-i}(x)\right\vert^dQ(dx)\geq \frac{k}{2n M_{2,d}},$$
where the last inequality follows from Lemma \ref{lem:aux3}. This shows the lower-bound and the proof of Proposition \ref{lionel_prop} is now complete.

\subsection{Proof of Proposition \ref{Portier+}}\label{app:sec:portier+_proof}

Define
\begin{align*}
&\overline{\tau}_{n,k}  = \left(\frac{ 2  k }{ n M_{1,d}}  \right)^{1/ d}.
\end{align*}
The following Lemma~\citep[Lemma 4]{portier2021nearest} controls the size of the $k$-NN balls uniformly over all $x\in S_X$.

\begin{lem}[{\citet[Lemma 4]{portier2021nearest}}]\label{prop:tau}
Suppose that \ref{cond:reg0} \ref{cond:reg1} and \ref{cond:reg2} hold true. Then for all $n\geq 1$, $\delta \in (0,1)$ and $1\leq k\leq n$ such that $24 d \log(12n / \delta ) \leq k   \leq T ^d  n b_X c V_d /2 $
, it holds, with probability at least $1-\delta$:
\begin{align*}
 \sup _{x\in S_X} \hat  \tau_{n,k,x}   \leq \overline \tau_{n,k}.
\end{align*}
\end{lem}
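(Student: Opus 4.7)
The plan is to rewrite the radius bound as a uniform lower bound on empirical counts of balls of radius $\overline\tau_{n,k}$, and then combine a discretization of $S_X$ with a multiplicative Chernoff inequality to obtain the uniformity.

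First I would use the defining property $\hat\tau_{n,k,x}\leq \overline\tau_{n,k}\iff \sum_{i=1}^{n}1_{B(x,\overline\tau_{n,k})}(X_i)\geq k$ to translate the claim into
\begin{equation*}
\P\Bigl(\exists\,x\in S_X\colon \sum_{i=1}^{n}1_{B(x,\overline\tau_{n,k})}(X_i)<k\Bigr)\leq \delta.
\end{equation*}
The upper constraint $k\leq T^{d}nM_{1,d}/2$ guarantees $\overline\tau_{n,k}\leq T$, so Lemma~\ref{lem:aux1} applies and yields $P_X(B(x,\overline\tau_{n,k}))\geq M_{1,d}\overline\tau_{n,k}^{d}=2k/n$ for every $x\in S_X$. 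Hence the expected empirical count inside each such ball is at least $2k$, giving a constant-factor slack above the target value $k$.

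To turn this pointwise slack into a uniform statement, I would take a $\rho$-covering $\{x_j\}_{j=1}^{N}$ of $S_X$ at scale $\rho=\overline\tau_{n,k}/(4d)$. Standard covering estimates for bounded subsets of $\mathbb R^d$ give $N\leq C_{S_X}\rho^{-d}=C_{S_X}(4d)^{d}nM_{1,d}/(2k)$. For any $x\in S_X$ one can pick $x_j$ with $\|x-x_j\|\leq \rho$, in which case $B(x_j,\overline\tau_{n,k}-\rho)\subseteq B(x,\overline\tau_{n,k})$, so it suffices to show that every $N_j:=\sum_{i=1}^{n}1_{B(x_j,\overline\tau_{n,k}-\rho)}(X_i)$ exceeds $k$. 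Lemma~\ref{lem:aux1} combined with the Bernoulli inequality $(1-1/(4d))^{d}\geq 3/4$ then gives $\mu_j:=\E[N_j]\geq (3/4)\cdot 2k=3k/2$.

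A multiplicative Chernoff bound then yields $\P(N_j\leq k)=\P(N_j\leq (1-1/3)\mu_j)\leq \exp(-\mu_j/18)\leq \exp(-k/12)$, and the union bound gives a total failure probability at most $C_{S_X}(4d)^{d}(nM_{1,d}/(2k))\exp(-k/12)$. Taking logarithms, this is at most $\delta$ as soon as $k$ exceeds a universal multiple of $d\log d+\log n+\log(1/\delta)$; the quantitative hypothesis $k\geq 24d\log(12n/\delta)$ leaves ample room to absorb all numerical and $d$-dependent constants.

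The main technical point will be the choice of the net scale $\rho$: it must be small enough that $(\overline\tau_{n,k}-\rho)^{d}$ remains a fixed fraction of $\overline\tau_{n,k}^{d}$, so that the expected count at each net center stays strictly above $k$ by a constant factor, yet large enough that $N$ grows only polynomially in $n$ so that the Chernoff exponent $k/12$ dominates the logarithmic $\log N$ term. The choice $\rho=\overline\tau_{n,k}/(4d)$ is tailored precisely to this trade-off.
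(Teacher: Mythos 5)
First, note that the paper does not actually prove this statement: Lemma~\ref{prop:tau} is imported verbatim, with its constants, from \citet[Lemma 4]{portier2021nearest}, and the present manuscript only cites it. So your proposal is being compared to the proof in that external reference rather than to anything in this paper.

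Your route is sound and essentially standard: the equivalence $\hat\tau_{n,k,x}\le\overline\tau_{n,k}\iff\sum_i 1_{B(x,\overline\tau_{n,k})}(X_i)\ge k$, the use of the upper constraint on $k$ to ensure $\overline\tau_{n,k}\le T$ so that Lemma~\ref{lem:aux1} gives $P_X(B(x,\overline\tau_{n,k}))\ge 2k/n$, the inclusion $B(x_j,\overline\tau_{n,k}-\rho)\subseteq B(x,\overline\tau_{n,k})$, the Bernoulli estimate $(1-1/(4d))^d\ge 3/4$, and the multiplicative Chernoff step $\P(N_j\le k)\le\exp(-\mu_j/18)\le\exp(-k/12)$ are all correct. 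However, it is a genuinely different argument from the cited one. The proof in \citet{portier2021nearest} does not discretize $S_X$; it applies a uniform relative-deviation (VC-type) concentration inequality over the class of all Euclidean balls, whose VC dimension is $d+1$, which is exactly where the dimension-only constants $24d$ and $12n/\delta$ come from. Your covering argument buys elementarity (only a union bound and a scalar Chernoff inequality), but at the price of a union over $N\le C_{S_X}(4d)^d\,nM_{1,d}/(2k)$ net points, and $\log N$ contains the additive constant $\log C_{S_X}+d\log(4d)$, which depends on the geometry of $S_X$ (its diameter or volume), not only on $d$. Consequently the condition you actually need, $k/12\ge\log N+\log(1/\delta)$, is not implied by $k\ge 24d\log(12n/\delta)$ uniformly over all $n\ge 1$ and all admissible $S_X$: for a fixed problem it holds once $n$ is large enough, but the lemma as stated claims the explicit constants for every $n\ge 1$. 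So your argument proves the same qualitative statement with constants that additionally depend on $S_X$; to recover the stated constants one needs the VC/ball-class concentration route (or a more careful accounting showing the $S_X$-dependent term is dominated, which your sketch asserts but does not verify).
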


We now deal with the variance term of our estimator.
The variance term of the nearest-neighbors estimator is given by $V_n=k^{-1}\sum_{i=1}^n \hat{Y}_i$, where
$$\hat{Y}_i=\int \epsilon_i(x)\mathds{1}_{B(x,\hat{\tau}_{n,k,x})}(X_i)Q_X(dx).$$
and
Set $\hat{\tau}_k=\sup_{x\in S_X}\hat{\tau}_{n,k,x}$. From our assumptions and Jensen's inequality, we have
$$\vert \hat{Y}_i\vert \leq 2 U_h M_{2,d}\hat{\tau}_k,\quad \mbox{Var }\left(\hat{Y}_i\vert X_1,\ldots,X_n\right)\leq \sigma_+^2 M_{2,d}^2\hat{\tau_k}^{2d}.$$

Applying Bernstein's inequality for i.i.d. random variables (we recall that the $Y_i's$ are independent conditionally on the $X_i's$), we get for $t>0$,
$$\P\left(\vert V_n\vert> t\vert X_1,\ldots,X_n\right)\leq 2\exp\left(-\frac{\frac{1}{2}k^2t^2}{n\sigma_+^2M_{2,d}^2\hat{\tau}_k^{2d}+\frac{2}{3}U_h M_{2,d}\hat{\tau}_k^d k}\right).$$

This leads to
$$\P\left(\vert V_n\vert>t, \hat{\tau}_k\leq \overline{\tau}_{n,k}\vert X_1,\ldots,X_n\right)\leq 2\exp\left(-\frac{\frac{1}{2}t^2n}{L_1\sigma_+^2+L_2t}\right).$$
Note that this upper-bound is not random  and we get
\begin{equation}\label{expo}
\P\left(\vert V_n\vert>t, \hat{\tau}_k\leq \overline{\tau}_{n,k}\right)\leq 2\exp\left(-\frac{\frac{1}{2}t^2n}{L_1\sigma_+^2+L_2t}\right).
\end{equation}
Setting
$$\widetilde{t}_n(\delta,h)=\frac{L_2}{n}\log(2/\delta)+\sqrt{\frac{L_2^2}{n^2}\log^2(2/\delta)+\frac{2L_1\sigma_+^2}{n}\log(2/\delta)},$$
which is smaller than 
$$t_n(\delta,h) = L_0 \left(\frac{k}{n}\right)^{1/d}+\frac{2L_2}{n}\log(2/\delta)+\sqrt{\frac{2L_1\sigma_+^2}{n}\log(2/\delta)}$$ 
we then get
\begin{eqnarray*}
\P\left(\vert V_n\vert >t_n(\delta,h)\right)&\leq& \P\left(\vert V_n\vert>\widetilde{t}_n(\delta,h),\hat{\tau}_k\leq \overline{\tau}_{n,k}\right)+P\left(\hat{\tau}_k>\overline{\tau}_{n,k}\right)\\
&\leq& 2\delta,
\end{eqnarray*}
where the last inequality is a consequence of (\ref{expo}) and Lemma \ref{prop:tau}. Moreover, from the proof of Theorem \ref{lionel_prop} and Lemma \ref{prop:tau}, the bias part can be dominated by $L_0 (k/n)^{1/d}$ with probability at least $1-\delta$.
This concludes the proof.

\section{Proof of Theorem \ref{global}}\label{app:sec:global}
First, note that the boundedness of $\sup_{x\in S_X}\E\left[h^2(Y,x)\vert X\right]$ entails {\bf H2}.
Setting $A_{m,n}=\hat{Q}^{*}(h)-\hat{Q}(h)$ and $B_n=\hat{Q}(h)-Q(h)$, Proposition \ref{lionel_prop} guarantees that 
$$\E B_n^2\leq  C_1\left\{\frac{1}{n^{2/d}}+\frac{1}{n}\right\},$$
for some $C_1>0$ only depending on the distribution of $(X,Y)$, $X^{*}$ and on $h$. 
It remains to show that the same bound can obtained for $A_{m,n}$. Since, 
$$\E\left[A_{m,n}^2\vert \mathcal{F}_n\right]\leq \frac{\hat{Q}(h^2)-\hat{Q}(h)^2}{m}\leq \frac{\hat{Q}(h^2)}{m}.$$
It only remains to show that $\E \hat{Q}(h^2)$ is bounded with respect to $n$. The approach is similar to the control of the variance term for $k=1$ studied in the proof of Proposition \ref{lionel_prop}. If $L$ is an upper-bound for $\sup_{x\in S_X}\E\left[h^2(Y,x)\vert X\right]$, we have  
\begin{eqnarray*}
\E\hat{Q}(h^2)&=& \sum_{i=1}^n \int_{S_X} Q_X(dx)\E\left[h^2(Y_1,x)\mathds{1}_{\Vert X_1-x\Vert\leq \min_{2\leq j\leq n}\Vert X_j-x\Vert}\right]\\
&\leq& n L\int_{S_X}\P\left(\Vert X_1-x\Vert\leq \min_{2\leq j\leq n}\Vert X_j-x\Vert\right)Q_X(dx)\\
&\leq& n L M_{2,d}\int_{S_X}\E \min_{2\leq j\leq n}\Vert X_j-x\Vert^d  Q_X(dx)\\
&\leq& \frac{2LM_{2,d}}{M_{1,d}}.
\end{eqnarray*}
The last upper bound is obtained from Lemma \ref{lem:aux3} using the fact that $\min_{2\leq j\leq n}\Vert X_j-x\Vert$ 
has the same probability distribution as $\hat{\tau}_{n-1,1,x}$. We deduce the result taking $C$ as the maximum between $C_1$
and $2LM_{2,d}/M_{1,d}$. \QEDA

\section{A corollary bounding the sampling error for $k$-NN sampling}\label{app:sec:cor_kNN_error_bound}

\begin{cor}\label{mainpoint}
Suppose that \ref{cond:reg0} \ref{cond:reg1} \ref{cond:reg2} \ref{cond:reg3} are fulfilled and that  \ref{cond:reg4} and \ref{cond:reg5} are fulfilled for both $h$ and $h^2$ and $h$ is bounded. Let $k=k_n$ satisfying the condition of Lemma \ref{prop:tau} and $\delta\in (0,1/7)$ and set $s^2(h)=Q h^2-(Q h)^2$ and $\hat{s}^2(h)=\hat{Q}h^2-(\hat{Q} h)^2$.
Let $\delta\in (0,1/7)$. Then with probability greater than $1-7\delta$,
\begin{equation}\label{fb2}
\hat{Q}^{*}h-\hat{Q}h\leq \frac{4/3 U_h}{m}\log(2/\delta)+\sqrt{2\frac{v^2_n(\delta,h)}{m}\log(2/\delta)}+\sqrt{2\frac{s^2(h)}{m}\log(2/\delta)},
\end{equation}
where
$$v^2_n(\delta,h)=t_n(\delta,h^2)+t_n(\delta,h)^2+2t_n(\delta,h) Q\vert h\vert$$
 and $t_n(\delta,h)$ is defined in the statement of Proposition \ref{Portier+}.
\end{cor}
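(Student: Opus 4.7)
\subsection*{Proof plan for Corollary \ref{mainpoint}}

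The plan is to start from the bound in Proposition \ref{expsampling} and then replace the empirical conditional variance $\hat v_n = \hat Q h^2 - (\hat Q h)^2$ by the true variance $s^2(h) = Qh^2 - (Qh)^2$ up to quantifiable error terms, using Proposition \ref{Portier+} applied to both $h$ and $h^2$.

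First, I would apply Proposition \ref{expsampling}: with probability at least $1 - \delta$ (conditionally on the source sample, hence unconditionally),
\begin{equation*}
  \bigl| \hat Q^* h - \hat Q h \bigr| \;\le\; \frac{4/3\, U_h}{m}\log(2/\delta) + \sqrt{\frac{2\,\hat v_n}{m}\log(2/\delta)}.
\end{equation*}
(Note: the proof of Proposition~\ref{expsampling} actually produces the constant $4/3$; the statement simplified it.) The task then reduces to controlling $\hat v_n$ by $s^2(h) + v_n^2(\delta,h)$ with high probability.

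Next, I would apply Proposition \ref{Portier+} twice, once to $h$ and once to $h^2$ (both satisfy \ref{cond:reg4}, \ref{cond:reg5} by assumption, and $h^2$ is bounded since $h$ is bounded). This gives, with probability at least $1 - 3\delta$ each, $|\hat Q h - Q h| \le t_n(\delta,h)$ and $|\hat Q h^2 - Q h^2| \le t_n(\delta,h^2)$. I then expand
\begin{equation*}
  \hat v_n - s^2(h) \;=\; (\hat Q h^2 - Q h^2) \;-\; (\hat Q h - Q h)(\hat Q h + Q h),
\end{equation*}
and rewrite $\hat Q h + Q h = 2Qh + (\hat Q h - Q h)$, yielding
\begin{equation*}
  \bigl|\hat v_n - s^2(h)\bigr| \;\le\; |\hat Q h^2 - Q h^2| + 2|Qh|\cdot|\hat Q h - Q h| + (\hat Q h - Q h)^2.
\end{equation*}
Bounding $|Qh| \le Q|h|$ and substituting the high-probability bounds from Proposition \ref{Portier+} shows that $\hat v_n \le s^2(h) + v_n^2(\delta,h)$ on the intersection of the two good events.

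Finally, I would use the elementary inequality $\sqrt{a+b} \le \sqrt a + \sqrt b$ (for $a, b \ge 0$) to split the square root in the bound from Proposition \ref{expsampling}, producing the two separate terms $\sqrt{2 s^2(h)/m \cdot \log(2/\delta)}$ and $\sqrt{2 v_n^2(\delta,h)/m \cdot \log(2/\delta)}$ appearing in the statement. A union bound over the three good events (Proposition \ref{expsampling} for $h$, Proposition \ref{Portier+} for $h$, Proposition \ref{Portier+} for $h^2$) charges at most $\delta + 3\delta + 3\delta = 7\delta$, which delivers the probability $1 - 7\delta$ stated in the corollary. The main (and only non-routine) step is the algebraic decomposition of $\hat v_n - s^2(h)$; everything else is bookkeeping and the use of $\sqrt{a+b}\le\sqrt a+\sqrt b$.
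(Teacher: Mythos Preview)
Your proposal is correct and follows exactly the paper's own proof: you invoke Proposition~\ref{expsampling}, decompose $\hat v_n - s^2(h)$ via $(\hat Q h^2 - Qh^2) - (\hat Q h - Qh)^2 - 2Qh(\hat Q h - Qh)$ (which is the same identity the paper writes), bound each piece by Proposition~\ref{Portier+} applied to $h$ and to $h^2$, and collect via a $\delta + 3\delta + 3\delta$ union bound. Your observation about the $4/3$ constant and the explicit use of $\sqrt{a+b}\le\sqrt a+\sqrt b$ are details the paper leaves implicit but uses in the same way.
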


\begin{proof}
We first use the result of Proposition \ref{expsampling}. In particular, setting
$$\hat{u}_h(\delta)=\frac{4/3 U_h}{m}\log(2/\delta)+\sqrt{2\frac{\hat{Q}h^2-(\hat{Q}h)^2}{m}\log(2/\delta)},$$
we have
$$\P\left(\left\vert \hat{Q}^{*}h -\hat{Q} h\right\vert>\hat{u}_h(\delta)\right)\leq \delta,$$
We then use the decomposition 
$$\hat{Q}h^2-(\hat{Q}h)^2=s^2(h)+\hat{Q}h^2-Q h^2-\left(\hat{Q}h-Q h\right)^2-2 Qh\left(\hat{Q}h-Qh\right).$$
From Proposition \ref{Portier+}, we know that
$$\left\vert \hat{Q}h^2-Q h^2\right\vert\leq t_n(\delta,h^2)$$
with probability greater than $1-3\delta$ and
$$\left\vert \hat{Q}h-Q h\right\vert \leq t_n(\delta,h)$$
with probability greater than $1-3\delta$. Collecting these three bounds, we easily obtain the conclusion of the second point of Corollary \ref{mainpoint}.
\end{proof}

\section{Proofs of the results on the empirical risk minimization (Section \ref{submit})}\label{app:sec:submit_proof}
Here, we present proofs of the result on the application to empirical risk minimization.

\subsection{Proof of Theorem \ref{notsomuch1}.}\label{app:sec:notsomuch1_proof}
From  \ref{cond:reg_wcv1}, $\sup_{\theta\in\Theta}\left\vert m_{\theta}\left(Y_1^{*},X^{*}_1\right)\right\vert$ is integrable and $\theta\mapsto \mathcal{R}^{*}(\theta)$ is continuous over the compact set $\Theta$. As a consequence, weak consistency will follow from Theorem $5.7$ in \cite{van2000asymptotic} if we show that 
\begin{equation}\label{but}
\sup_{\theta\in\Theta}\left\vert \mathcal{R}^{*}_{m,n}(\theta)-\mathcal{R}^{*}(\theta)\right\vert=o_{\P}(1).
\end{equation}
Pointwise convergence holds true from assumptions~\ref{cond:reg_wcv1}, \ref{cond:reg_wcv2} as each mapping $m_{\theta}$ satisfies Assumptions~\ref{cond:reg4}, \ref{cond:reg5}. One can then apply 
Theorem \ref{global} and the Markov inequality to get for any $\theta\in\Theta$,
$$\mathcal{R}^{*}_{m,n}(\theta)-\mathcal{R}^{*}(\theta)=o_{\P}(1),\quad m,n\rightarrow \infty.$$
 We now prove uniform convergence. Let $\delta>0$. One can cover 
the compact set $\Theta$ with finitely many balls $B(\theta_i,\delta)$, $1\leq i\leq k$. For $\theta\in \Theta\cap B(\theta_i,\delta)$, 
we have
\[
\left\vert \mathcal{R}^{*}_{m,n}(\theta)-\mathcal{R}^{*}_{m,n}(\theta_i)\right\vert\leq \eta(\delta)\frac{1}{m}\sum_{i=1}^m h\left({Y}^{*}_{n,i}\right).
\]
Moreover, from Assumptions \ref{cond:reg_wcv2} and Theorem \ref{global} with the Markov inequality, we know that
\[
\frac{1}{m}\sum_{i=1}^m h\left({Y}^{*}_{n,i}\right) \to \E h\left(Y_1^{*}\right), \quad m,n\rightarrow \infty,
\]
in probability. We also have 
$$\left\vert \mathcal{R}^{*}(\theta)-\mathcal{R}^{*}(\theta_i)\right\vert\leq \eta(\delta)\E h\left(Y_1^{*}\right).$$
Finally, one can use the bound 
\begin{eqnarray*}
\sup_{\theta\in \theta}\left\vert \mathcal{R}^{*}_{m,n}(\theta)-\mathcal{R}^{*}(\theta)\right\vert&\leq& \max_{1\leq i\leq k}\left\vert \mathcal{R}^{*}_{m,n}(\theta_i)-\mathcal{R}^{*}(\theta_i)\right\vert + \eta(\delta)\left\{\frac{1}{m}\sum_{i=1}^m h\left({Y}^{*}_{n,i}\right)+\E h\left(Y_1^{*}\right)\right\}\\
&=& 2\eta(\delta)\E h\left(Y_1^*\right)+o_{\P}(1).
\end{eqnarray*}
Given that $\delta$ is arbitrary, the above implies (\ref{but}) and the weak consistency of $\hat{\theta}^{*}$ follows. The second assertion about the excess risk then follows easily using that
$$\mathcal{R}^{*}(\hat{\theta}^{*})-\mathcal{R}^{*}({\theta}^{*})\leq 2 \sup_{\theta\in\Theta}\left\vert \mathcal{R}^{*}_{m,n}(\theta)-\mathcal{R}^{*}(\theta)\right\vert.$$
 \QEDA
\subsection{Proof of Theorem \ref{notsomuch}.}\label{app:sec:notsomuch_proof}
Let $Z _ i^* = \Gamma^{-1/2} X_i^*$ and define
$$\Sigma_m=\frac{1}{m}\sum_{i=1}^m Z_i^{*}{Z_i^{*}}^T,\quad N_m=\frac{1}{m}\sum_{i=1}^m Z_i^{*}\left[{Y}^{*}_{n,i}-{X_i^{*}}^T\theta^{*}\right].$$
The proof first requires some analysis of the smallest eigenvalues of $\Sigma_m$. From the matrix Chernoff inequality given in \citet{tropp2012user}, see Corollary $5.2$ and Remark $5.3$, we have 
$$ P ( \lambda_ {\min} ( \Sigma_m ) \leq 1- \eta) \leq d \exp(-\eta ^2 m/ 2 B )   $$ 
 where $B= \lambda_{\min} (\Gamma)^{-1} d \max_{x\in S_X} | x|_\infty^2$ is defined so as to satisfy
$ \| Z\|_2^2 \leq \lambda_{\min} (\Gamma)^{-1} \| X\|_2^2 \leq B$, with probability $1$. Inverting the previous we obtain that with probability  at least $ 1-\delta$,
$$ \lambda_ {\min} ( \Sigma_m ) > 1 - \sqrt{ (2B / m\log(d \delta^{-1})) } $$
and therefore as soon as $ (8 B / n)\log(d \delta^{-1}) \leq 1 $, we have that $ \lambda_ {\min} ( \Sigma_m ) \geq  1/2$. 
On the previous event, we have that $$\Gamma^{1/2} 
(\hat \theta^* - \theta^*) = \Sigma_m^{-1}N_m$$ 
It follows that
\begin{align*}
     \mathcal{R}^{*}(\hat{\theta}^{*})-\mathcal{R}^{*}(\theta^*)=  \|\Gamma^{1/2} 
(\hat \theta^* - \theta^*) 
\|_2^2 
= \|\Sigma_m^{-1} 
N_m 
\|_2^2 
\leq 2  \|N_m\|_2^2.
\end{align*}
We conclude using Theorem \ref{global} with $h(y,x)=\Gamma^{-1/2}x\left(y-x^T\theta^{*}\right)$.
Note that $\E\left[h\left(Y^{*},X^{*}\right)\right]=0$ by definition of $\theta^{*}$. \QEDA

\section{Illustration for Experiments E1--E3}\label{app:moreplots}

Illustrations of data used in Experiments E1--E3 can be found in Figure~\ref{app:fig:data1}.

\begin{figure*}[htb]
  \centering
  \begin{subfigure}[b]{0.45\textwidth}
    \centering
    \includegraphics[width=\textwidth]{./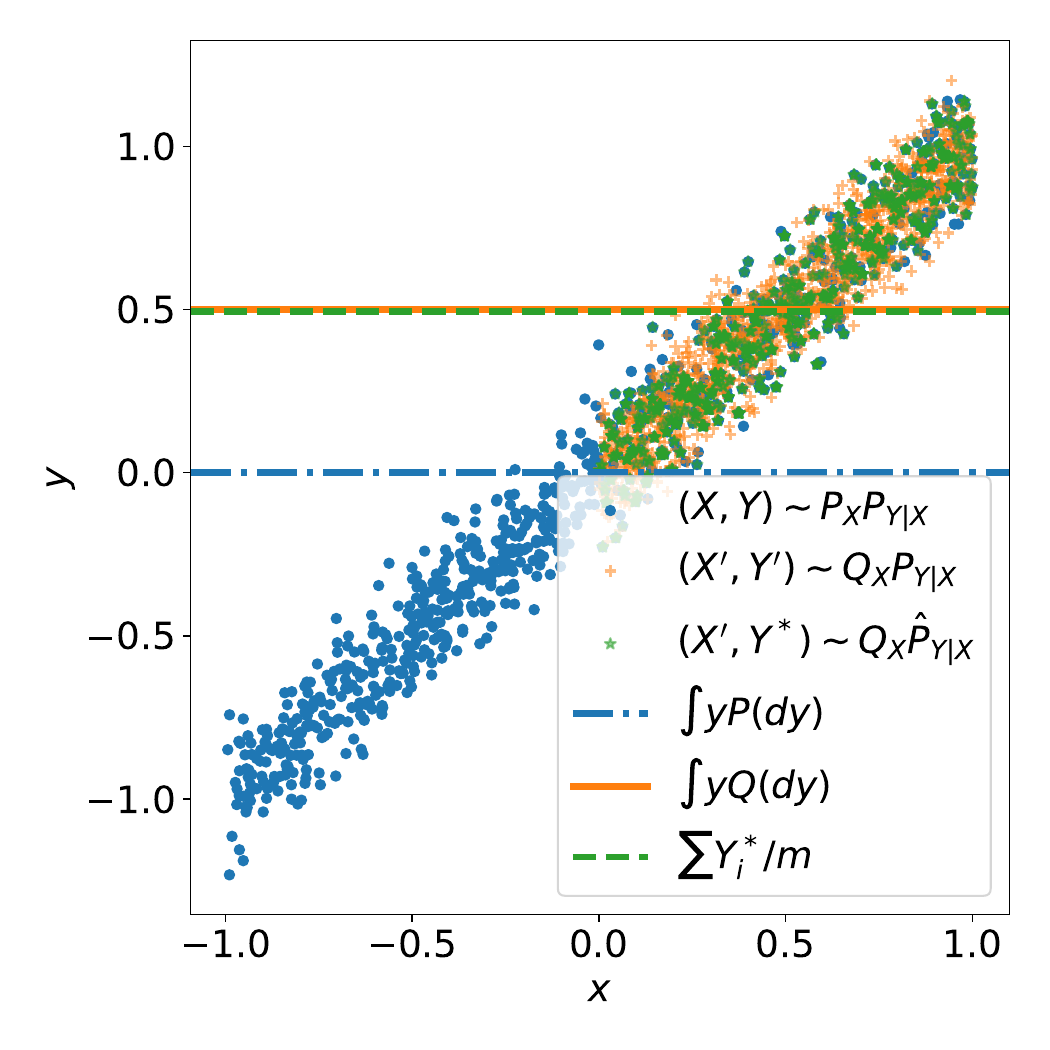}
    \caption{Data for Experiment  E1}
    \label{app:fig:data0}
  \end{subfigure}
  \hfill
  \begin{subfigure}[b]{0.45\textwidth}
    \centering
    \includegraphics[width=\textwidth]{./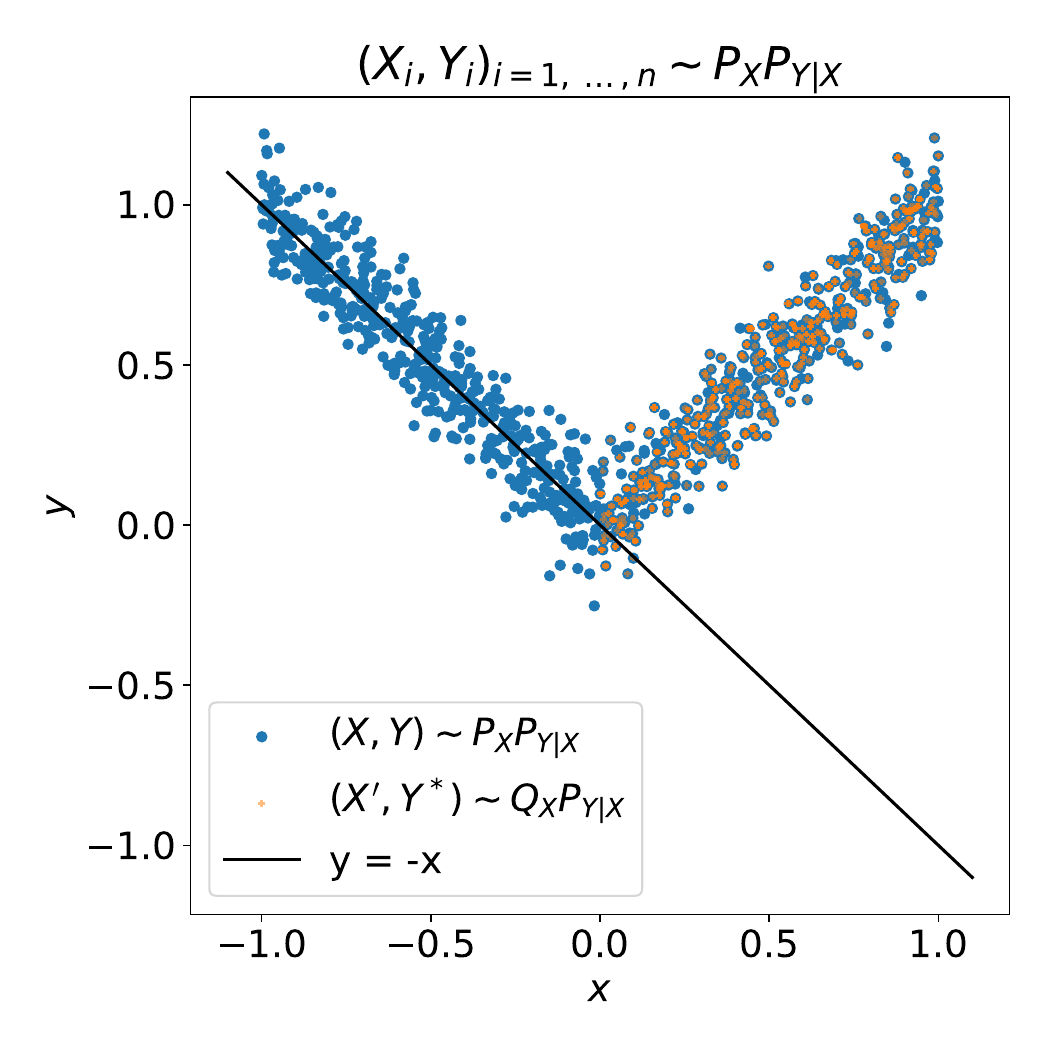}
    \caption{Data for Experiment E2 and E3}
    \label{app:fig:data1}
  \end{subfigure}
  \caption{Visualization of data for Experiments E1--E3}
  \label{app:fig:errors0}
\end{figure*}

\section{Details of the benchmark data experiments}\label{app:sec:experiment_details}
We use the following datasets.
\begin{itemize}
  \item \texttt{california}:  Regression dataset called ``California Housing'' available from \url{https://www.dcc.fc.up.pt/~ltorgo/Regression/cal_housing.html}.
  
  \item \texttt{diabetes}: Regression dataset available from \url{https://archive.ics.uci.edu/ml/index.php}~\citep{dua2017uci}.

  \item \texttt{breast cancer}: Classification dataset available from \url{https://archive.ics.uci.edu/ml/index.php}~\citep{dua2017uci}.
  
  \item \texttt{twonorm}: Classification dataset available from \url{https://www.cs.utoronto.ca/~delve/data/datasets.html}.
\end{itemize}

\paragraph{Data splitting and sampling bias simulation}
We split the original to the training and test set and simulate covariate shift by rejection sampling from the test set with rejection probability determined according to the value of a covariate.
For \texttt{california}, \texttt{twonorm}, \texttt{breast cancer}, we follow the procedure of \citet{sugiyama2007direct}:
we include each target data point $X_{i}$ to the target set with probability $\min(1, 4X_{i, c}^2)$ or reject it otherwise, where $X_{i, c}$ is the $c$-th attribute of $X_{i}$.
For \texttt{diabetes}, we used a different biasing procedure for this data set because the technique of \citet{sugiyama2007direct} rejects too many data points to perform our experiment for this dataset.
We instead use the procedure of an example from the ADAPT package~\citet{demathelin2021adapt}\footnote{\url{https://adapt-python.github.io/adapt/examples/Sample_bias_example.html}} for \texttt{diabetes}:
for each data point $X_{i}$, we accept it with probability proportional to $\exp(-20\times\abs{X_{i, \text{age}} + 0.06})$, where $X_{i, \texttt{age}}$ is the \texttt{age} attribute of $X_{i}$ and reject (i.e., exclude) otherwise.

\paragraph{Pre-processing}
We use the hot-encoding for all categorical features.
We center and normalize all the data using the mean and the dimension-wise standard deviation of the source set.
We do the same centering and normalization for the output variables for regression datasets.

After training and prediction, we post-process the output using the inverse operation. Table~\ref{tab:data_info} shows basic information about the datasets after the bias-sampling and pre-processing.

\begin{table}[hbt]
  \caption{Basic information of the datasets}
  \label{tab:data_info}
  \centering
  \begin{tabular}{ccccc}
    \toprule
                       & california & twonorm & diabetes & breast cancer\\
    \midrule
Input dimension    $d$ & 8          & 20      & 10       & 9  \\
source sample size $n$ & 1000       & 100     & 150      & 200 \\
Target sample size $m$ & 1000       & 500     & 150      & 100 \\
    \bottomrule
  \end{tabular}
\end{table}


\end{document}